\theoremstyle{plain}
\newtheorem{theorem}{Theorem}
\theoremstyle{definition}
\theoremstyle{remark}
\def\tsc#1{\csdef{#1}{\textsc{\lowercase{#1}}\xspace}}
\begin{document}
\let\WriteBookmarks\relax
\def\floatpagepagefraction{1}
\def\textpagefraction{.001}

\shorttitle{Effective rank and forward compatibility in class incremental learning}

\shortauthors{J.~Kim, W.~Lee, M.~Eo~and~W.~Rhee}  

\title [mode = title]{Improving Forward Compatibility in Class Incremental Learning by Increasing Representation Rank and Feature Richness}



%

\author[1]{Jaeill~Kim}[orcid=0000-0002-1088-2882]
\cormark[1]
\ead{jaeill.kim@linecorp.com}
\author[2]{Wonseok~Lee}[orcid=0009-0004-8995-339X]
\ead{dnjstjr1017@snu.ac.kr}
\author[3]{Moonjung~Eo}[orcid=0000-0002-0114-8010]
\cormark[1]
\ead{moonj@lgresearch.ai}
\author[2,4,5]{Wonjong~Rhee}[orcid=0000-0002-2590-8774]
\cormark[2]
\ead{wrhee@snu.ac.kr}






\affiliation[1]{organization={LINE Investment Technologies},
            addressline={117 Bundangnaegok-ro, Bundang-gu}, 
            city={Seongnam-si},
            postcode={13529}, 
            country={South Korea}}
\affiliation[2]{organization={Interdisciplinary Program in Artificial Intelligence, Seoul National University},
            addressline={1 Gwanak-ro, Gwanak-gu}, 
            city={Seoul},
            postcode={08826}, 
            country={South Korea}}
\affiliation[3]{organization={LG AI Research},
            addressline={30 Magokjungang-10ro, Gangseo-gu}, 
            city={Seoul},
            postcode={07796}, 
            country={South Korea}}
\affiliation[4]{organization={Department of Intelligence and Information, Seoul National University},
            addressline={1 Gwanak-ro, Gwanak-gu}, 
            city={Seoul},
            postcode={08826}, 
            country={South Korea}}
\affiliation[5]{organization={Research Institute for Convergence Science, Seoul National University},
            addressline={1 Gwanak-ro, Gwanak-gu}, 
            city={Seoul},
            postcode={08826}, 
            country={South Korea}}






\cortext[1]{Work performed while at Seoul National University.}
\cortext[2]{Corresponding author}



\begin{abstract}
Class Incremental Learning (CIL) constitutes a pivotal subfield within continual learning, aimed at enabling models to progressively learn new classification tasks while retaining knowledge obtained from prior tasks. Although previous studies have predominantly focused on backward compatible approaches to mitigate catastrophic forgetting, recent investigations have introduced forward compatible methods to enhance performance on novel tasks and complement existing backward compatible methods.
In this study, we introduce effective-Rank based Feature Richness enhancement (RFR) method that is designed for improving forward compatibility. Specifically, this method increases the effective rank of representations during the base session, thereby facilitating the incorporation of more informative features pertinent to unseen novel tasks. Consequently, RFR achieves dual objectives in backward and forward compatibility: minimizing feature extractor modifications and enhancing novel task performance, respectively.
To validate the efficacy of our approach, we establish a theoretical connection between effective rank and the Shannon entropy of representations. Subsequently, we conduct comprehensive experiments by integrating RFR into eleven well-known CIL methods. Our results demonstrate the effectiveness of our approach in enhancing novel-task performance while mitigating catastrophic forgetting. Furthermore, our method notably improves the average incremental accuracy across all eleven cases examined.
\end{abstract}



\begin{keywords}
\sep Class incremental learning \sep Effective rank  \sep Feature richness
\end{keywords}

\maketitle

\section{Introduction}
\label{sec:introduction}

Continual learning, the process of continually acquiring and integrating new knowledge, has emerged as a significant challenge in the field of machine learning. In contrast to conventional learning paradigms that focus on static datasets and fixed tasks, continual learning encompasses the dynamic and ever-changing nature of real-world applications. To address this challenge, class incremental learning (CIL)~\citep{rebuffi2017icarl, hou2019learning, douillard2020podnet, shi2022mimicking}, a subfield of continual learning, primarily focuses on developing techniques that enable adaptive learning to accommodate new classes, while minimizing the detrimental impact of knowledge degradation on previously learned classes.

In CIL, training takes place through multiple incremental sessions, each focusing on a distinct subset of classes that do not overlap with the subsets utilized in other sessions. The \textit{base session} involves training a model from scratch to perform a \textit{base task}, typically involving a large number of classes. Subsequently, in each \textit{novel session}, the model is expected to incrementally learn a new \textit{novel task}, typically involving a smaller number of classes, while retaining its performance on previously learned tasks. The model's performance is evaluated using the classes of all the previous and current tasks, without access to the task identification information.

The primary objective in CIL is to mitigate the detrimental impact of catastrophic forgetting, which refers to a significant decline in the model's performance on previous task classes after learning new classes from a subsequent task.
To address this challenge, most of the previous works have focused on addressing the forgetting problem in the updated model (i.e., backward compatible approach)~\citep{zhou2022forward,shi2022mimicking}.
This is commonly achieved by enforcing similarity between the updated model and its predecessor.
For instance, weight regularization methods enforce similarity in weights~\citep{kirkpatrick2017overcoming,zenke2017continual,aljundi2018memory,chaudhry2018riemannian}, while knowledge distillation methods enforce similarity in representations~\citep{rebuffi2017icarl,li2017learning,castro2018end,hou2019learning,wu2019large}.

\begin{figure*}[t!]
    \centering
    \begin{subfigure}{0.30\linewidth}
        \includegraphics[width=\linewidth]{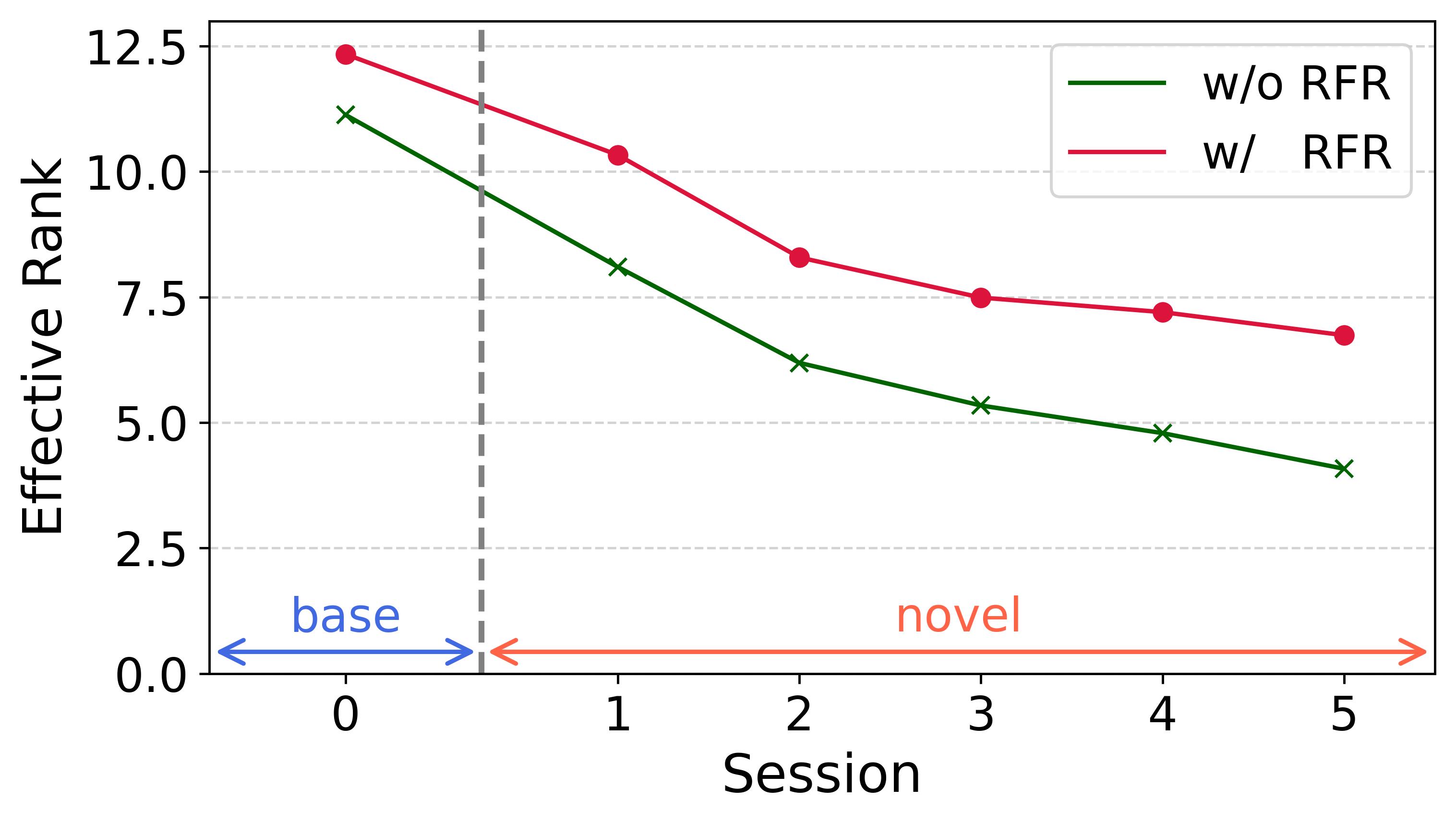}
        \caption{Representation rank}
    \end{subfigure}
    \begin{subfigure}{0.30\linewidth}
        \includegraphics[width=\linewidth]{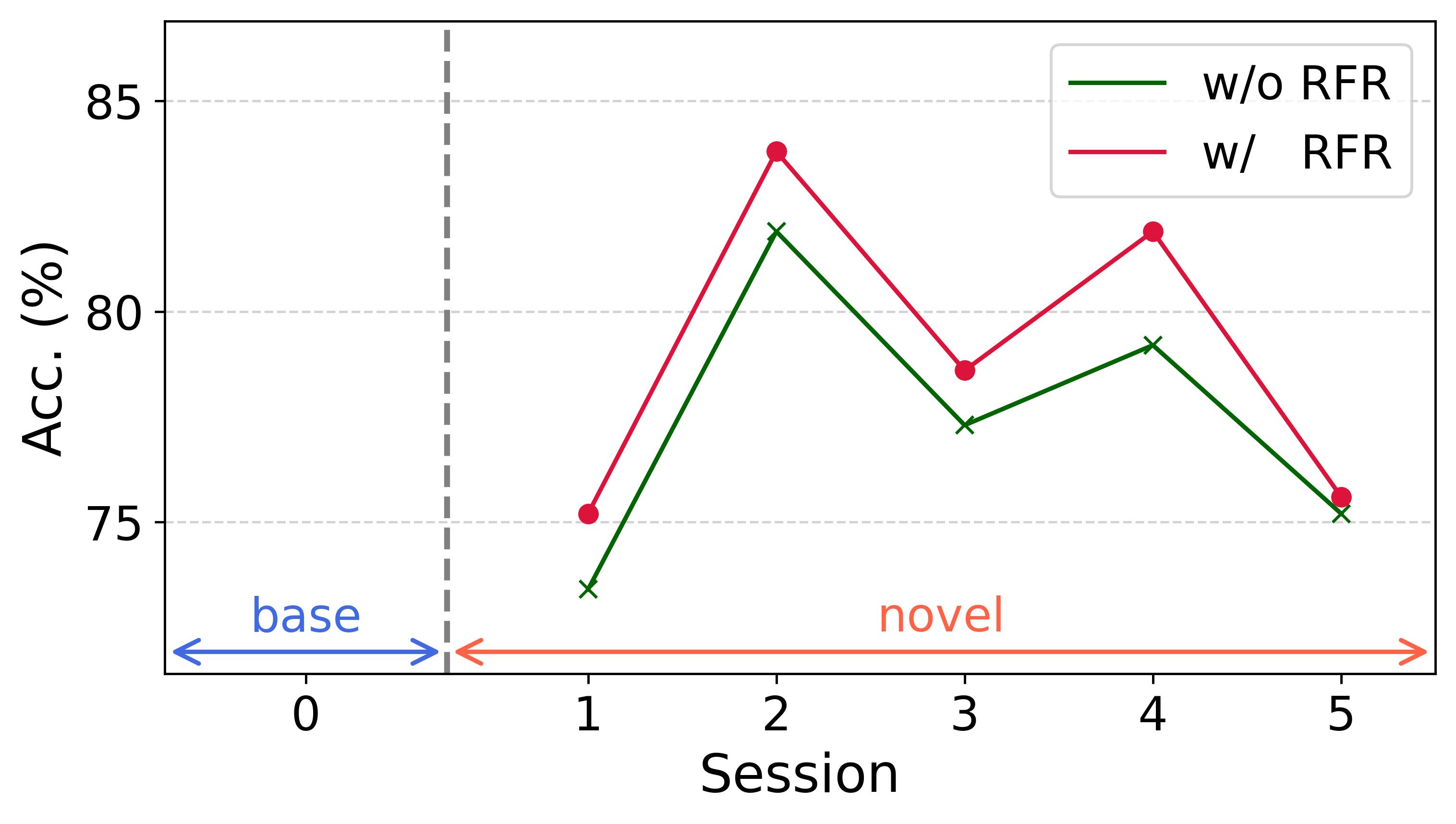}
        \caption{Novel task performance}
    \end{subfigure}
    \begin{subfigure}{0.30\linewidth}
        \includegraphics[width=\linewidth]{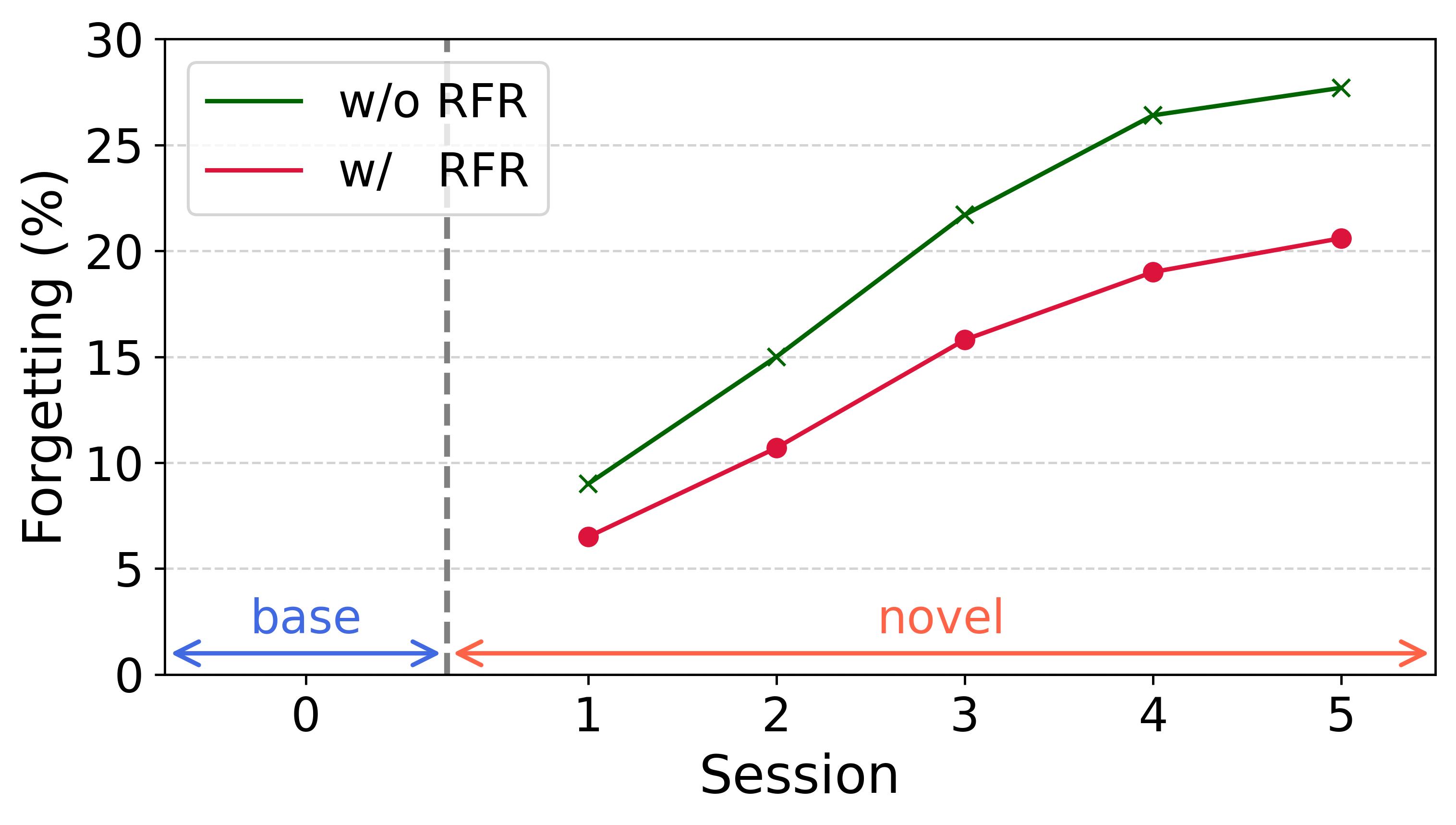}
        \caption{Catastrophic forgetting}
    \end{subfigure}
    \caption{Impact of increasing representation rank during the base session.
    We conduct an analysis of UCIR models with and without the integration of our method. ResNet-18 model is trained for the CIFAR-100 dataset, utilizing 50 base classes and a split size of 10 classes for each novel session.
    (a) Effective rank of the feature extractor.
    (b) Novel task performance in each novel session.
    (c) The degree of catastrophic forgetting that occurs for the base task.
    }
    \label{fig:motivation}
\end{figure*}

In contrast, the concept of forward compatible approach, which aims to facilitate the training of the subsequent tasks, has received relatively little attention.
Recently, a few studies have introduced methods aimed at learning forward-compatible representations, predominantly by leveraging class information.
These methods include employing dual augmentation techniques (class augmentation and semantic augmentation)~\citep{zhu2021class} and enforcing class-wise decorrelations (CwD)~\citep{shi2022mimicking}.
In few-shot class incremental learning (FSCIL), another method called FACT also implements forward compatible method by assigning virtual prototypes and relying on class information~\citep{zhou2022forward}.

To enhance the forward compatibility of the representation,
this study focuses on representation rank and feature richness. To be precise, we utilize \textit{effective rank}~\citep{roy2007effective} in lieu of algebraic rank. Effective rank is a continuous-value extension of algebraic rank. In contrast to algebraic rank, it possesses two advantageous properties: differentiability and the ability to effectively manage extremely small singular values.
Representation rank is a general property of representations, and it is not constrained to specific types of information such as class information.
Therefore, we conjecture that the representation rank can serve as a crucial indicator of the quantity of encoded features in the representation, with higher-rank representations expected to contain rich features that can be beneficial for subsequent tasks.
In this work, we substantiate the conjecture with a theorem and empirical investigations. 
We prove that the Shannon entropy of the representation is maximized when the effective rank is maximized. Because entropy is a quantitative measure of information, larger entropy can be interpreted as richer features. 
Empirically, we show that effective rank increases as more classes are included in a plain supervised learning and we also show that effective rank increases as unsupervised learning proceeds.

To this end, we propose an effective-Rank based Feature Richness enhancement~(RFR) method that increases the effective rank of representation during the base session in order to preserve informative features.
Specifically, we highlight the importance of regularizing the feature extractor during the base session, as it offers two main advantages in the process of learning novel sessions.
First, the performance of novel tasks can be enhanced by utilizing the rich features encoded by the base task.
Second, catastrophic forgetting can be mitigated because novel tasks can leverage the rich features encoded by the base task, leading to minimal modifications to the feature extractor during the learning of novel tasks.
In contrast to IL2A~\citep{zhu2021class}, which explicitly implements both backward and forward compatible methods across base and novel sessions, RFR achieves two distinct methodological objectives solely through a forward compatible approach, without explicit adoption of backward compatible regularization methods during novel sessions.

We have performed extensive experiments to confirm the effectiveness of RFR for class incremental learning. While we defer the explanation of the full results until Section~\ref{sec:experiments}, a glimpse of the experimental results is provided in Figure~\ref{fig:motivation}. Our method can effectively increase the representation rank as shown in Figure~\ref{fig:motivation}(a), can improve the performance of novel tasks as exhibited in Figure~\ref{fig:motivation}(b), and can substantially mitigate catastrophic forgetting as demonstrated in Figure~\ref{fig:motivation}(c). These empirical results provide compelling evidence for the effectiveness of our approach in enhancing the feature richness in the representation.

The main contributions can be summarized as follows: (1) While previous methods focus solely on either forward compatibility or backward compatibility, this study proposes the novel method RFR, which enhances both forward and backward compatibility simultaneously. (2) This study provides both theoretical and empirical evidence supporting the superiority of the method.

\section{Related works}

\subsection{Backward compatible approaches}
Weight regularization methods aim to minimize the weight distance between the feature extractor learned in the previous session and the feature extractor learned in the subsequent session. In this approach, previous works have primarily focused on calculating the importance of each weight to penalize changes of individual weights.
To calculate the importance, several methods have been proposed.
EWC~\citep{kirkpatrick2017overcoming} proposed a diagonal approximation of the Fisher information matrix.
SI~\citep{zenke2017continual} and MAS~\citep{aljundi2018memory} proposed a path integral approach, which accumulates the changes in weights throughout the entire learning trajectory.
RWalk~\citep{chaudhry2018riemannian} combined the Fisher information matrix approach with the path integral approach.

Representation regularization methods aim to prevent forgetting by imposing a penalty on changes in representations. Typically, a regularization is applied during novel sessions, wherein Knowledge Distillation~\citep{hinton2015distilling} plays a key role. The previous session's network acts as the teacher, imparting its knowledge to the student network being trained in the novel session.
iCaRL~\citep{rebuffi2017icarl} employs sigmoid output for knowledge distillation, while other methods~\citep{li2017learning,castro2018end,wu2019large,zhao2020maintaining} utilize temperature-scaled softmax outputs.
UCIR~\citep{hou2019learning} incorporates cosine normalization, less-forget constraint, and inter-class separation to mitigate the adverse effects of the imbalance between previous and new classes.
PODNet~\citep{douillard2020podnet} effectively reduces the difference in pooled intermediate features along the height and width directions through knowledge distillation.

\subsection{Forward compatible approaches}

In the pursuit of establishing forward compatibility within class incremental learning scenarios,
IL2A~\citep{zhu2021class} introduced two distinct augmentation strategies aimed at improving both backward and forward compatibility. The semantic augmentation generates features from stored distribution of previous tasks, and these features are fed to another classification loss for backward compatibility. Simultaneously, the class augmentation method utilizes an augmented class as an auxiliary class to learn transferable and diverse representation, thereby promoting forward compatibility.
The Classwise Decorrelation (CwD)~\citep{shi2022mimicking} method was developed to mimic the behavior of an oracle during the base session. Through empirical investigations, it was discerned that resembling the representation distribution patterns akin to those exhibited by the oracle, characterized by a uniform dispersion of eigenvalues across each class, holds the potential to enhance forward compatibility. To realize this, classwise Frobenius norm of representations was strategically employed during the base session, serving as a mechanism to enforce the desired distribution consistency.

If we consider a broader research area beyond class incremental learning, ForwArd Compatible Training (FACT) also introduced a forward compatible approach for few-shot class incremental learning~\citep{zhou2022forward}.
Within the FACT framework, the concept of virtual classes was incorporated during the training of the base session, effectively allocating embedding space to accommodate upcoming classes. FACT integrates pseudo labels and virtual instances to facilitate effective network training generated through the manifold mixup technique. These elements collectively enhanced the network's adaptability, underscoring FACT's significance in fostering the seamless integration of new classes while upholding established knowledge.

While these forward-compatible approaches have demonstrated their effectiveness, they are all reliant on class information.
Considering that the training data for the base task can encompass considerably many informative features that can be utilized for novel sessions beyond mere class information,
regularization of representations in an unsupervised manner could prove advantageous in encoding richer, forward-compatible features.

\section{Enhancing feature richness by increasing representation rank}
\label{sec:methods}

In this section, we present the details of the proposed method.
The primary objective of our method is to enhance the feature richness in the representation by increasing the rank of the feature extractor's output representation during the base session.

\subsection{Effective rank}

For a set of $N$ samples in a mini-batch, each having an $L_2$-normalized representation vector $\bm{h}_i \in \mathbb{R}^d$ satisfying $||\bm{h}_i||_2=1$ and $N>d$,
the rank of representation matrix $\bm{H} = [\bm{h}_1, \bm{h}_2, ..., \bm{h}_N]^T \in \mathbb{R}^{N \times d}$ can be quantified as
\begin{align}
\label{eq:plain_rank}
\mathsf{rank}(\bm{H})
= \mathsf{rank}(\bm{U}\bm{\Sigma}\bm{V}^T)
= \mathsf{rank}(\bm{\Sigma}) = \sum_{i=1}^{d} \bm{1}_\mathrm{0<\sigma_i},
\end{align}
where $\bm{U}\bm{\Sigma}\bm{V}^T$ is a singular value decomposition of $\bm{H}$ and $\{\sigma_i\}$ are the singular values arranged in a descending order.

The definition of algebraic rank in Eq.~(\ref{eq:plain_rank}) exhibits two practical problems. The first problem is that it equally counts all positive singular values regardless of their strength. Therefore, it can be misleading when extremely small $\sigma_i$ values exist. For instance, rank is known to be susceptible to noise~\citep{choi2017selecting}. A commonly adopted remedy for this problem is to set a threshold for counting. We call this \textit{thresholded rank} as $\mathsf{trank}$, and it is defined as 
\begin{align}
\label{eq:trank}
\mathsf{trank}(\bm{H},\rho) \triangleq \underset{k}{\arg\min} \left(\rho \cdot \sum_{i=1}^{d}\sigma_i^2  \le \sum_{i=1}^{k}\sigma_i^2   \right),
\end{align}
where $\rho$ is a threshold parameter chosen between $0$ and $1$. Eq.~(\ref{eq:trank}) quantifies the count of the largest singular values that collectively encompass $\rho$ proportion of the total singular value energy. 
The $\mathsf{trank}$ provides a straightforward way to avoid the first practical problem, but it is still vulnerable to the second problem. 
The second problem is the non-differentiable nature of algebraic rank and $\mathsf{trank}$. They are both integer-valued and thus not differentiable. A direct consequence is the difficulty for integrating rank or $\mathsf{trank}$ into an end-to-end learning. An elegant work-around for this problem is known as \textit{effective rank}~\citep{roy2007effective}. Effective rank is defined as 
\begin{align}
\label{eq:erank}
\mathsf{erank}(\bm{H}) \triangleq \exp \left(-\sum_{i=1}^{d} \lambda_{i} \log{\lambda_{i}}\right),
\end{align}
where $\lambda_{i} \triangleq \sigma_i^2/N$ and $\{\lambda_{i}\}$ corresponds to the set of eigenvalues for $\bm{H}^T\bm{H}/N$.
We note that $\sum_{i=1}^{d} \lambda_{i} = 1$ because $\sum_{i=1}^{d} \lambda_{i} = \sum_{i=1}^{d}\sigma_i^2/N=tr(\bm{H}^T\bm{H}/N)=tr(\sum_{i=1}^N \bm{h}_i\bm{h}_i^T/N)\\=\sum_{i=1}^N tr(\bm{h}_i\bm{h}_i^T)/N=1$.
While being continuous, the effective rank is known to satisfy a list of properties~\citep{roy2007effective}. 
In particular, the following lower bound can be derived. 
\begin{align}
\label{eq:bound_rank}
\mathsf{erank}(\bm{H}) \le \mathsf{rank}(\bm{H}).
\end{align}
The logarithm of $\mathsf{erank}$ turns out to be the same as the von Neumann entropy~\citep{nielsen2002quantum, wilde2013quantum} and its use for learning representation has been extensively studied in~\citep{kim2023vne}. In our work, we adopt $\mathsf{erank}$ as the main concept and the logarithm of $\mathsf{erank}$ as the main method for controlling feature richness.

\subsection{Empirical investigation of rank vs. feature richness}

To understand the relation between the three types of rank and feature richness in representation, we have devised two experiments. For the supervised learning experiment, we have controlled the number of classes used for the training and measured the rank values. For the unsupervised learning experiment, we have performed contrastive learning with SimCLR loss~\citep{chen2020simple} and measured the rank values as the learning epoch increases. The results are shown in Figure~\ref{fig:rank_sup_ssl}. 
In both experiments, algebraic rank is fixed at the maximum value ($d=512$) because it counts even extremely small singular values. Both $\mathsf{trank}$ and $\mathsf{erank}$, however, increase as more classes are used for plain supervised learning or as training epoch increases for unsupervised learning.
For a more comprehensive exploration of the correlation between representation rank and downstream performance in contrastive learning, refer to the analysis provided in~\citep{garrido2023rankme}.
As we will show in Section~\ref{sec:experiments}, a strong relationship between $\mathsf{erank}$ and feature richness also holds for class incremental learning.

\begin{figure}[ht!]
    \centering
    \begin{subfigure}{0.9\linewidth}
        \includegraphics[width=\linewidth]{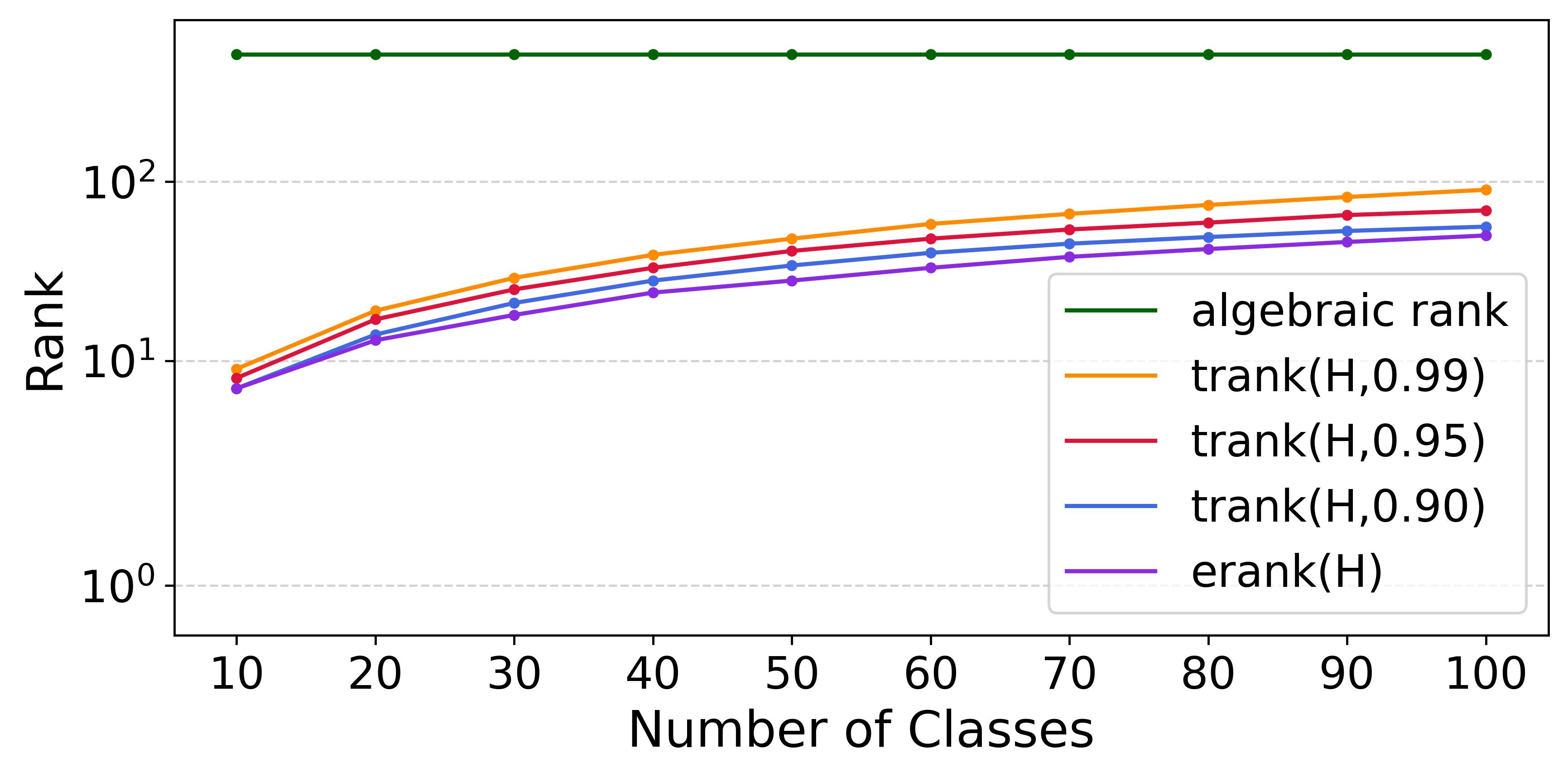}
        \caption{Supervised}
    \end{subfigure}
    \begin{subfigure}{0.9\linewidth}
        \includegraphics[width=\linewidth]{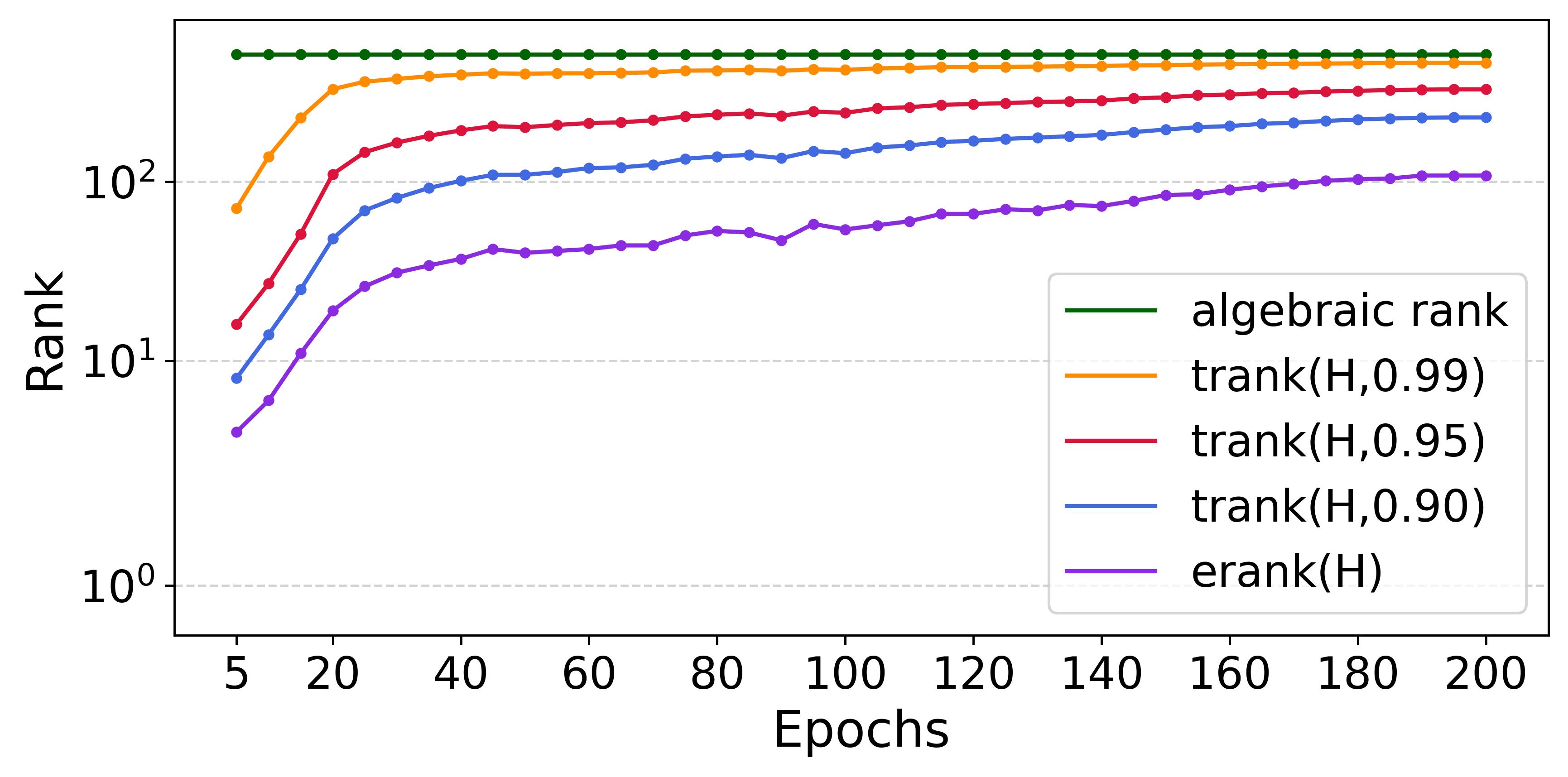}
        \caption{Unsupervised}
    \end{subfigure}
    \caption{Rank vs. feature richness. ResNet-18 was trained using ImageNet-100 dataset.  
(a) Supervised learning --- as we include more classes in the training that starts from scratch, both $\mathsf{trank}$ and $\mathsf{erank}$ increase. Algebraic rank remains at the maximum value. 
(b) Unsupervised learning with SimCLR loss --- as the unsupervised representation learning proceeds, both $\mathsf{trank}$ and $\mathsf{erank}$ increase. Algebraic rank remains at the maximum value.
}\label{fig:rank_sup_ssl}
\end{figure}

\subsection{Proposed method}

In class incremental learning, the training process is divided into the initial base session and the following novel sessions. 
Our goal is to enhance feature richness during the base session by increasing representation rank. 
We adopt $\mathsf{erank}$ as the starting point because it is differentiable. Then, we make an adjustment where we apply a logarithm because of the implementational effectiveness demonstrated in~\citep{kim2023vne}.
The effective-Rank based Feature Richness enhancement~(RFR) method is implemented by including the RFR loss, $\mathcal{L}_\mathit{RFR}$, during the base session as below.
\begin{align}
\mathcal{L} &= \mathcal{L}_\mathit{CrossEntropy} + \alpha \cdot \mathcal{L}_\mathit{RFR} \\
&= \mathcal{L}_\mathit{CrossEntropy} + \alpha \cdot \sum_{i=1}^{d} \lambda_{i} \log{\lambda_{i}},
\label{eq:loss}
\end{align}
where $\alpha>0$ is the strength hyper-parameter. The RFR loss is not applied in the novel sessions, and this decision is analyzed in Section~\ref{subsec:discuss_novel_sessions}.

\subsection{Connection to Shannon entropy}

Shannon entropy is a fundamental measure of information and it quantifies the information contained in a random variable~\citep{cover1999elements}. Therefore, entropy of representation can serve as a measure of feature richness. Assuming Gaussian distribution, we prove a theoretical connection between the proposed RFR and Shannon entropy.

\begin{theorem}
For representation $\bm{h} \in \mathbb{R}^{d}$ that follows a multivariate Gaussian distribution, the entropy of representation is maximized if the effective rank of the representation is maximized.
\end{theorem}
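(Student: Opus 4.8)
The plan is to reduce both quantities appearing in the statement --- the differential entropy of $\bm{h}$ and the effective rank $\mathsf{erank}(\bm{H})$ --- to functions of the eigenvalue spectrum $\{\lambda_i\}$ of $\bm{H}^T\bm{H}/N$, and then to show that one and the same configuration of eigenvalues maximizes both. First I would write the differential entropy of $\bm{h}\sim\mathcal{N}(\bm{\mu},\bm{\Sigma})$ as
\begin{align}
H(\bm{h}) = \tfrac{1}{2}\log\!\big((2\pi e)^{d}\det\bm{\Sigma}\big) = \tfrac{d}{2}\log(2\pi e) + \tfrac{1}{2}\sum_{i=1}^{d}\log\lambda_i ,
\end{align}
which is independent of $\bm{\mu}$ and depends on $\bm{\Sigma}$ only through its eigenvalues (identifying, as a modeling choice, the eigenvalues of $\bm{\Sigma}$ with the $\{\lambda_i\}$ of the second-moment matrix $\bm{H}^T\bm{H}/N$, which is exact for zero-mean representations). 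Crucially, the $L_2$-normalization $\|\bm{h}_i\|_2=1$ supplies the trace constraint $\sum_{i=1}^{d}\lambda_i=1$ derived immediately below Eq.~(\ref{eq:erank}).

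Next I would maximize $\sum_{i}\log\lambda_i$ over the simplex $\{\lambda_i\ge 0,\ \sum_i\lambda_i=1\}$. By strict concavity of $\log$ (Jensen's inequality, or equivalently a Lagrange-multiplier computation forcing $1/\lambda_i$ to be constant), $\sum_i\log\lambda_i \le d\log\!\big(\tfrac{1}{d}\sum_i\lambda_i\big) = -d\log d$, with equality if and only if $\lambda_i=1/d$ for every $i$. Hence $H(\bm{h})$ attains its maximum exactly at the uniform spectrum. Then I would observe that $-\sum_i\lambda_i\log\lambda_i$ is precisely the Shannon entropy of the probability vector $(\lambda_1,\dots,\lambda_d)$, so that $\mathsf{erank}(\bm{H})=\exp\!\big(-\sum_i\lambda_i\log\lambda_i\big)\le\exp(\log d)=d$, again with equality if and only if $\lambda_i=1/d$. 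Combining the two observations: maximizing $\mathsf{erank}(\bm{H})$ forces $\lambda_i=1/d$, which is exactly the spectrum that maximizes $H(\bm{h})$, giving the claimed implication (in fact an equivalence, since both maximizers are unique and coincide).

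The proof itself is short; the genuinely delicate point is not a calculation but a matter of stating the hypothesis correctly. Because the Gaussian differential entropy scales with $\det\bm{\Sigma}$, the theorem is vacuous without a fixed-energy constraint, and the only place that constraint can come from is the $L_2$-normalization of the representations --- so I would make the constraint $\sum_i\lambda_i=1$ explicit at the outset and flag the zero-mean (second-moment versus covariance) identification. Once that is in place, everything reduces to the strict concavity of $\log(\cdot)$ and of the Shannon entropy, both of which single out the barycenter of the probability simplex; no further estimates are required.
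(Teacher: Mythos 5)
Your proposal is correct and follows essentially the same two-part strategy as the paper: under the trace constraint $\mathrm{tr}(\Sigma)=1$ supplied by the $L_2$-normalization, show that both the Gaussian differential entropy and $\mathsf{erank}$ are uniquely maximized by the isotropic spectrum $\lambda_i = 1/d$, so the two maximizers coincide. The only difference is cosmetic: you maximize $\log\det\Sigma=\sum_i\log\lambda_i$ directly over the eigenvalue simplex by concavity of $\log$, whereas the paper routes through Hadamard's inequality plus the AM--GM inequality on the diagonal entries of $\Sigma$ --- the two arguments are equivalent.
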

\begin{proof}
Without loss of generality, we consider normalized representation vectors (i.e., $||\bm{h}||_2=1$) that follow a zero-mean multivariate Gaussian distribution (i.e., $\bm{h} \sim \mathcal{N} (0, \Sigma)$), thereby satisfying the condition \(tr(\Sigma)=1\).

The proof is based on two parts. In the first part, we prove that the solution for maximizing the representation entropy is $\Sigma = \frac{1}{d}I$. In the second part, we prove that the solution for maximizing effective rank is also $\Sigma = \frac{1}{d}I$. 

The entropy of multivariate Gaussian distribution, $\mathcal{N} (\mu, \Sigma)$, can be computed as~\citep{cover1999elements}
\begin{align}
\label{eq:thm}
\frac{d}{2}\log{}2\pi e +\frac{1}{2}\log{\det(\Sigma)}, 
\end{align}
where $\det(\Sigma)$ is the determinant of $\Sigma$. By examining the equation, it can be confirmed that entropy is maximized when $\det(\Sigma)$ is maximized under the constraint of \(tr(\Sigma)=1\). The solution for this optimization problem is $\frac{1}{d}I$ because of Hadamard's inequality and inequality of arithmetic and geometric means. First, the following Hadamard's inequality states that the determinant of a positive definite matrix is less than the product of its diagonal elements. 
\begin{align}
\label{eq:Hadamard}
\det(\Sigma) \le \prod_i \Sigma_{ii} \text{, with equality iff } \Sigma \text{ is diagonal.} 
\end{align}
Therefore, all the off-diagonal terms need to be zero to maximize the entropy. Second, $\lambda_i$ needs to be equal to $1/d$ for all $i$ -- otherwise, the inequality of arithmetic and geometric implies that $\prod_i \Sigma_{ii}$ can be increased further while satisfying the sum constraint of \(tr(\Sigma)=1\). 

The proof for $\Sigma = \frac{1}{d}I$ being the solution for maximizing the effective rank in Eq.~(\ref{eq:erank}) is trivial. The logarithm of effective rank is  $-\sum_{i=1}^{d}\lambda_i\log{\lambda_i}$ where 
$\sum_{i=1}^{d} \lambda_{i} = 1$. Because this can be interpreted as the entropy of a probability distribution denoted by $\{\lambda_i\}$, it is maximized by the uniform distribution, i.e., $\lambda_{i}=\frac{1}{d}$.
\end{proof}

It is noteworthy that the Gaussian assumption on the representation~\citep{kingma2013auto,yang2021free} has not only been empirically observed by numerous researchers but also has been theoretically justified in works including~\citep{williams1997computing,neal2012bayesian,lee2017deep,yang2019wide}.

\section{Experiments}
\label{sec:experiments}

We provide an overview of our experimental settings in Section~\ref{subsec:settings}.
Subsequently, we demonstrate the efficacy of our method for enhancing novel task performance in Section~\ref{subsec:forward_compatibility} and for mitigating catastrophic forgetting in Section~\ref{subsec:forgetting}.
In Section~\ref{subsec:sota_experiment}, we demonstrate that our method can improve performance of eleven well-known backward-compatible methods.
In Section~\ref{subsec:further_analysis}, further analysis is provided. In Section~\ref{subsec:ablation_study}, an ablation study is presented.
In Section~\ref{subsec:forward_compatibility} and Section~\ref{subsec:forgetting}, we focus our analysis on the UCIR method, given its prevalent adoption across recent state-of-the-art methodologies. Code is available at: \url{https://github.com/Wonseok1017/NN-RFR}

\begin{figure*}[htbp]
    \centering
    \begin{subfigure}[b]{0.30\linewidth}
        \includegraphics[width=\linewidth]{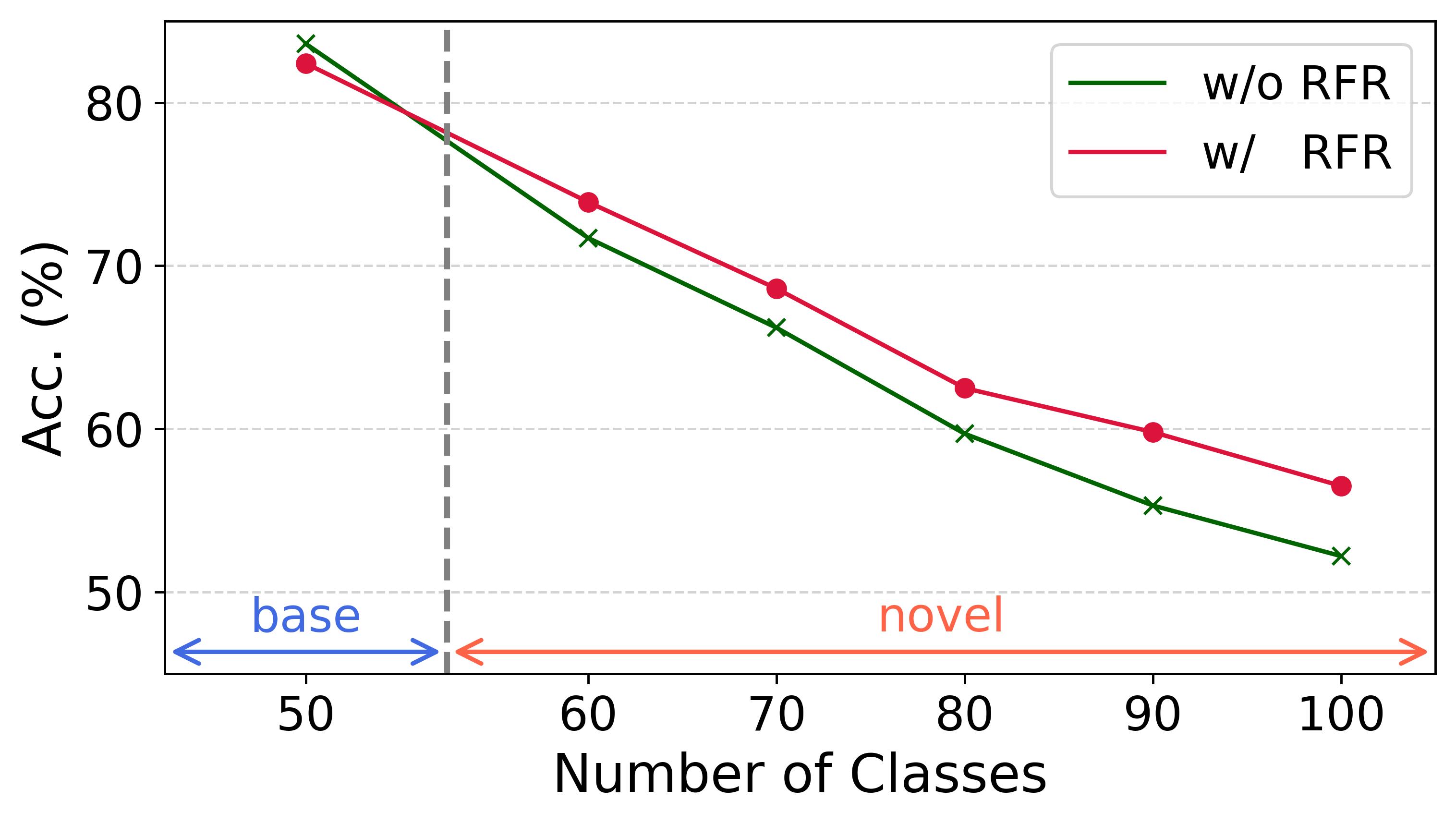}
        \caption{S=10}
    \end{subfigure}
    \begin{subfigure}[b]{0.30\linewidth}
        \includegraphics[width=\linewidth]{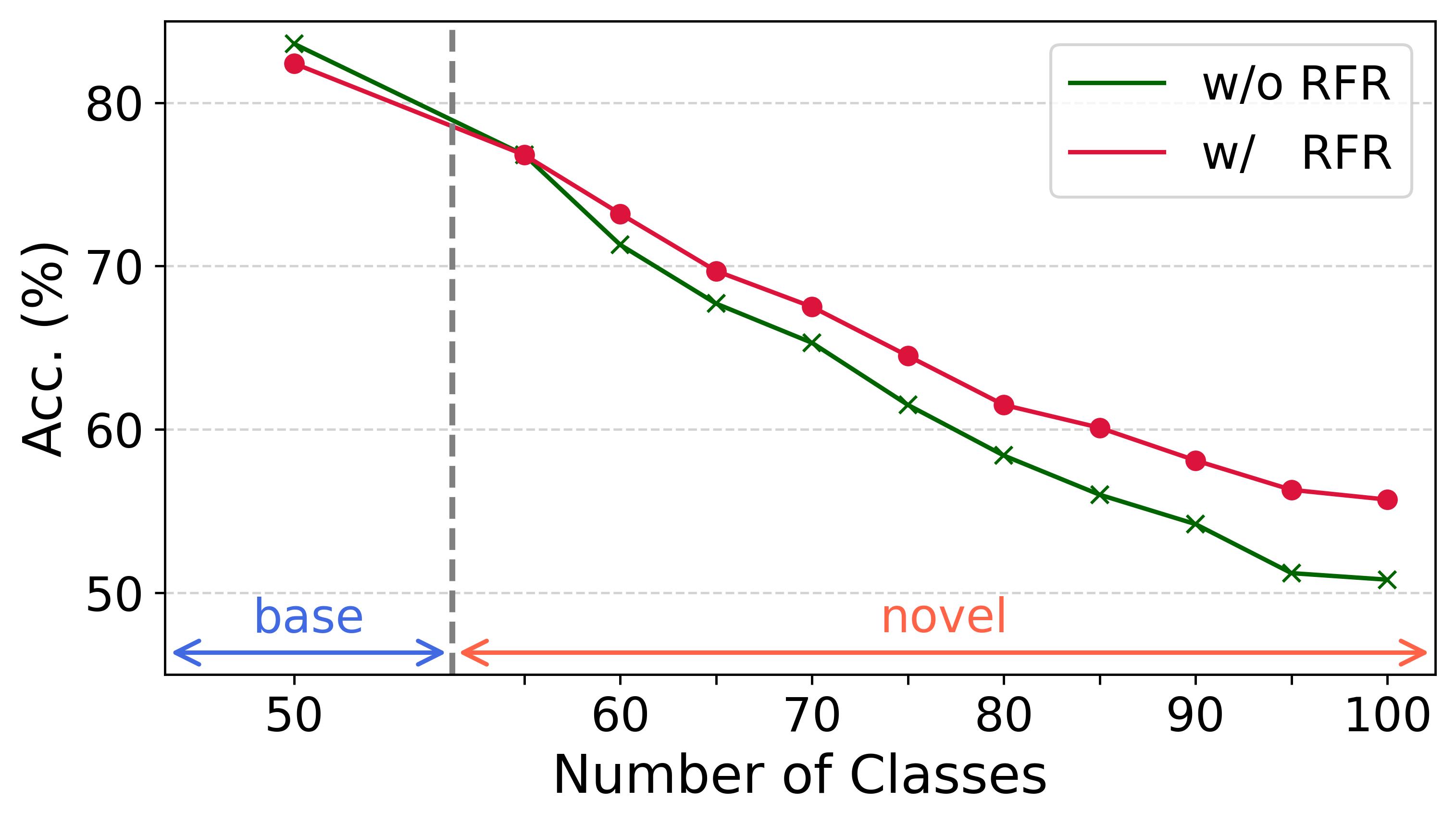}
        \caption{S=5}
    \end{subfigure}
    \begin{subfigure}[b]{0.30\linewidth}
        \includegraphics[width=\linewidth]{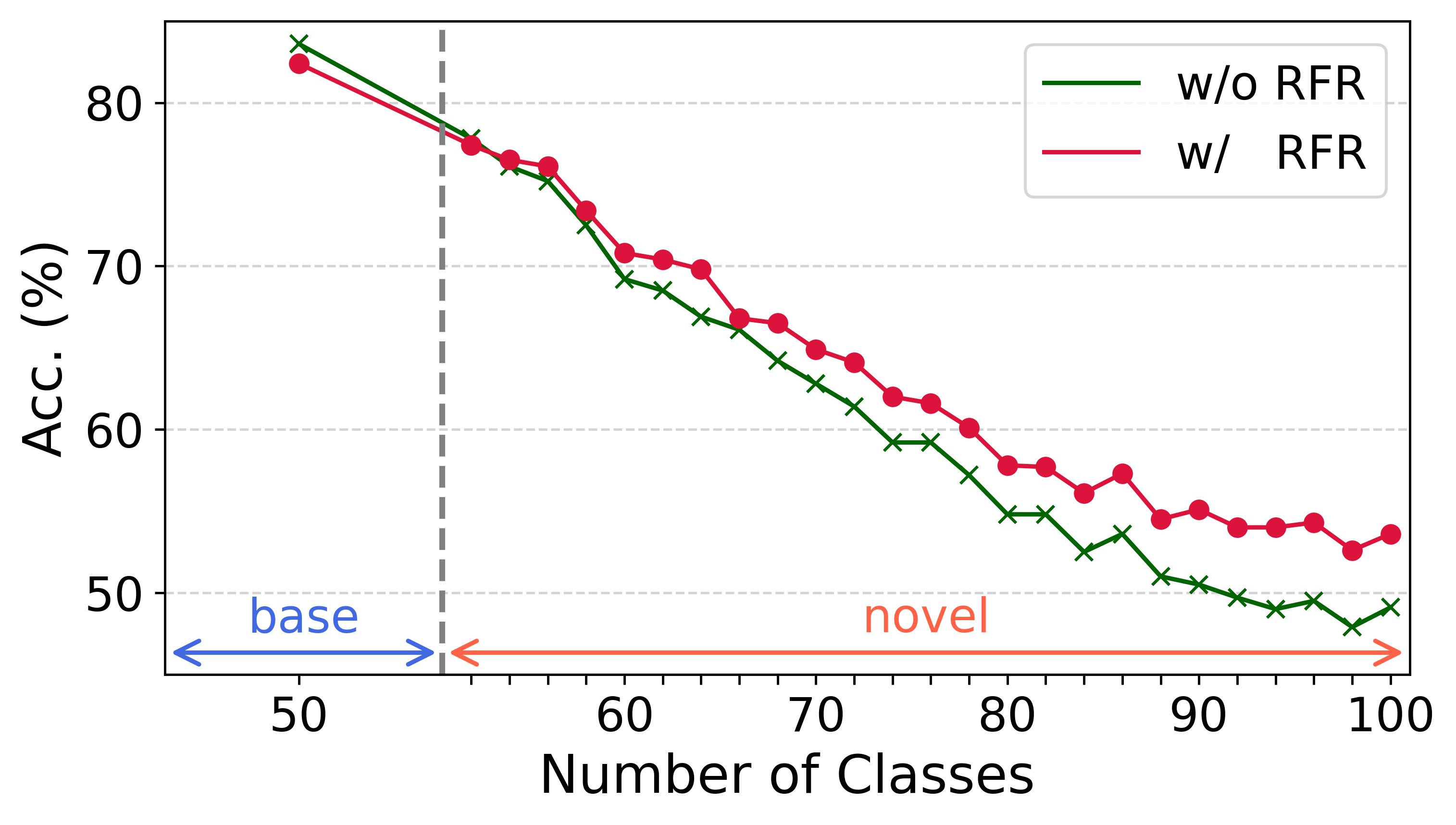}
        \caption{S=2}
    \end{subfigure}
    \caption{Improvements in forward compatibility -- \textit{overall} accuracy at each session is shown for UCIR. 
    Two ResNet-18 models are trained with and without RFR for ImageNet-100 dataset, utilizing 50 base classes under different split sizes (a) 10, (b) 5, and (c) 2 for each novel session.
    The feature extractors trained by the 50 classes of the base task remain \textbf{frozen} during novel sessions.
    }
    \label{fig:acc_freeze_each_phase}
    \centering
    \begin{subfigure}[b]{0.30\linewidth}
        \includegraphics[width=\linewidth]{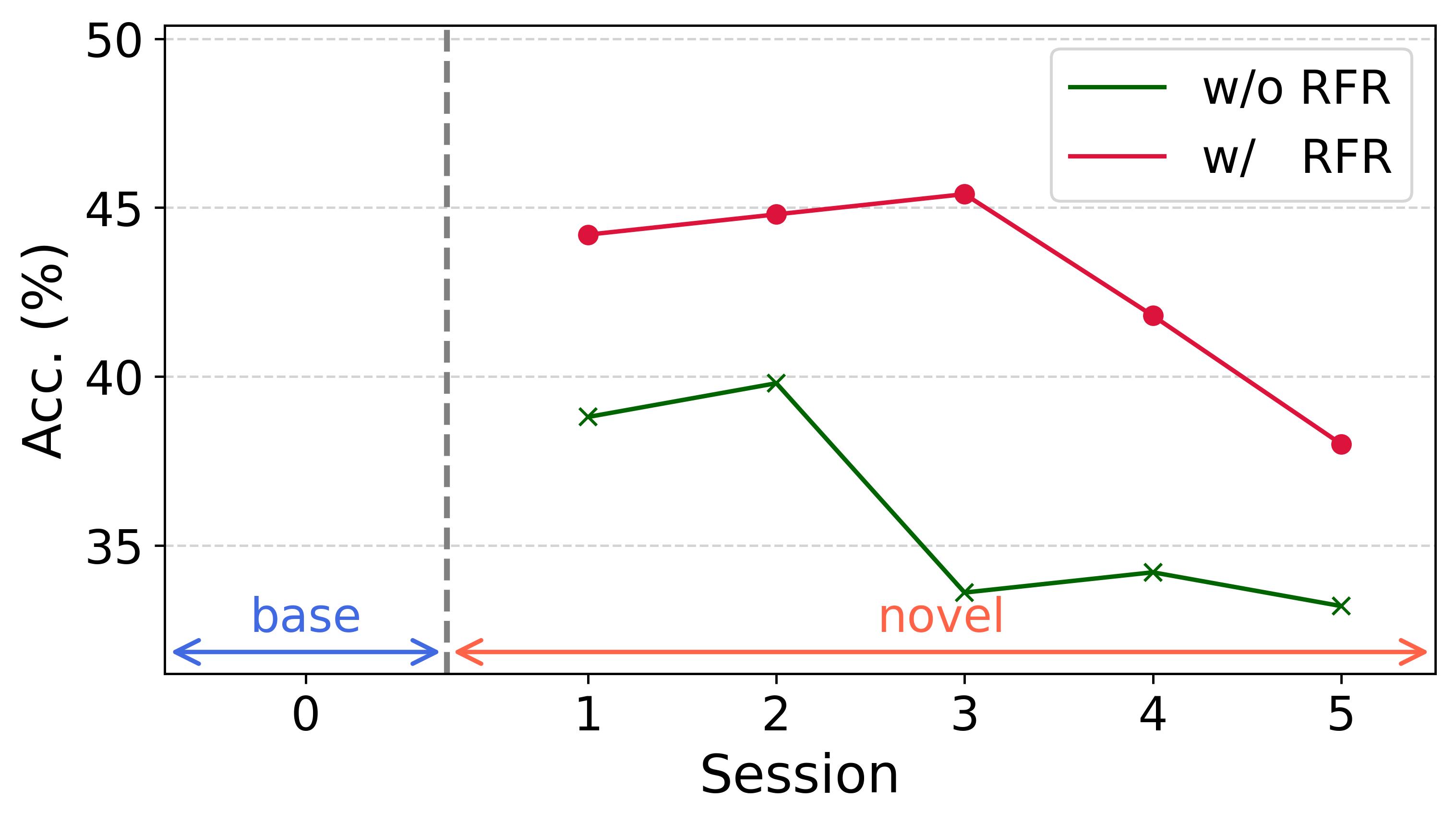}
        \caption{S=10}
    \end{subfigure}
    \begin{subfigure}[b]{0.30\linewidth}
        \includegraphics[width=\linewidth]{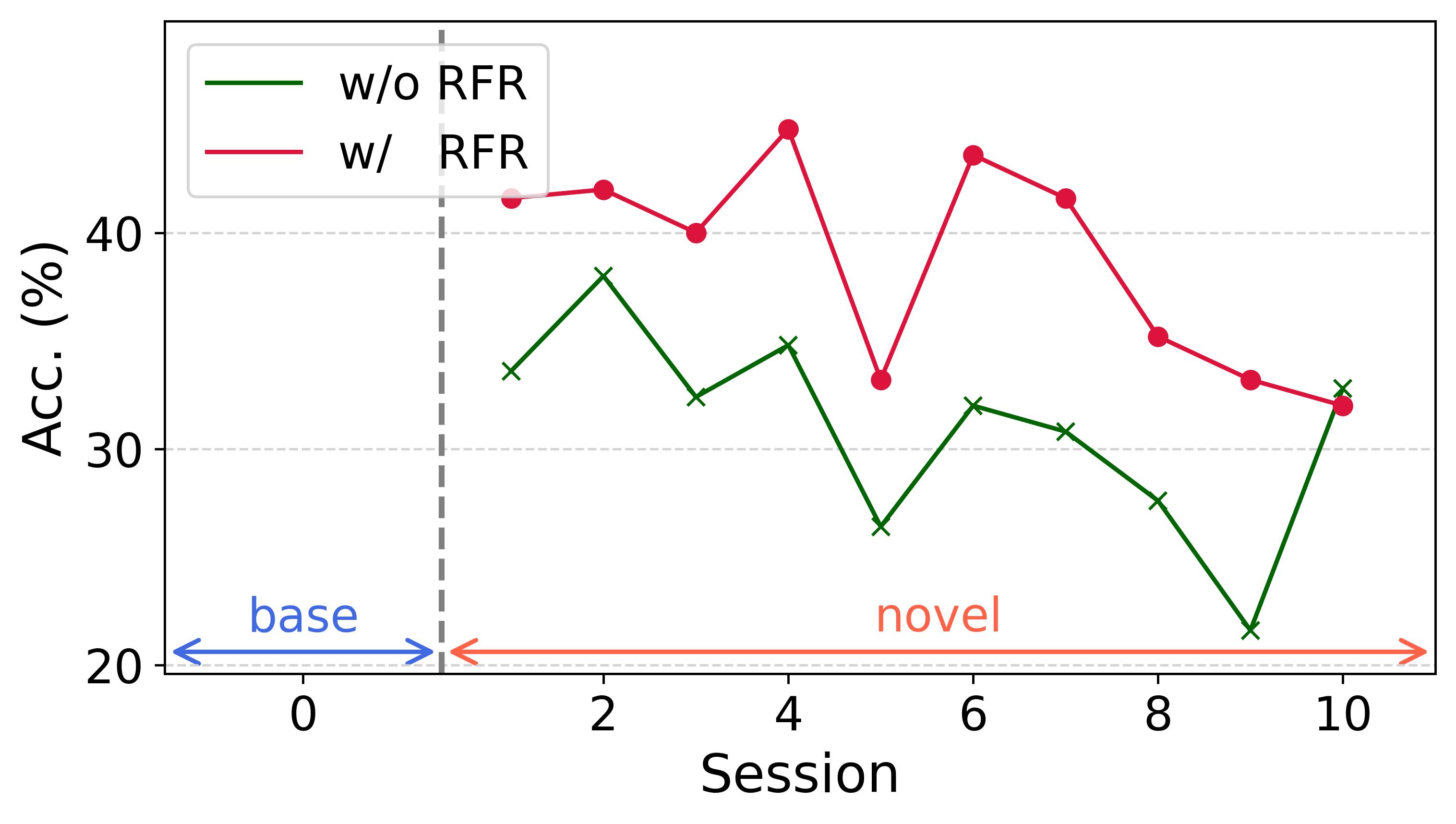}
        \caption{S=5}
    \end{subfigure}
    \begin{subfigure}[b]{0.30\linewidth}
        \includegraphics[width=\linewidth]{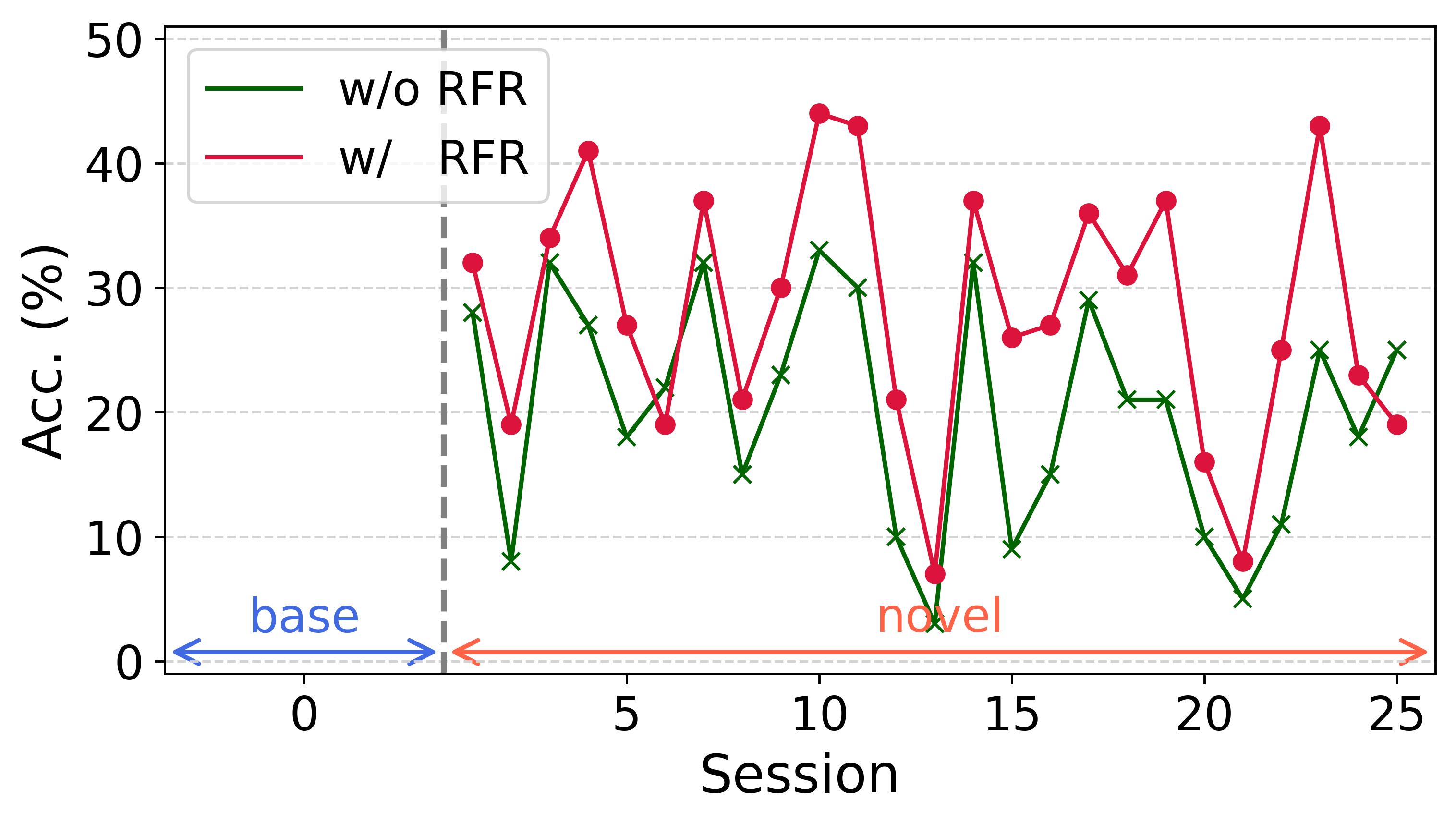}
        \caption{S=2}
    \end{subfigure}
    \caption{Improvements in forward compatibility -- \textit{novel-task} accuracy at each session is shown for UCIR. Results were obtained from the same experiment as in Figure~\ref{fig:acc_freeze_each_phase}.
    }
    \label{fig:acc_freeze_each_novel}
    \centering
    \begin{subfigure}[b]{0.30\linewidth}
        \includegraphics[width=\linewidth]{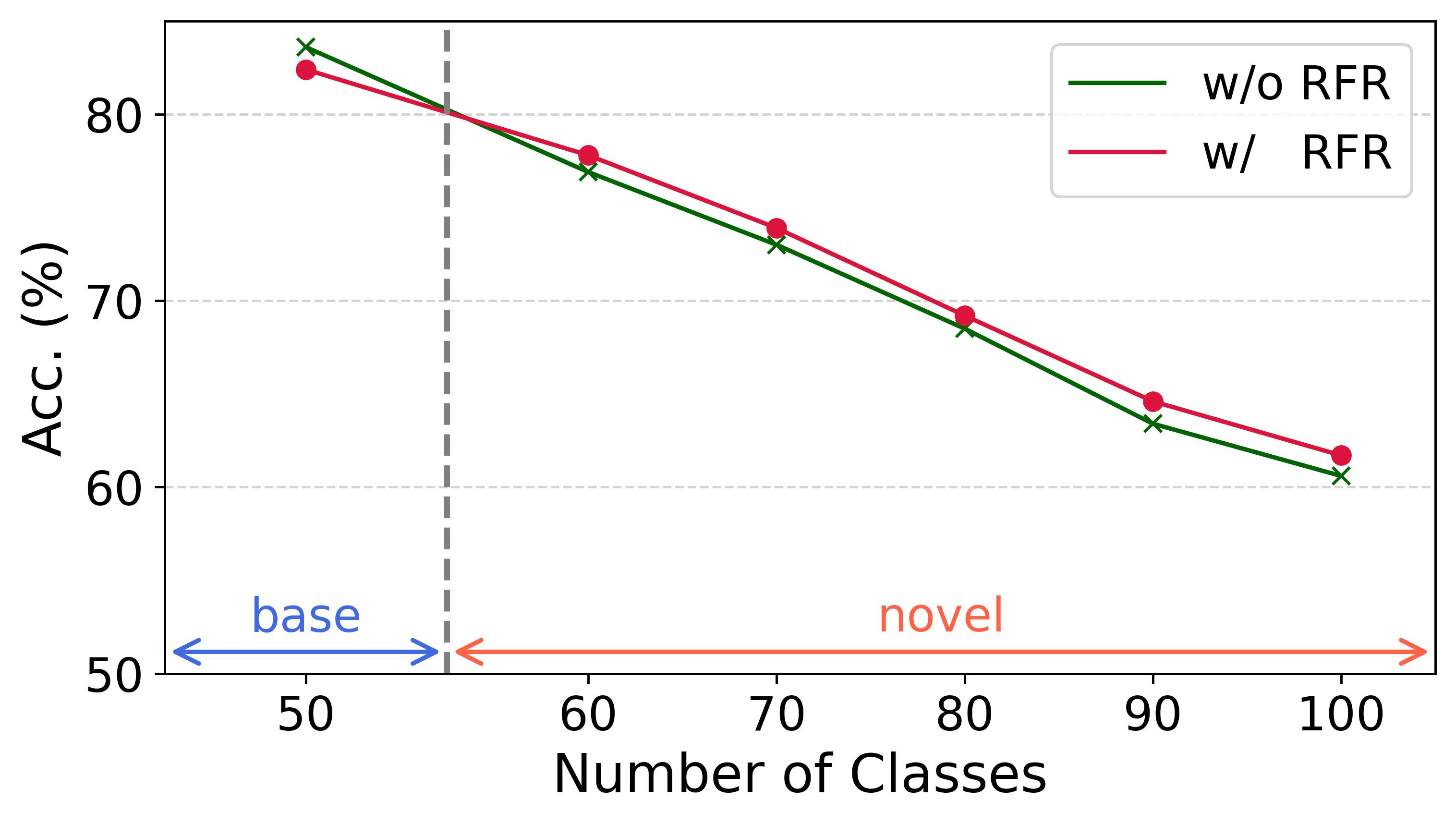}
        \caption{S=10}
    \end{subfigure}
    \begin{subfigure}[b]{0.30\linewidth}
        \includegraphics[width=\linewidth]{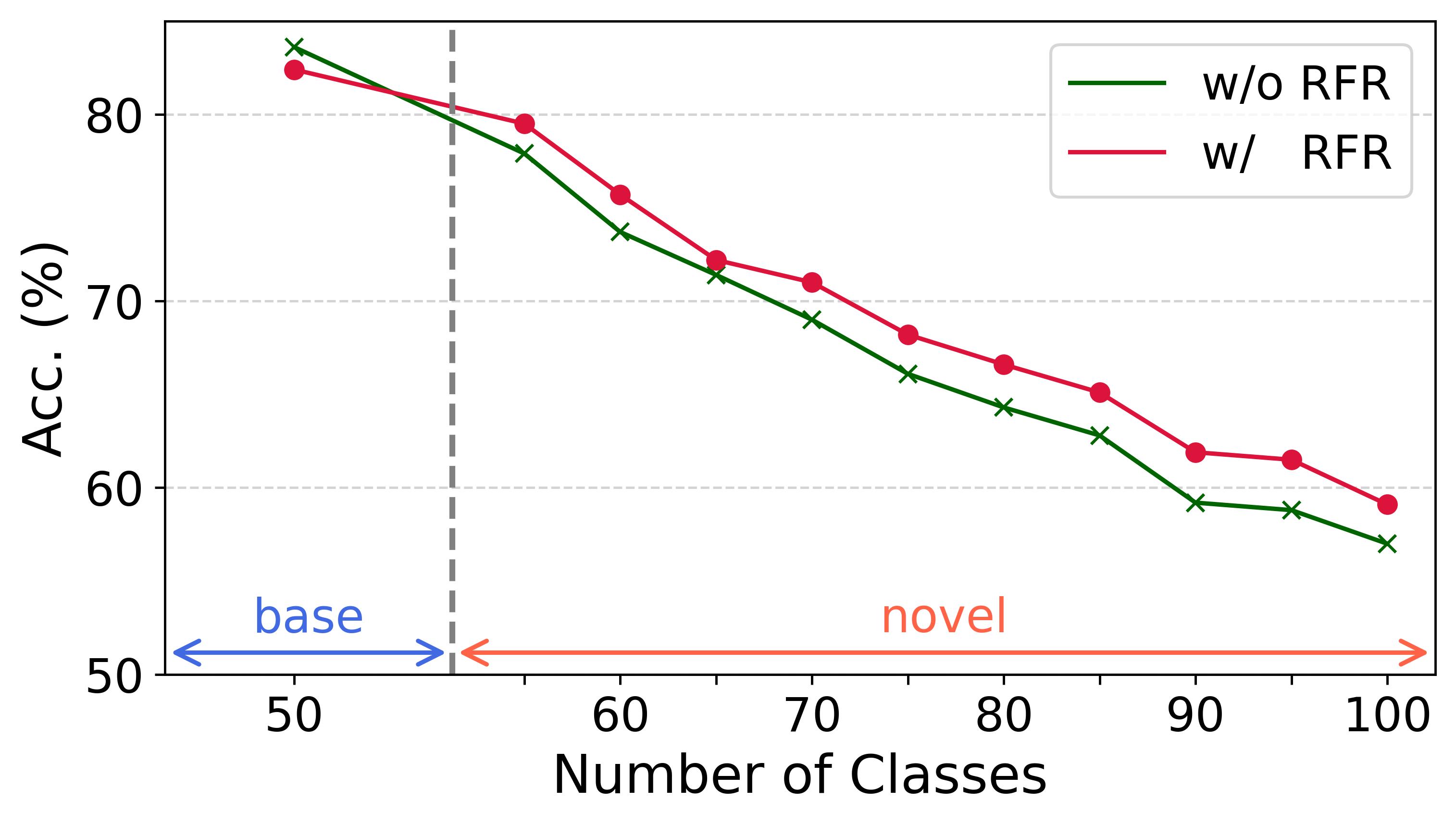}
        \caption{S=5}
    \end{subfigure}
    \begin{subfigure}[b]{0.30\linewidth}
        \includegraphics[width=\linewidth]{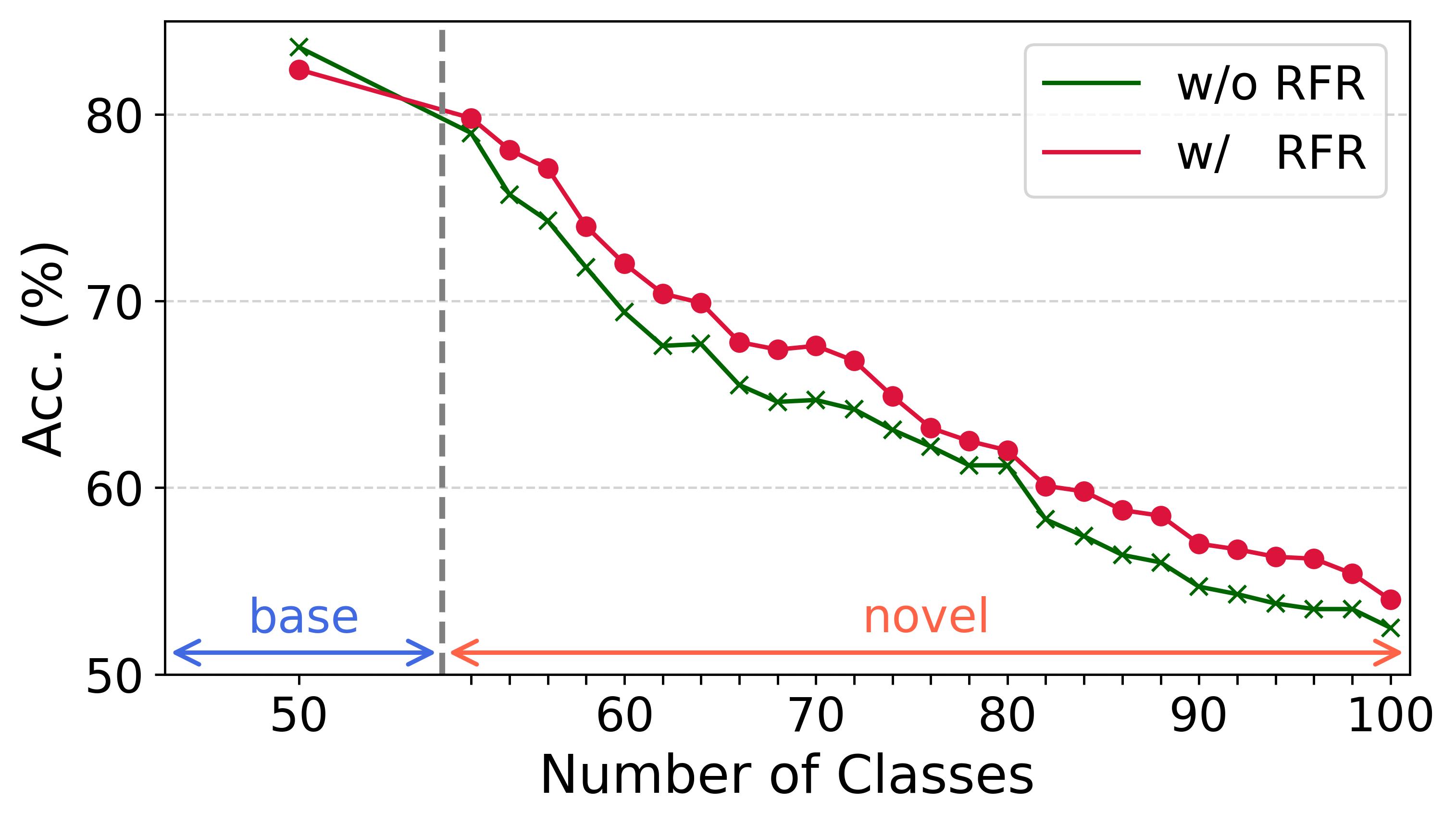}
        \caption{S=2}
    \end{subfigure}
    \caption{Improvements in forward compatibility -- \textit{overall} accuracy at each session is shown for UCIR. 
    Two ResNet-18 models are trained with and without RFR for ImageNet-100 dataset, utilizing 50 base classes under different split sizes (a) 10, (b) 5, and (c) 2 for each novel session.
    }
    \label{fig:acc_each_phase}
    \centering
    \begin{subfigure}[b]{0.30\linewidth}
        \includegraphics[width=\linewidth]{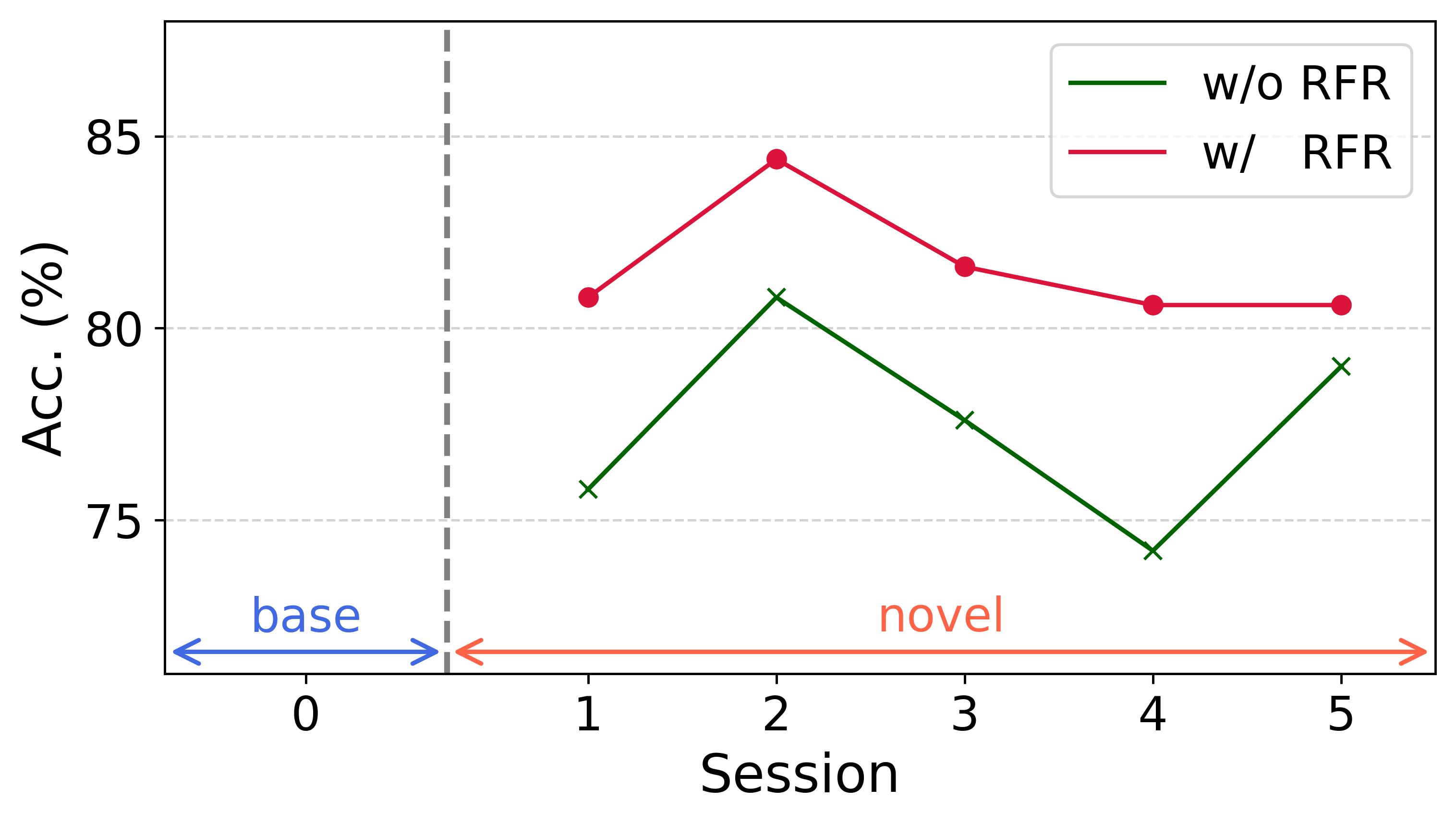}
        \caption{S=10}
    \end{subfigure}
    \begin{subfigure}[b]{0.30\linewidth}
        \includegraphics[width=\linewidth]{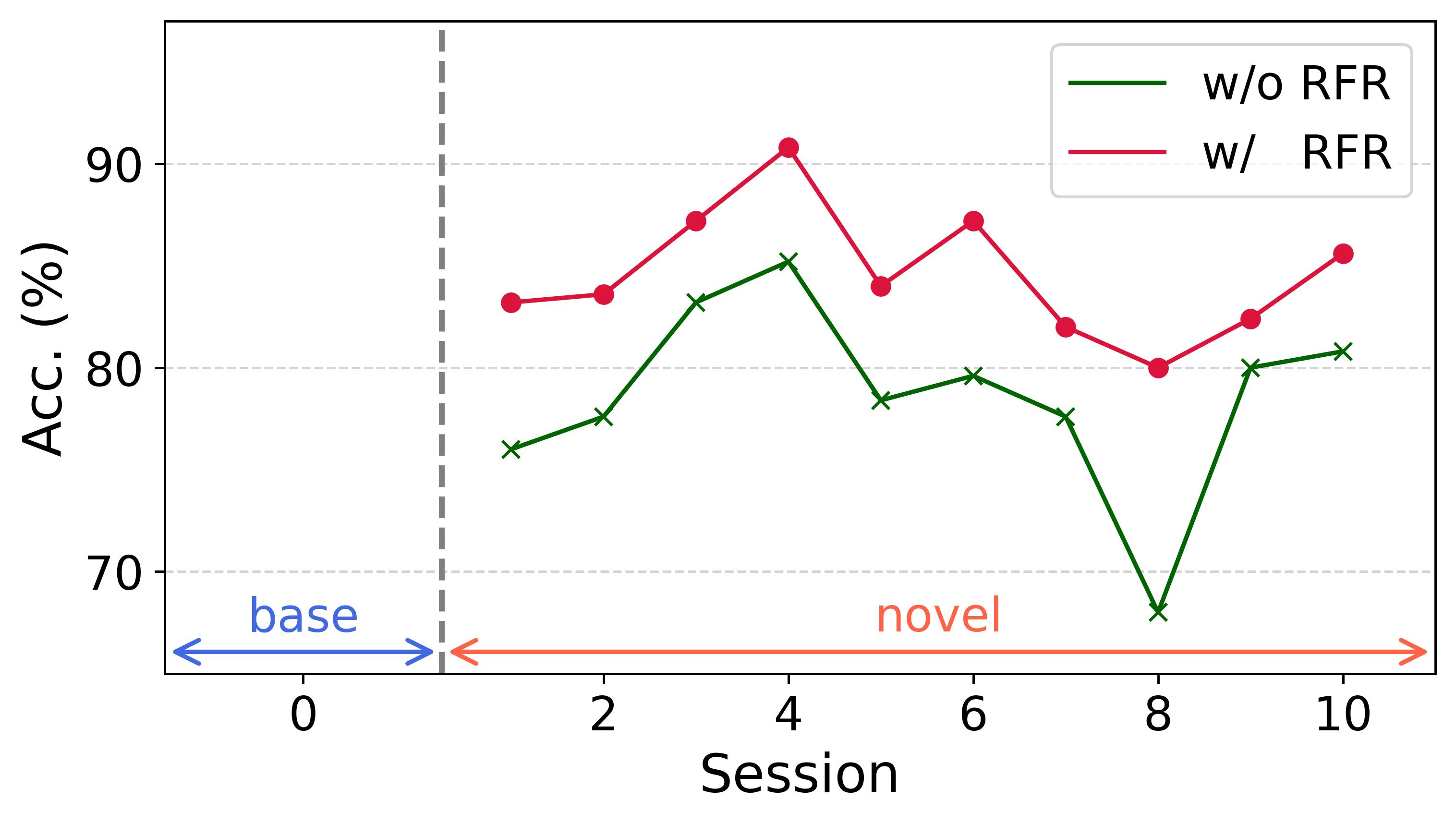}
        \caption{S=5}
    \end{subfigure}
    \begin{subfigure}[b]{0.30\linewidth}
        \includegraphics[width=\linewidth]{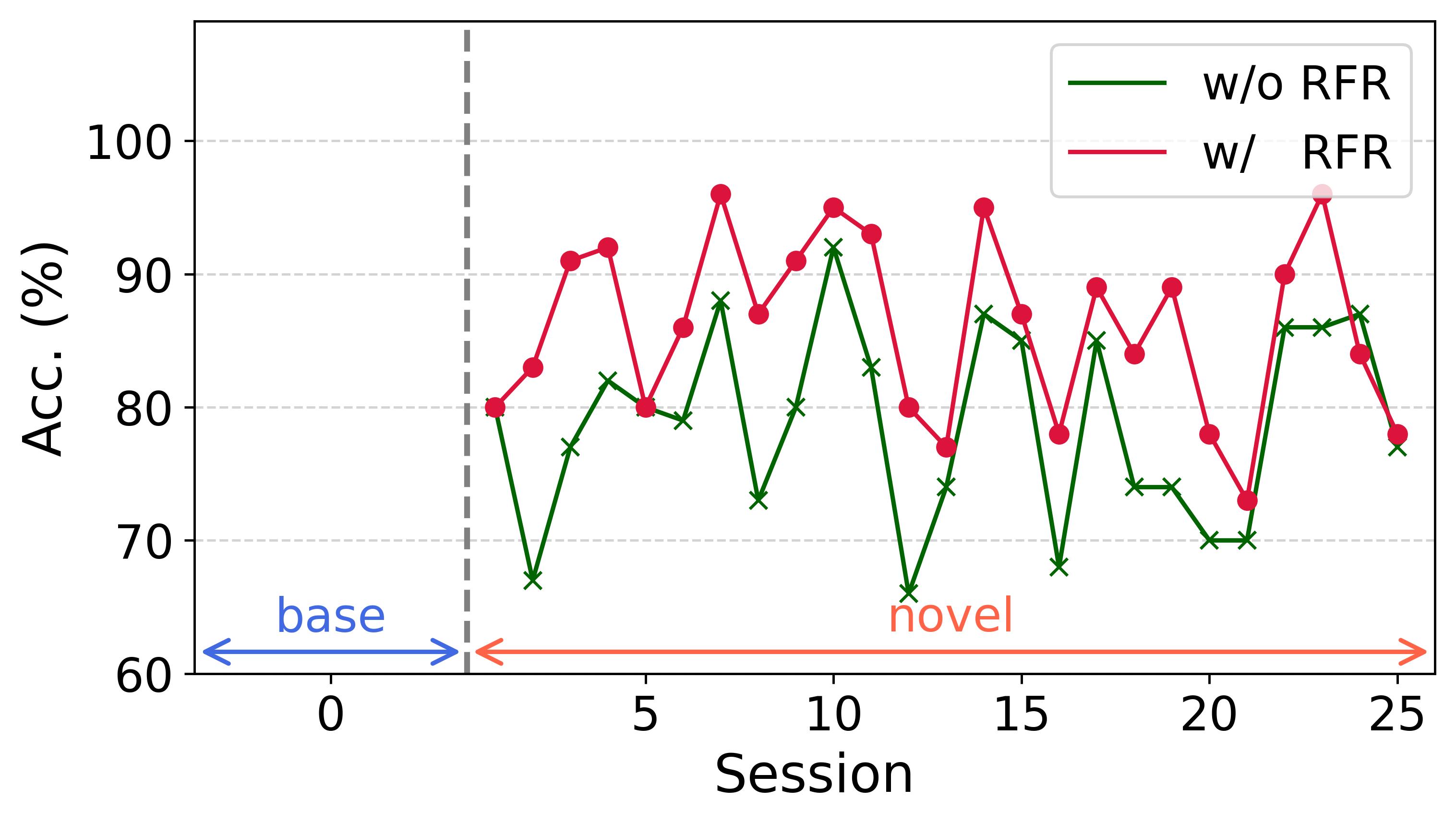}
        \caption{S=2}
    \end{subfigure}
    
    \caption{Improvements in forward compatibility -- \textit{novel-task} accuracy at each session is shown for UCIR. Results were obtained from the same experiment as in Figure~\ref{fig:acc_each_phase}.
    }
    \label{fig:acc_each_novel}
\end{figure*}

\subsection{Settings}
\label{subsec:settings}

\paragraph{Datasets:}
We employ CIFAR-100 and ImageNet-100, two widely adopted benchmark datasets in CIL.
To maintain consistency with prior studies, we follow the standard class orderings proposed in~\citep{rebuffi2017icarl} for all evaluations except for the evaluation of PODNet~\citep{douillard2020podnet} and IL2A~\citep{zhu2021class}, for which we utilize the class orderings defined in PODNet and IL2A, respectively.
To evaluate the CIL methods,
all classes in datasets are divided into multiple tasks.
The initial 50 classes are designated for the base task, while the remaining classes are split into novel tasks, where the size of each split is either 10, 5, or 2. In this study, we denote split sizes of 10, 5, and 2 as S=10, S=5, and S=2, respectively.
To pre-process the datasets, we sequentially apply two simple augmentations: random cropping followed by resizing to the original dimensions (32x32 for the CIFAR dataset and 224x224 for the ImageNet dataset). Additionally, there is a 50\% probability of applying random horizontal flipping. Afterward, we normalize the color channels by subtracting the mean values ((0.4914, 0.4822, 0.4465) for CIFAR and (0.485, 0.456, 0.406) for ImageNet) and dividing by the respective standard deviations ((0.247, 0.243, 0.261) for CIFAR and (0.229, 0.224, 0.225) for ImageNet).

\paragraph{Implementation Details:}
In this study, ResNet-18~\citep{he2016deep} is employed as the base model to investigate.
To reproduce the results of iCaRL~\citep{rebuffi2017icarl}, LwF~\citep{li2017learning}, SI~\citep{zenke2017continual}, EEIL~\citep{castro2018end}, MAS~\citep{aljundi2018memory}, RWalk~\citep{chaudhry2018riemannian}, BiC~\citep{wu2019large}, IL2M~\citep{belouadah2019il2m}, and UCIR~\citep{hou2019learning}, we utilize the open-source codebase provided by FACIL~\citep{masana2022class} and retain their original hyperparameters unchanged.
For PODNet~\citep{douillard2020podnet}, IL2A~\citep{zhu2021class}, and AFC~\citep{kang2022class}, we employ their own open-source codebases and retain their original hyperparameters unchanged.
To incorporate RFR into the aforementioned methods,
we only modify their respective loss functions during the training of the base session, as specified by Eq.~(\ref{eq:loss}). To determine the optimal RFR loss coefficient, $\alpha$, we performed a grid search over $\alpha \in \{0.05, 0.10, 0.15, 0.20\}$.
Meanwhile, the remaining configurations, such as the loss functions applied during the training of novel sessions and the default hyper-parameter settings, remain unchanged.
To evaluate the performance of the models, we adopt the standard metric of \textit{average incremental accuracy} (AIC) proposed in~\citep{rebuffi2017icarl}.

\subsection{Superiority in forward compatibility}
\label{subsec:forward_compatibility}

To investigate forward compatibility improvement, two models are trained for a base task, one with RFR and one without RFR. Then, we compare the novel task performance while keeping the feature extractors frozen and training only the classification heads for the respective novel tasks.
The results are shown in Figure~\ref{fig:acc_freeze_each_phase}, and they demonstrate that our method yields improved novel task AIC performance across all three cases, with enhancements of 2.50\%, 2.63\%, and 2.59\% for split sizes of 10, 5, and 2, respectively.
To delve deeper into this improvement, we analyze the performance of each individual novel task. As shown in Figure~\ref{fig:acc_freeze_each_novel}, our method leads to enhanced performance for the majority of novel tasks. Moreover, the average performance in novel tasks demonstrates significant enhancements, with improvements of 5.93\%, 7.11\%, and 7.77\% for split sizes of 10, 5, and 2, respectively.
These results strongly indicate that our method is effective in promoting forward compatible representations.
Additionally, we conduct a similar analysis, this time without freezing of feature extractors. The results presented in Figure~\ref{fig:acc_each_phase} and Figure~\ref{fig:acc_each_novel} exhibit similar performance enhancements, thereby substantiating superiority of RFR's forward compatibility.

\subsection{Mitigation in catastrophic forgetting}
\label{subsec:forgetting}

\begin{figure*}[htbp]
    \centering
    \begin{subfigure}[b]{0.30\linewidth}
        \includegraphics[width=\linewidth]{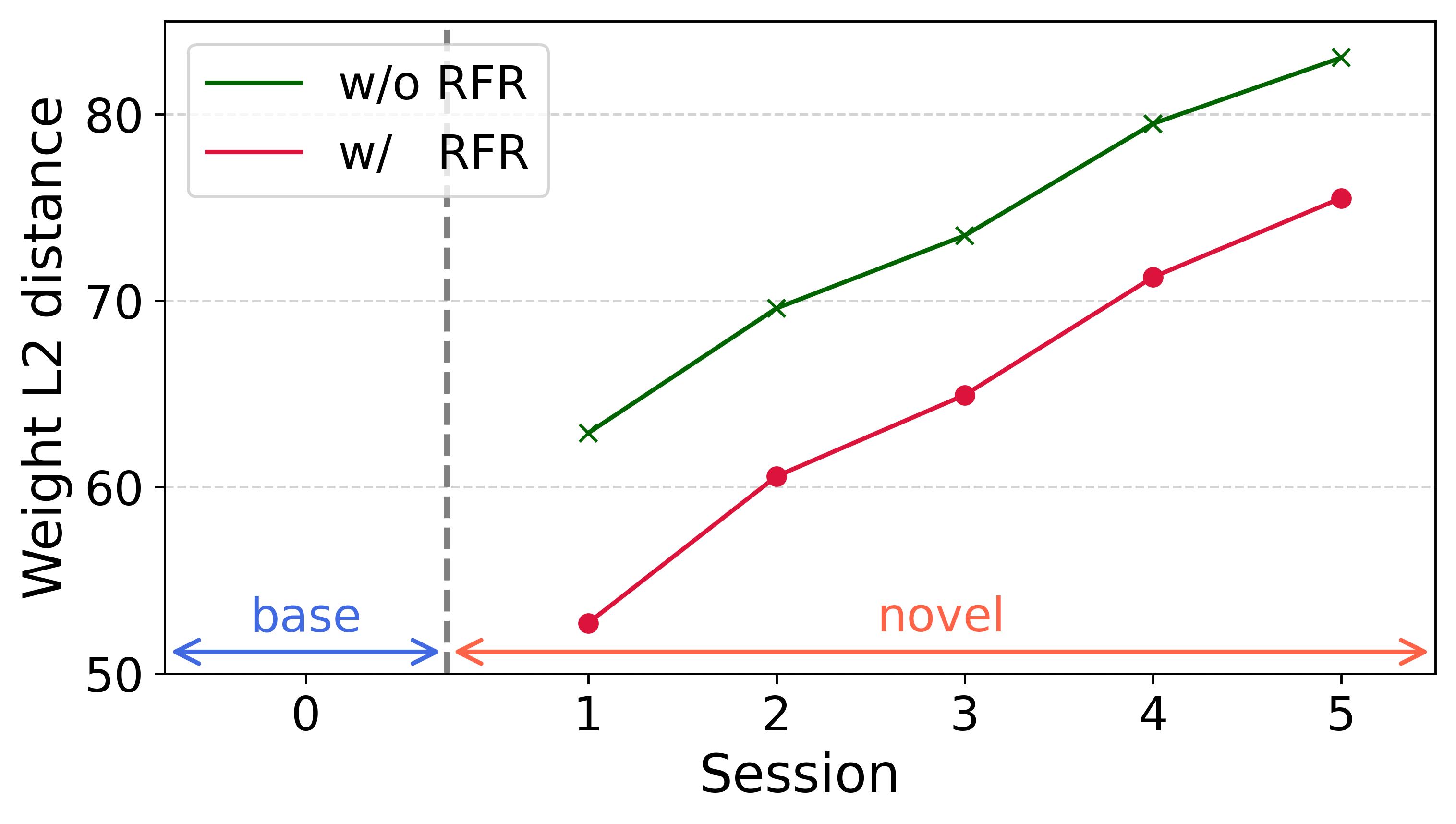}
        \caption{S=10}
    \end{subfigure}
    \begin{subfigure}[b]{0.30\linewidth}
        \includegraphics[width=\linewidth]{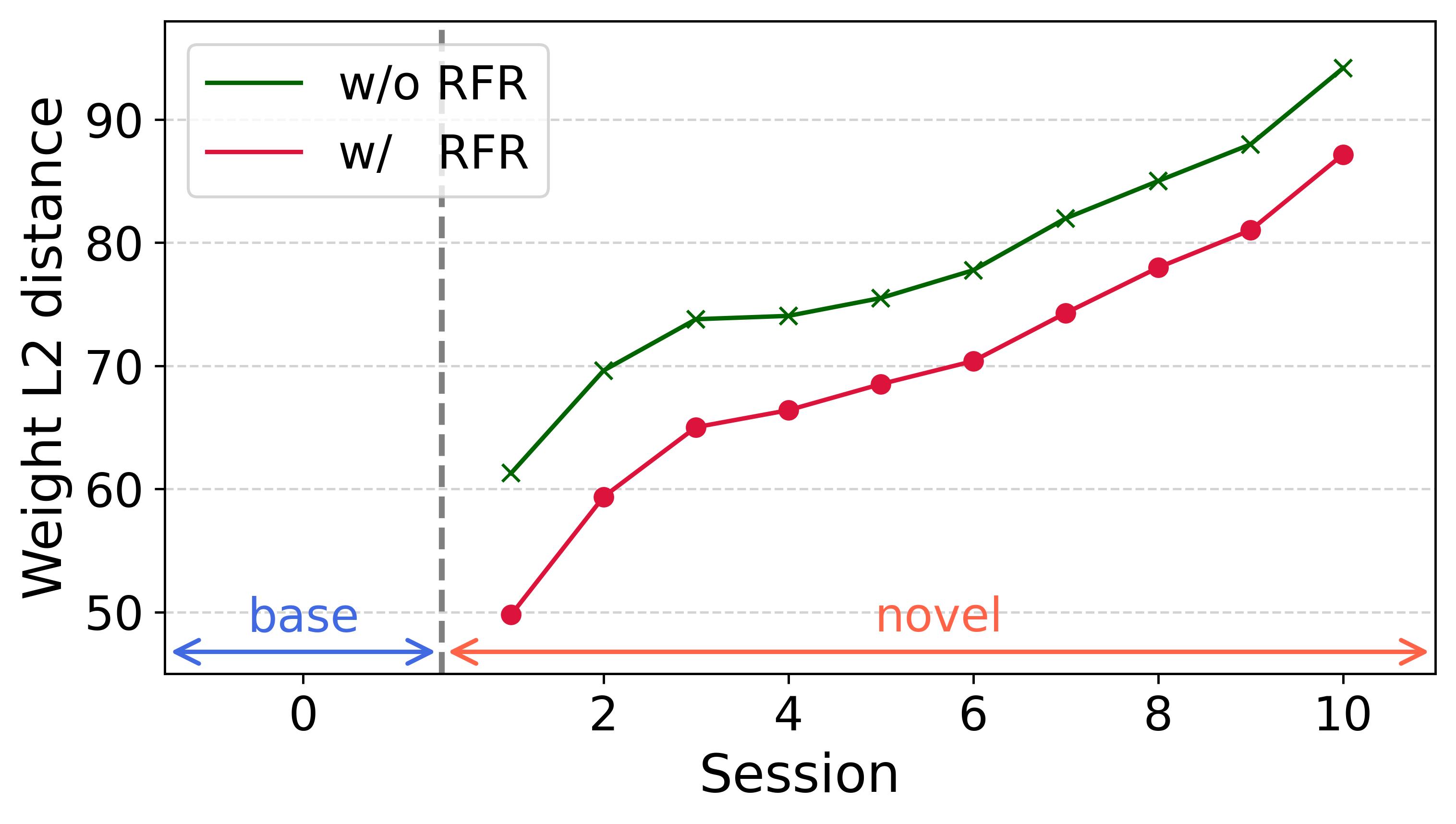}
        \caption{S=5}
    \end{subfigure}
    \begin{subfigure}[b]{0.30\linewidth}
        \includegraphics[width=\linewidth]{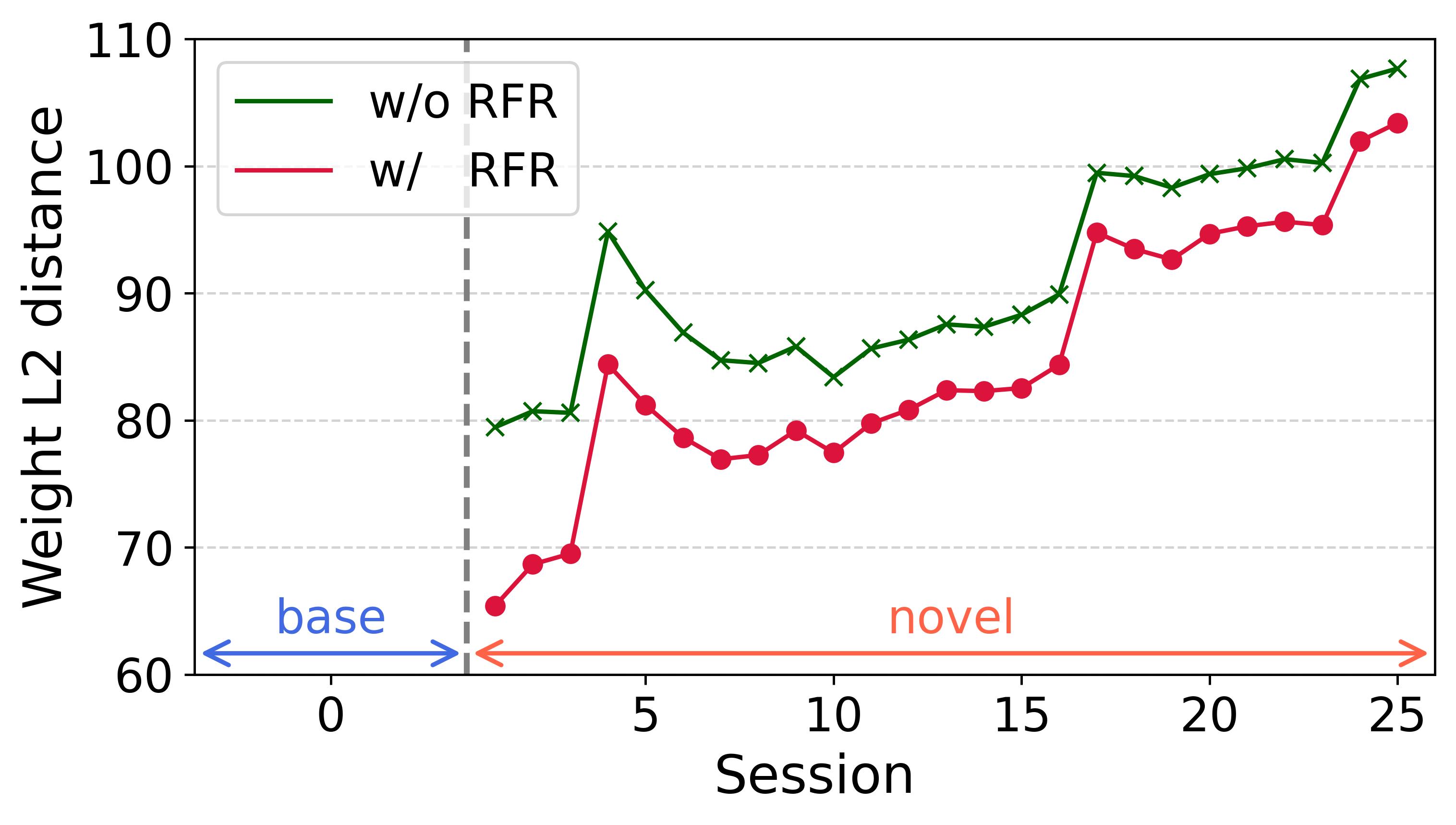}
        \caption{S=2}
    \end{subfigure}
    \caption{Weight change from the base session for UCIR.
    Two ResNet-18 models are trained with and without RFR for ImageNet-100 dataset, utilizing 50 base classes under different split sizes (a) 10, (b) 5, and (c) 2 for each novel session.
    }
    \label{fig:weight_changes_from_base}
    \centering
    \begin{subfigure}[b]{0.30\linewidth}
        \includegraphics[width=\linewidth]{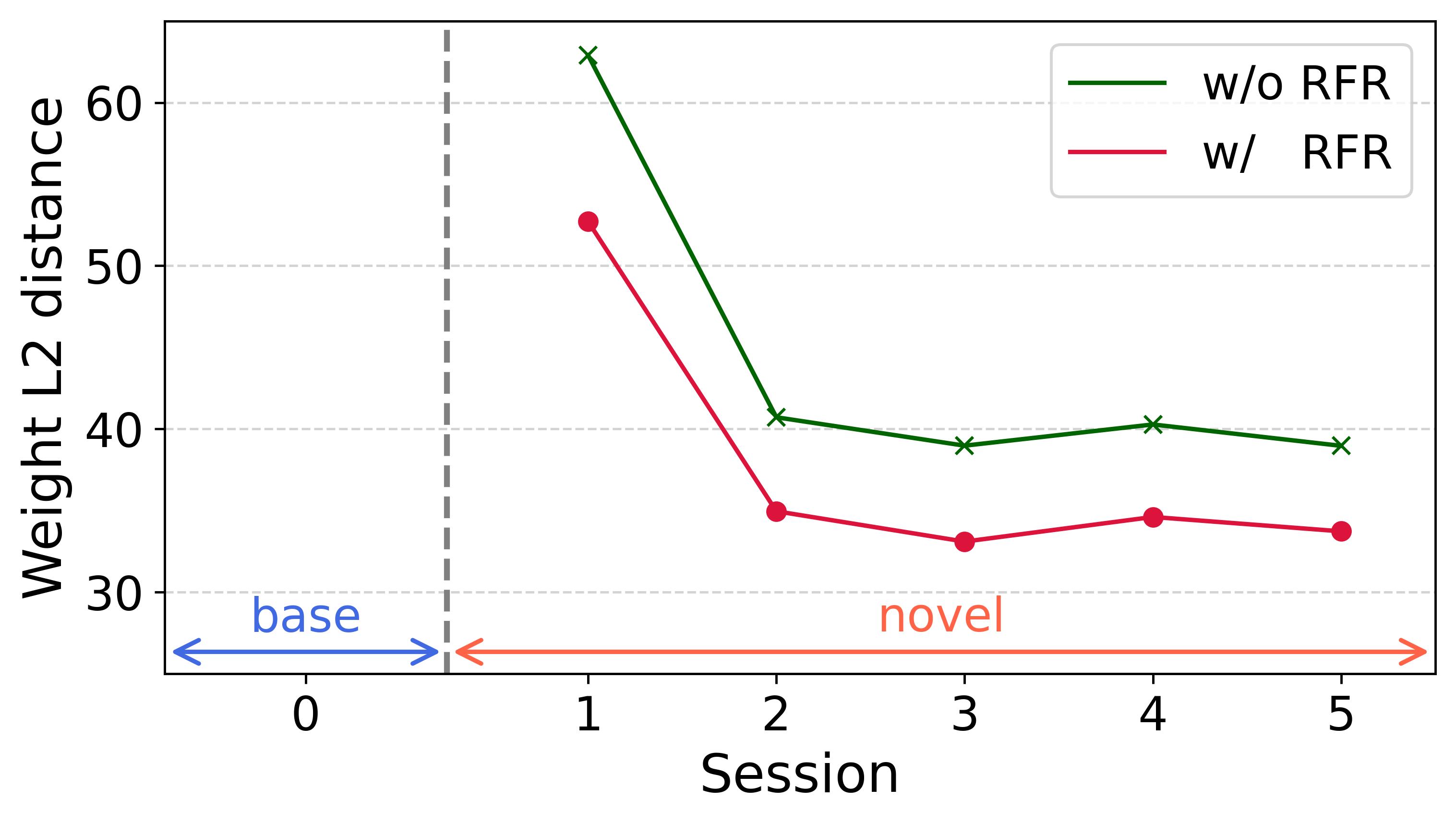}
        \caption{S=10}
    \end{subfigure}
    \begin{subfigure}[b]{0.30\linewidth}
        \includegraphics[width=\linewidth]{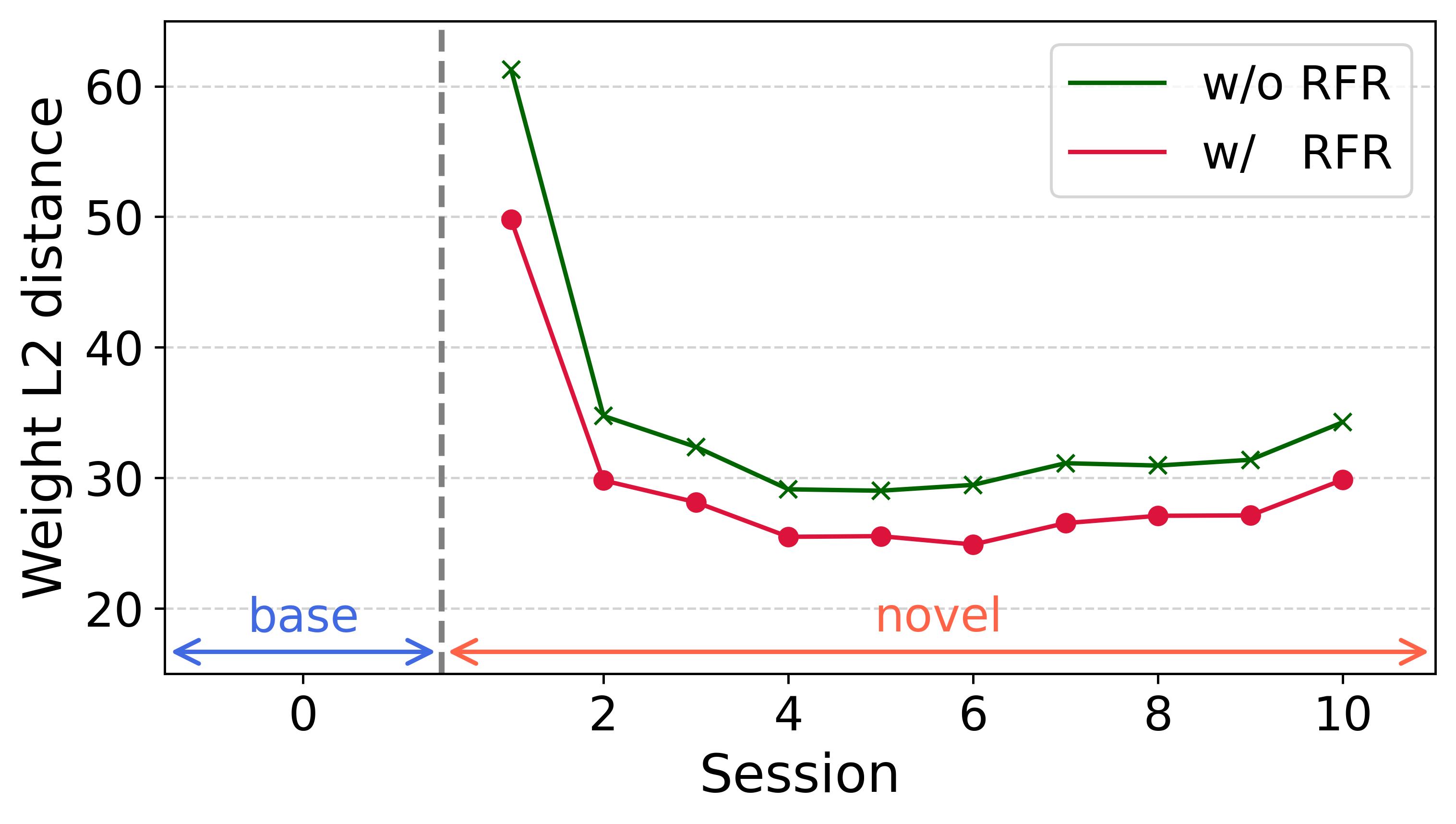}
        \caption{S=5}
    \end{subfigure}
    \begin{subfigure}[b]{0.30\linewidth}
        \includegraphics[width=\linewidth]{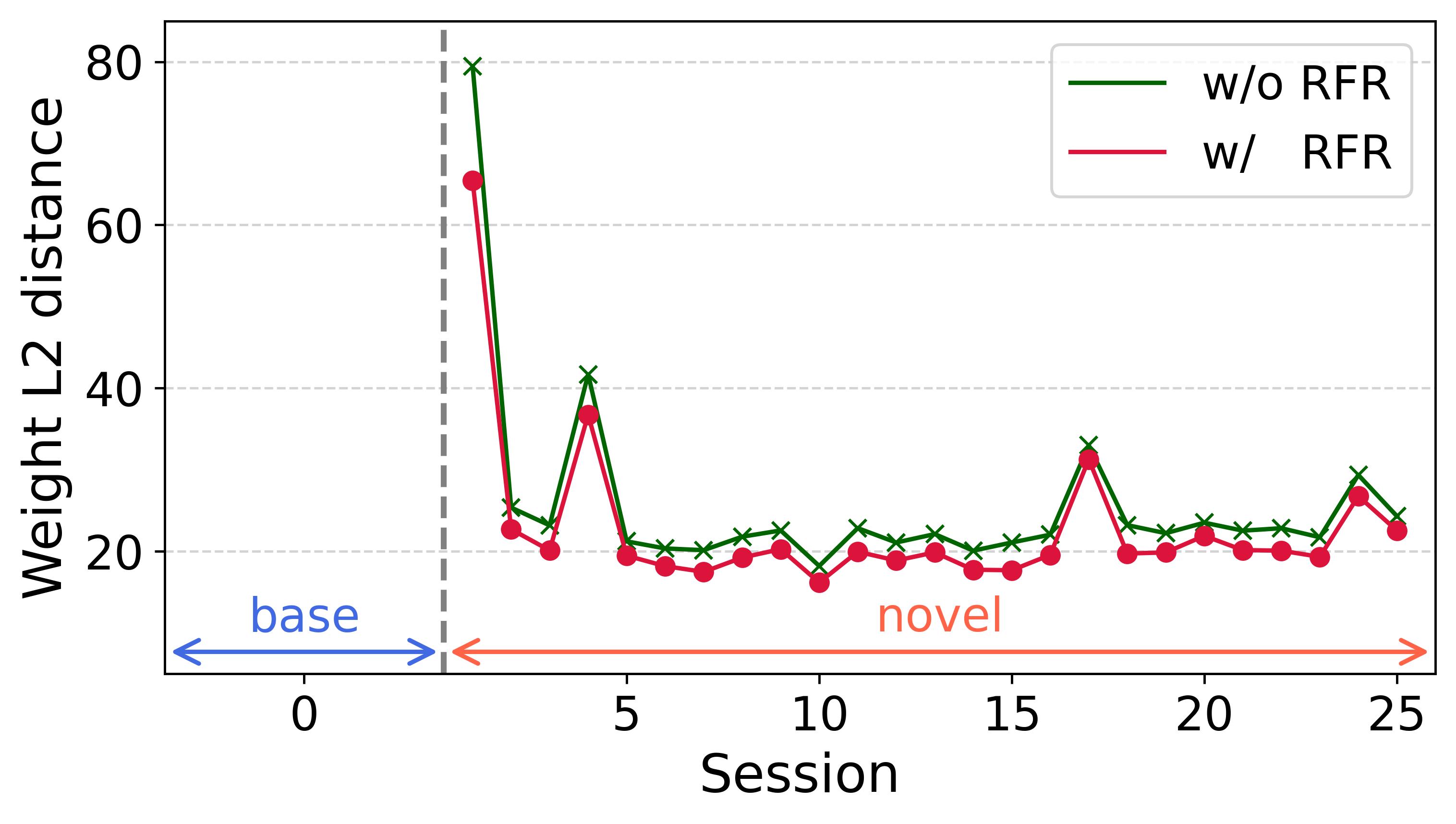}
        \caption{S=2}
    \end{subfigure}
    
    \caption{Weight change from the immediate previous session for UCIR.
    Two ResNet-18 models are trained with and without RFR for ImageNet-100 dataset, utilizing 50 base classes under different split sizes (a) 10, (b) 5, and (c) 2 for each novel session.
    }
    \label{fig:weight_changes_from_prev}
    \centering
    \begin{subfigure}[b]{0.30\linewidth}
        \includegraphics[width=\linewidth]{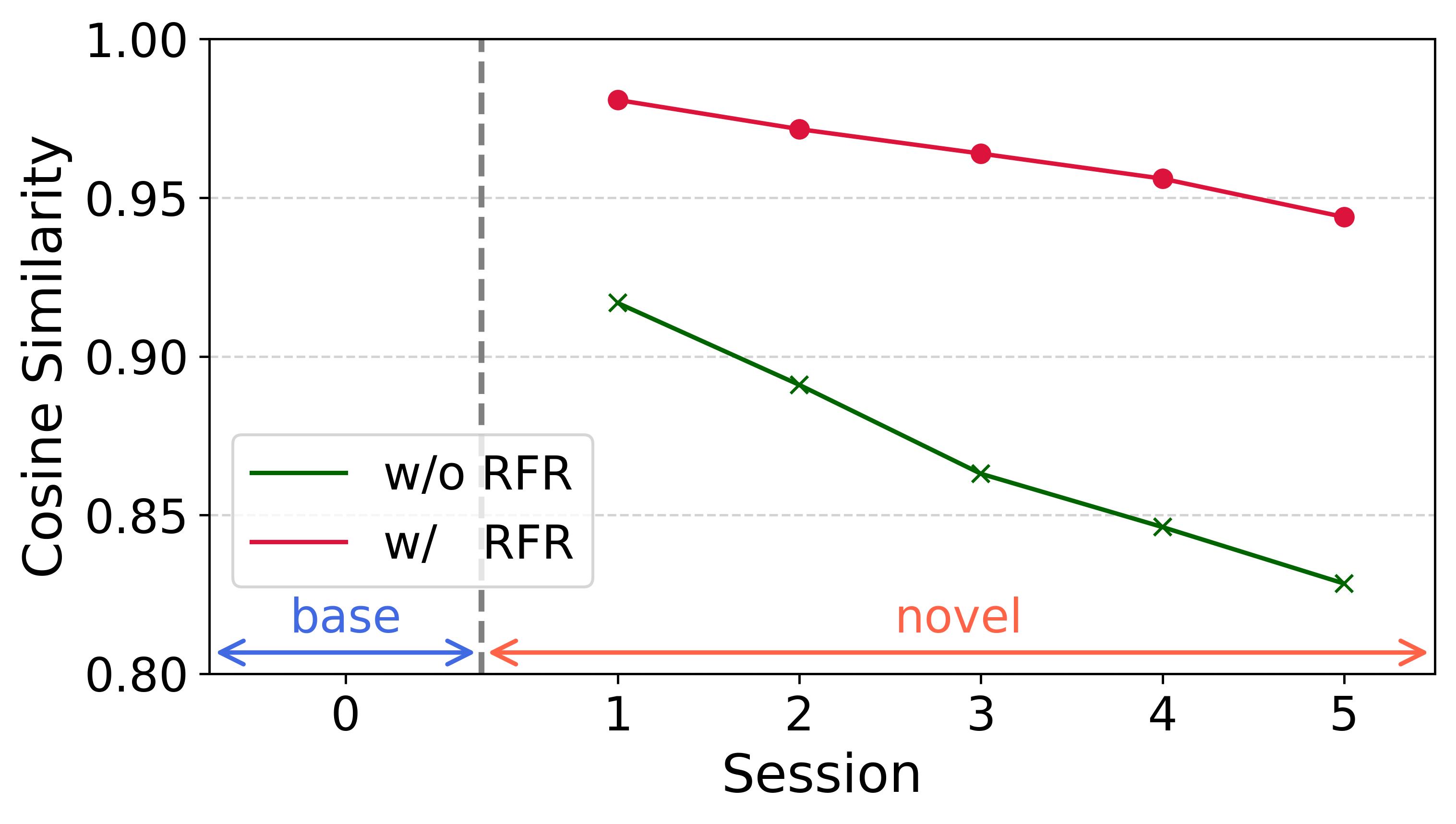}
        \caption{S=10}
    \end{subfigure}
    \begin{subfigure}[b]{0.30\linewidth}
        \includegraphics[width=\linewidth]{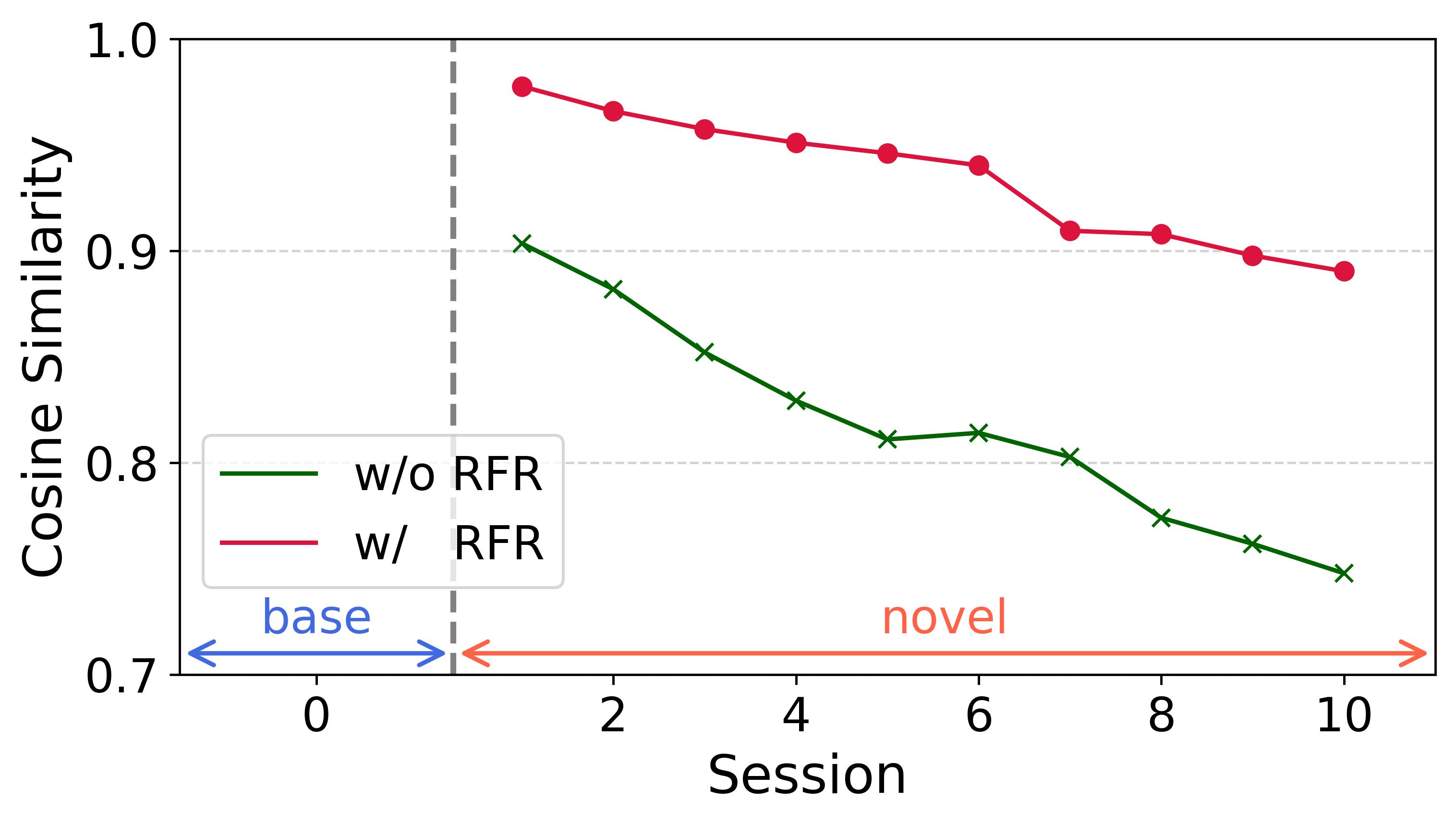}
        \caption{S=5}
    \end{subfigure}
    \begin{subfigure}[b]{0.30\linewidth}
        \includegraphics[width=\linewidth]{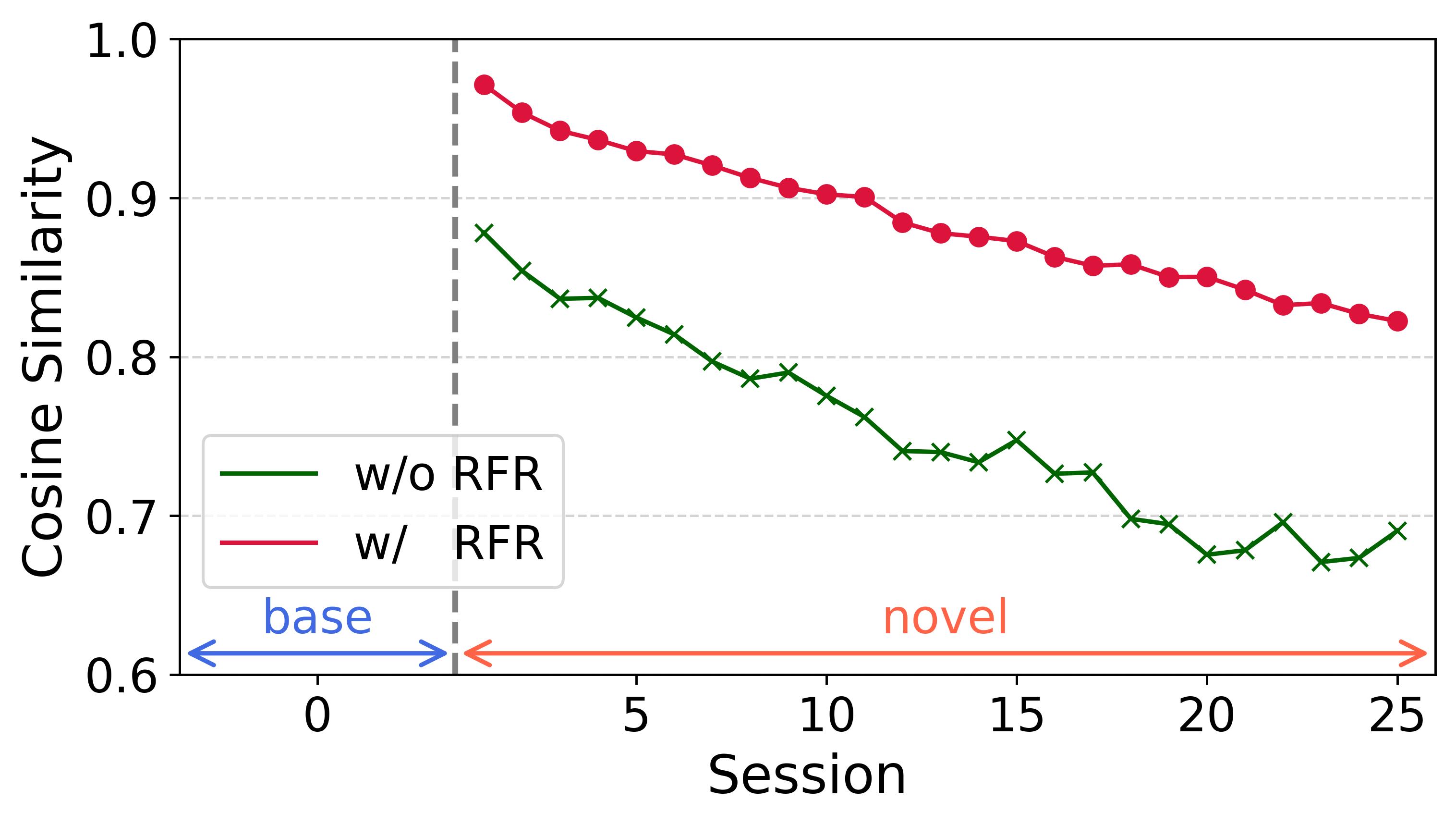}
        \caption{S=2}
    \end{subfigure}
    \caption{Cosine similarity in representation with respect to the base session for UCIR. 
    Two ResNet-18 models are trained with and without RFR for ImageNet-100 dataset, utilizing 50 base classes under different split sizes (a) 10, (b) 5, and (c) 2 for each novel session.}
    \label{fig:similarity_with_base}
    \centering
    \begin{subfigure}[b]{0.30\linewidth}
        \includegraphics[width=\linewidth]{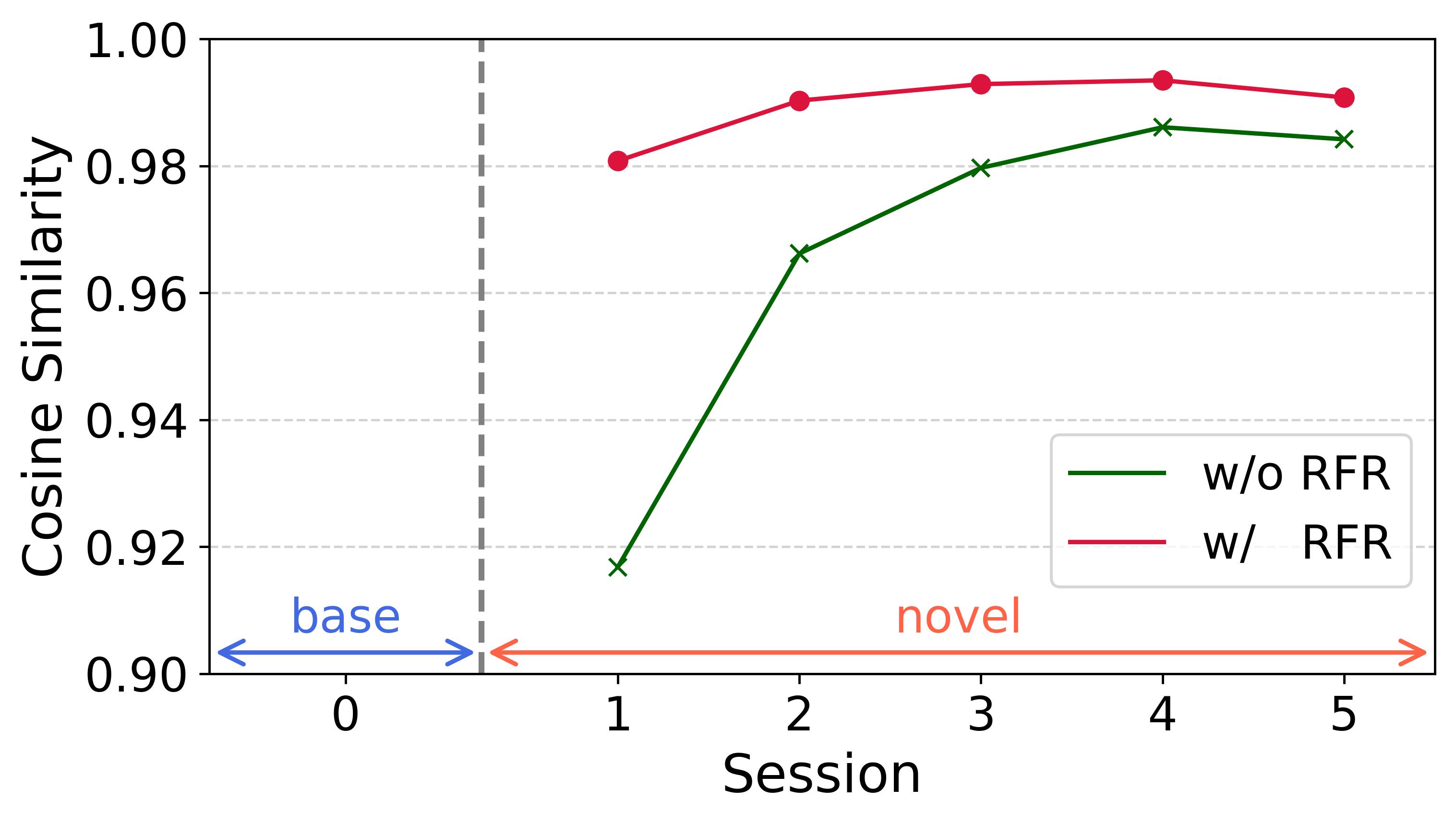}
        \caption{S=10}
    \end{subfigure}
    \begin{subfigure}[b]{0.30\linewidth}
        \includegraphics[width=\linewidth]{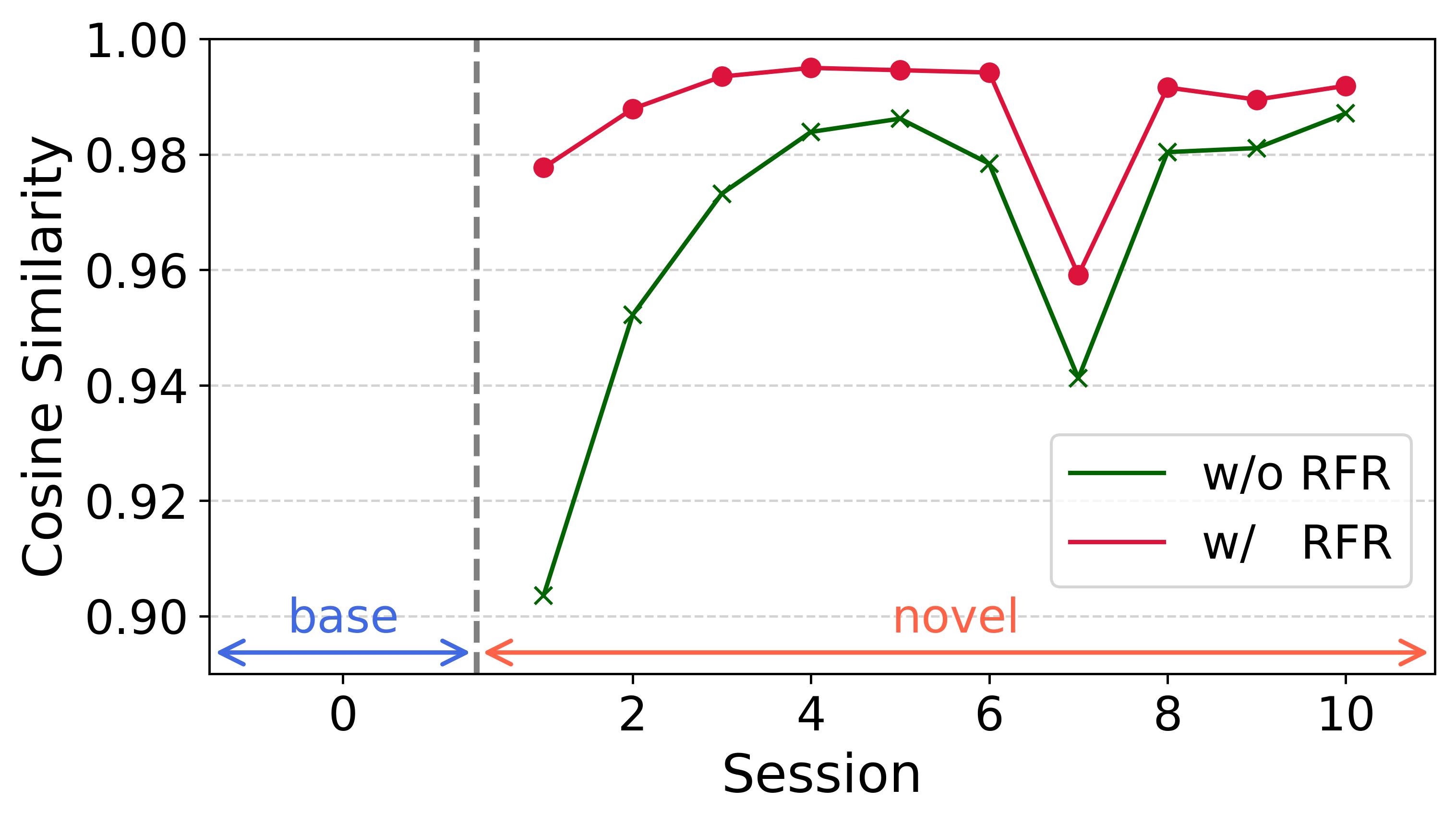}
        \caption{S=5}
    \end{subfigure}
    \begin{subfigure}[b]{0.30\linewidth}
        \includegraphics[width=\linewidth]{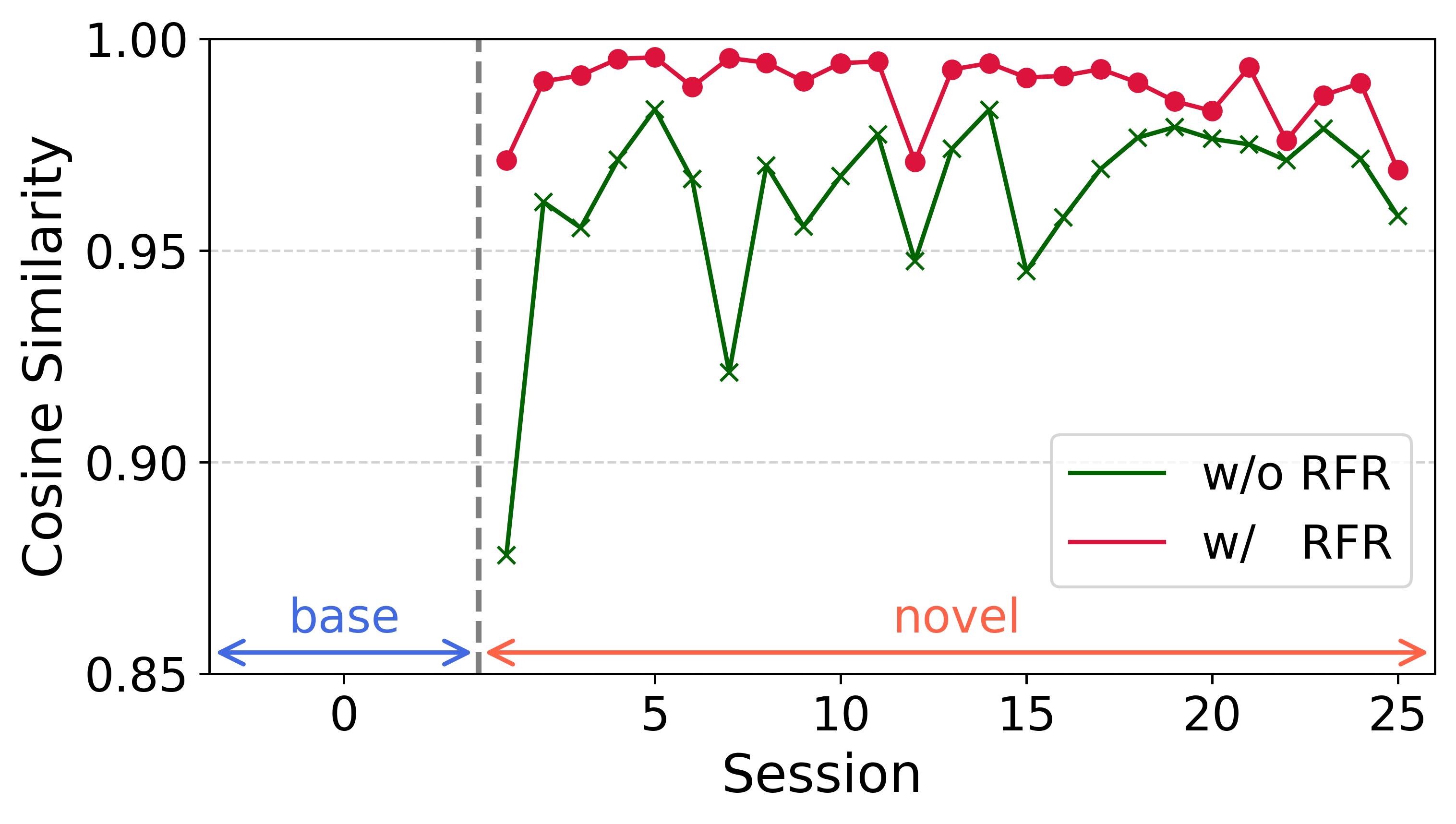}
        \caption{S=2}
    \end{subfigure}
    \caption{Cosine similarity in representation with respect to the immediate previous session for UCIR. 
    Two ResNet-18 models are trained with and without RFR for ImageNet-100 dataset, utilizing 50 base classes under different split sizes (a) 10, (b) 5, and (c) 2 for each novel session.}
    \label{fig:similarity_with_prev}
\end{figure*}

To analyze improvement in catastrophic forgetting, we conduct three investigations.
First, we examine the influence of our method on the weight changes of the feature extractor.
As novel sessions progress, the $L_2$-weight distance between the feature extractor learned in the base session and that in the novel session increases, as demonstrated in Figure~\ref{fig:weight_changes_from_base}.
However, with the integration of RFR, the weight distances are significantly reduced across all sessions.
The average reductions in weight distances are 8.70, 8.13, and 6.79 for split sizes of 10, 5, and 2, respectively.
Additionally, Figure~\ref{fig:weight_changes_from_prev} reveals that the weight distance with the feature extractor from the immediate previous session also increases less when our method is employed, showing average reduction of 6.55, 4.95, and 2.99 for split sizes of 10, 5, and 2, respectively.

Second, we investigate the impact of our method on cosine similarity in representations produced from the validation dataset of the base task.
As shown in Figure~\ref{fig:similarity_with_base}, our method leads to an increase in the similarity between the representations learned in the base session and those in novel sessions. The increase in representation similarity across all sessions is significant, with average increase of 0.09, 0.12, and 0.13 for split sizes of 10, 5, and 2, respectively.
Moreover, Figure~\ref{fig:similarity_with_prev} demonstrates that the similarity with representations from the immediate previous session also increases, with average increase of 0.02, 0.02, and 0.03 for split sizes of 10, 5, and 2, respectively.

\begin{figure*}[ht!]
    \centering
    \begin{subfigure}{0.30\linewidth}
        \includegraphics[width=\linewidth]{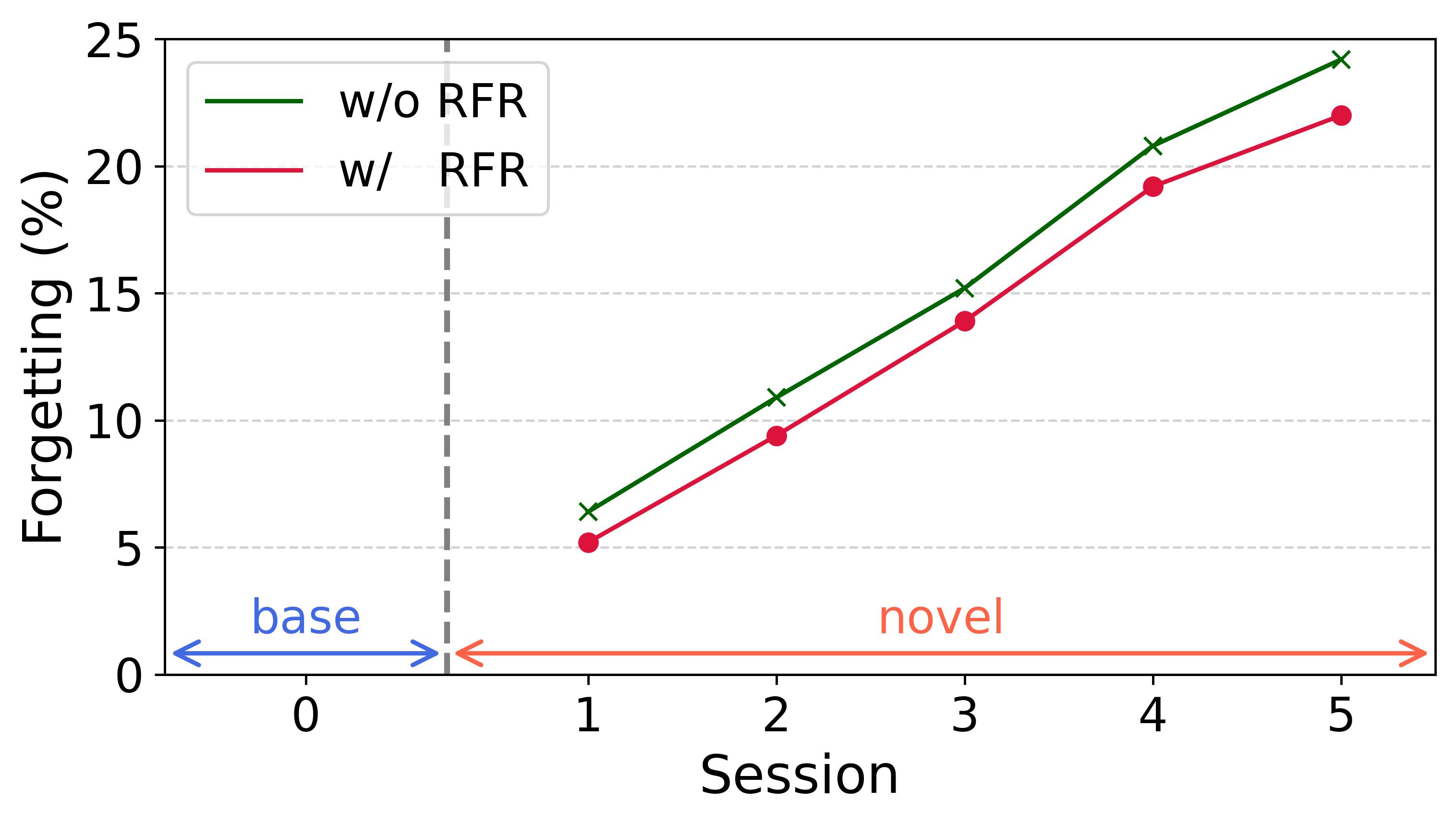}
        \caption{S=10}
    \end{subfigure}
    \begin{subfigure}{0.30\linewidth}
        \includegraphics[width=\linewidth]{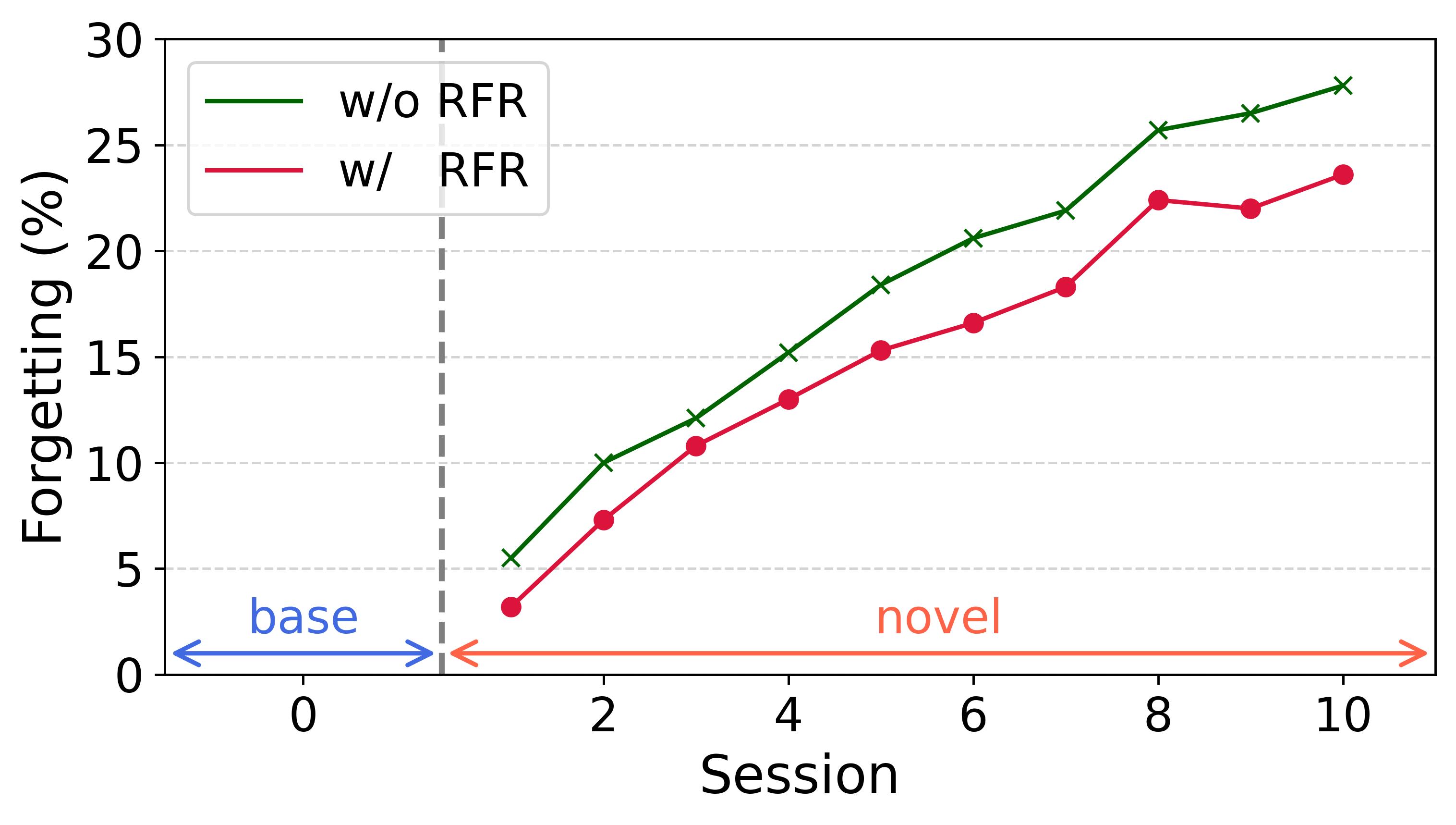}
        \caption{S=5}
    \end{subfigure}
    \begin{subfigure}{0.30\linewidth}
        \includegraphics[width=\linewidth]{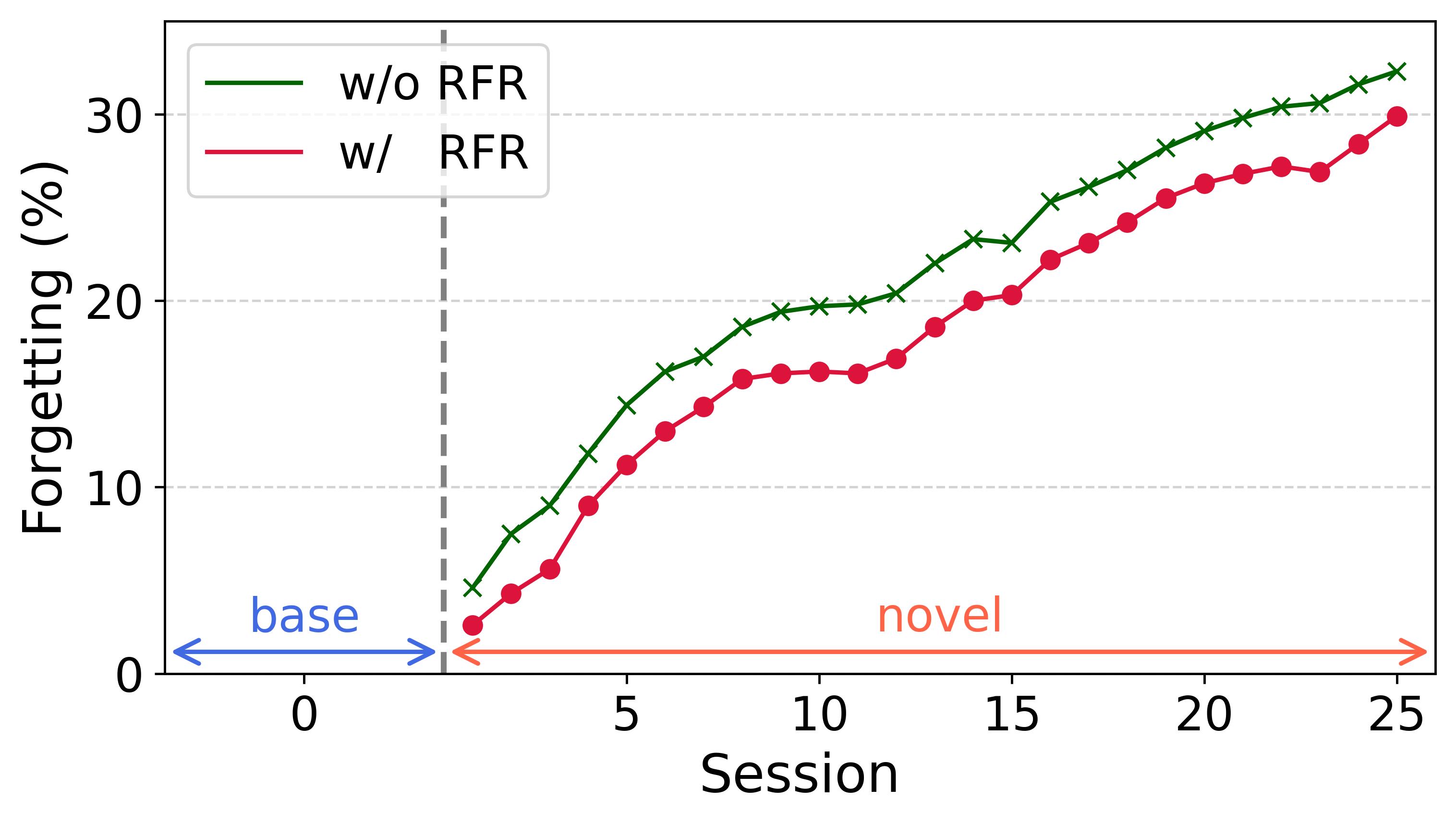}
        \caption{S=2}
    \end{subfigure}
    \caption{Catastrophic forgetting in the base task for UCIR.
    Two ResNet-18 models are trained with and without RFR for ImageNet-100 dataset, utilizing 50 base classes under different split sizes (a) 10, (b) 5, and (c) 2 for each novel session.
    }
    \label{fig:catastrophic_forgetting}
\end{figure*}

Finally, we evaluate the actual impact on catastrophic forgetting in the base task, measured by the performance drop on the base task~\citep{liu2020mnemonics,mittal2021essentials}.
Figure~\ref{fig:catastrophic_forgetting} clearly demonstrates that our method leads to decreased catastrophic forgetting in the base task across all sessions, with an average reduction of 1.30\%, 2.84\%, and 2.90\% for split sizes of 10, 5, and 2, respectively.

\subsection{Consistent improvements over eleven existing methods}
\label{subsec:sota_experiment}

\begin{table*}[ht!]
\caption{\label{tab:sota_experiment1} Performance improvements by RFR. All methods in this table follow the same class orderings as originally proposed in iCaRL~\citep{rebuffi2017icarl}.
}
\centering
\begin{tabular}{@{}lrrrrrr@{}}
\toprule
Method        & \multicolumn{3}{c}{CIFAR100 (B=50)}                                                               & \multicolumn{3}{c}{ImageNet-100 (B=50)}                                                           \\ \cmidrule(lr){2-4} \cmidrule(l){5-7}
              & S=10       & S=5        & S=2        & S=10       & S=5        & S=2        \\ \midrule
iCaRL~\citep{rebuffi2017icarl}         & 49.83$_{\pm2.33}$               & 46.86$_{\pm2.38}$              & 44.69$_{\pm1.97}$              & 53.77$_{\pm1.68}$               & 48.41$_{\pm8.70}$              & 49.78$_{\pm3.65}$              \\
\ \ with RFR      & 50.56$_{\pm1.47}$               & 48.48$_{\pm1.17}$              & 44.99$_{\pm0.97}$              & 57.17$_{\pm1.66}$               & 55.62$_{\pm0.71}$              & 54.40$_{\pm1.63}$              \\ \cmidrule(l){2-7} 
Improvement         & +0.72                            & +1.62                           & +0.3                            & +3.4                             & +7.21                           & +4.61                           \\ \midrule
LwF~\citep{li2017learning}           & 39.25$_{\pm2.04}$               & 26.53$_{\pm1.87}$              & 33.41$_{\pm2.06}$              & 54.26$_{\pm1.32}$               & 50.30$_{\pm1.96}$              & 44.52$_{\pm1.70}$              \\
\ \ with RFR      & 40.98$_{\pm0.34}$               & 28.82$_{\pm4.17}$              & 34.46$_{\pm0.63}$              & 56.48$_{\pm1.22}$               & 52.52$_{\pm0.35}$              & 47.57$_{\pm1.02}$              \\ \cmidrule(l){2-7} 
Improvement         & +1.73                            & +2.29                           & +1.06                           & +2.22                            & +2.21                           & +3.05                           \\ \midrule
SI~\citep{zenke2017continual} & 37.90$_{\pm2.39}$               & 22.88$_{\pm1.41}$              & 31.01$_{\pm3.24}$              & 51.89$_{\pm1.70}$               & 49.25$_{\pm2.19}$              & 44.16$_{\pm1.64}$              \\
\ \ with RFR      & 39.41$_{\pm0.60}$               & 24.02$_{\pm0.38}$              & 33.48$_{\pm0.80}$              & 54.99$_{\pm1.08}$               & 51.98$_{\pm0.89}$              & 47.46$_{\pm1.09}$              \\ \cmidrule(l){2-7} 
Improvement         & +1.51                            & +1.13                           & +2.47                           & +3.1                             & +2.73                           & +3.29                           \\ \midrule
EEIL~\citep{castro2018end}          & 37.81$_{\pm2.55}$               & 24.48$_{\pm1.18}$              & 32.37$_{\pm2.52}$              & 52.04$_{\pm1.90}$               & 48.65$_{\pm2.34}$              & 43.95$_{\pm1.65}$              \\
\ \ with RFR      & 39.52$_{\pm0.38}$               & 25.79$_{\pm1.18}$              & 33.46$_{\pm0.69}$              & 54.74$_{\pm0.91}$               & 51.64$_{\pm0.80}$              & 47.89$_{\pm0.60}$              \\ \cmidrule(l){2-7} 
Improvement         & +1.72                            & +1.31                           & +1.09                           & +2.71                            & +2.98                           & +3.94                           \\ \midrule
MAS~\citep{aljundi2018memory}           & 38.02$_{\pm2.49}$               & 26.45$_{\pm1.18}$              & 31.66$_{\pm3.20}$              & 52.17$_{\pm1.75}$               & 50.01$_{\pm2.27}$              & 46.61$_{\pm1.15}$              \\
\ \ with RFR      & 39.54$_{\pm0.71}$               & 28.80$_{\pm4.52}$              & 33.36$_{\pm0.52}$              & 55.04$_{\pm1.30}$               & 52.77$_{\pm1.02}$              & 50.32$_{\pm1.38}$              \\ \cmidrule(l){2-7} 
Improvement         & +1.53                            & +2.35                           & +1.71                           & +2.88                            & +2.76                           & +3.71                           \\ \midrule
RWalk~\citep{chaudhry2018riemannian}         & 35.75$_{\pm1.63}$               & 23.34$_{\pm2.06}$              & 27.02$_{\pm5.36}$              & 41.08$_{\pm1.91}$               & 21.44$_{\pm5.69}$              & 20.81$_{\pm2.12}$              \\
\ \ with RFR      & 39.00$_{\pm1.15}$               & 24.44$_{\pm1.06}$              & 32.85$_{\pm1.40}$              & 44.83$_{\pm1.87}$               & 35.42$_{\pm1.71}$              & 22.68$_{\pm4.70}$              \\ \cmidrule(l){2-7} 
Improvement         & +3.25                            & +1.1                            & +5.82                           & +3.75                            & +13.98                          & +1.88                           \\ \midrule
BiC~\citep{wu2019large}           & 54.36$_{\pm1.23}$               & 43.42$_{\pm2.70}$              & 26.73$_{\pm1.89}$              & 61.00$_{\pm4.69}$               & 53.12$_{\pm5.68}$              & 25.44$_{\pm1.74}$              \\
\ \ with RFR      & 58.06$_{\pm0.73}$               & 48.02$_{\pm2.33}$              & 27.74$_{\pm3.11}$              & 64.18$_{\pm1.94}$               & 53.77$_{\pm3.05}$              & 27.56$_{\pm2.72}$              \\ \cmidrule(l){2-7} 
Improvement         & +3.71                            & +4.60                           & +1.01                           & +3.18                            & +0.65                           & +2.13                           \\ \midrule

IL2M~\citep{belouadah2019il2m}          & 40.85$_{\pm2.22}$               & 24.92$_{\pm1.12}$              & 33.41$_{\pm2.12}$              & 56.78$_{\pm1.34}$               & 52.35$_{\pm2.16}$              & 47.10$_{\pm1.85}$              \\
\ \ with RFR      & 42.54$_{\pm0.49}$               & 25.19$_{\pm0.41}$              & 35.20$_{\pm0.78}$              & 58.40$_{\pm1.04}$               & 55.25$_{\pm0.78}$              & 50.09$_{\pm0.83}$              \\ \cmidrule(l){2-7} 
Improvement         & +1.69                            & +0.27                           & +1.78                           & +1.62                            & +2.9                            & +2.99                           \\ \midrule
UCIR~\citep{hou2019learning}          & 66.30$_{\pm0.36}$               & 60.57$_{\pm0.56}$              & 52.74$_{\pm0.72}$              & 70.57$_{\pm0.51}$               & 67.62$_{\pm0.37}$              & 63.22$_{\pm0.42}$              \\
\ \ with RFR      & 69.45$_{\pm0.29}$               & 66.16$_{\pm0.10}$              & 61.23$_{\pm0.13}$              & 71.65$_{\pm0.52}$               & 69.52$_{\pm0.13}$              & 65.46$_{\pm0.57}$              \\ \cmidrule(l){2-7} 
Improvement         & +3.14                            & +5.59                           & +8.49                           & +1.08                            & +1.90                           & +2.24                           \\ \midrule

AFC~\citep{kang2022class}          & 70.35$_{\pm0.29}$               & 67.02$_{\pm0.24}$              & 65.05$_{\pm0.27}$              & 77.10$_{\pm0.03}$               & 75.27$_{\pm0.18}$              & 72.90$_{\pm0.44}$              \\
\ \ with RFR      & 70.85$_{\pm0.21}$               & 67.82$_{\pm0.37}$              & 65.53$_{\pm0.24}$              & 77.19$_{\pm0.08}$               & 75.47$_{\pm0.23}$              & 72.99$_{\pm0.04}$              \\ \cmidrule(l){2-7} 
Improvement         & +0.50                            & +0.80                           & +0.48                           & +0.09                            & +0.20                           & +0.09                           \\ \midrule

Average Improvement & +1.95   & +2.11 & +2.42 & +2.40 & +3.75 & +2.79 \\ \bottomrule
\end{tabular}
\end{table*}

\begin{figure*}[ht!]
    \centering
    \begin{subfigure}{0.30\linewidth}
        \includegraphics[width=\linewidth]{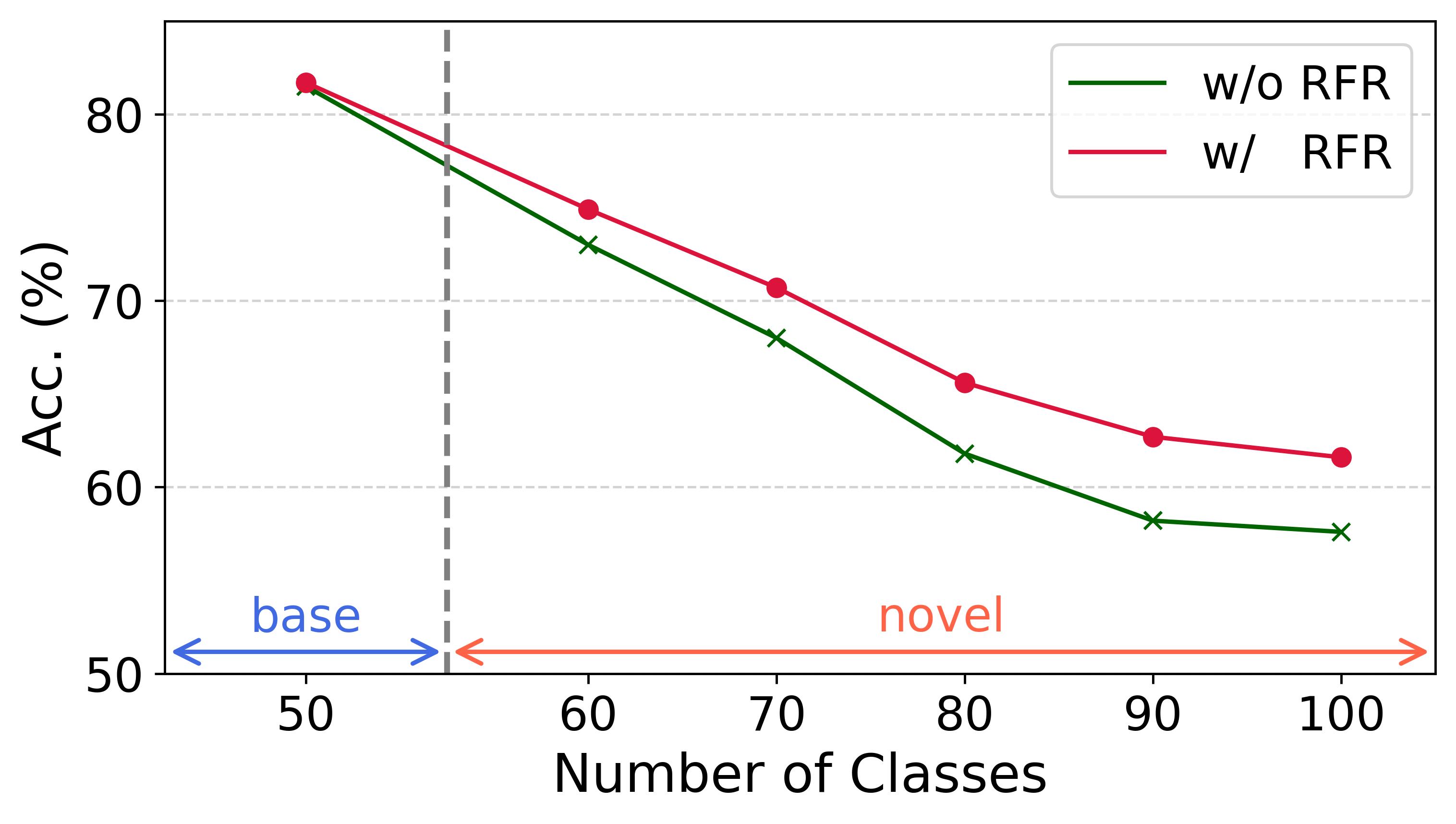}
        \caption{S=10}
    \end{subfigure}
    \begin{subfigure}{0.30\linewidth}
        \includegraphics[width=\linewidth]{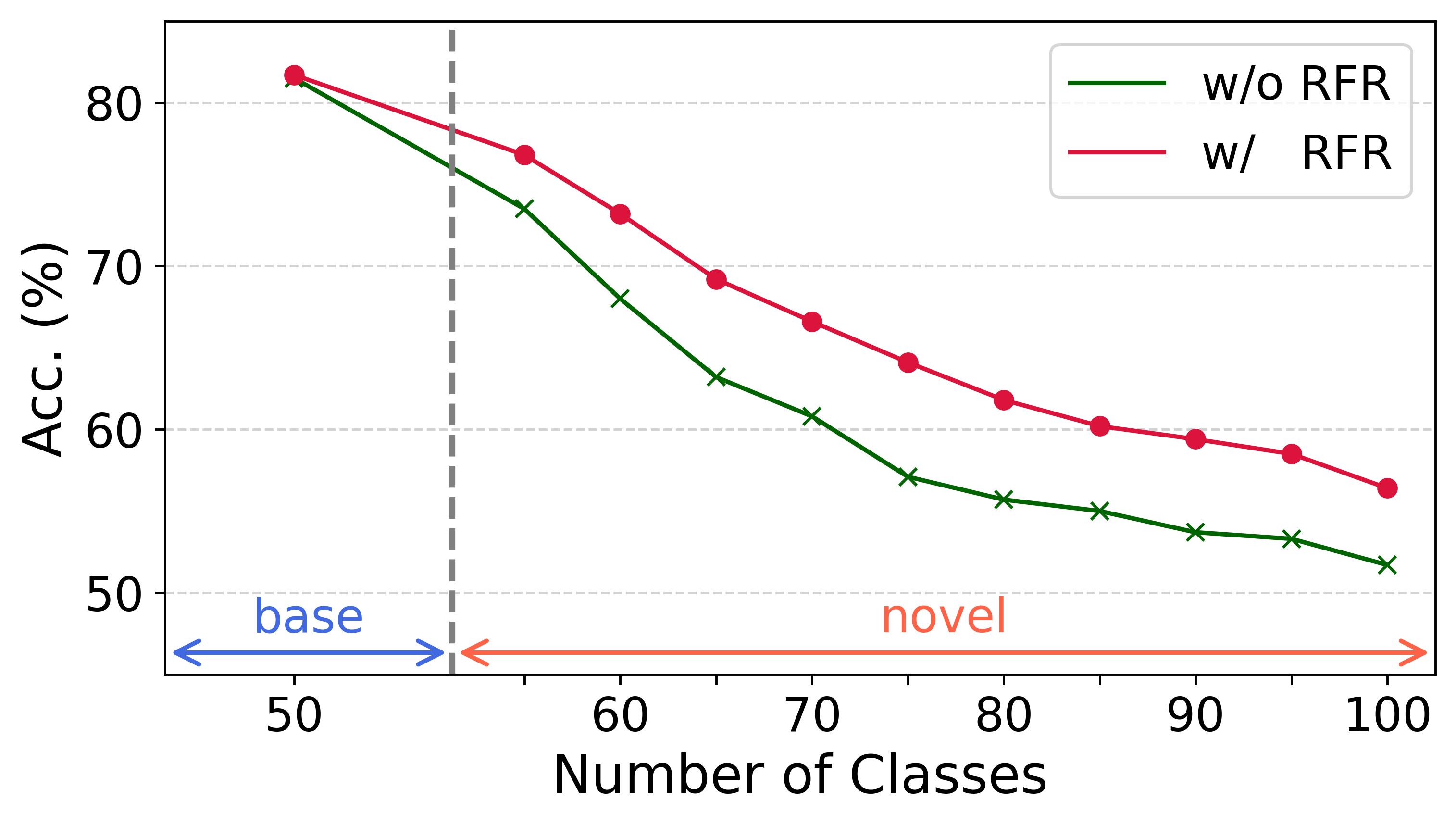}
        \caption{S=5}
    \end{subfigure}
    \begin{subfigure}{0.30\linewidth}
        \includegraphics[width=\linewidth]{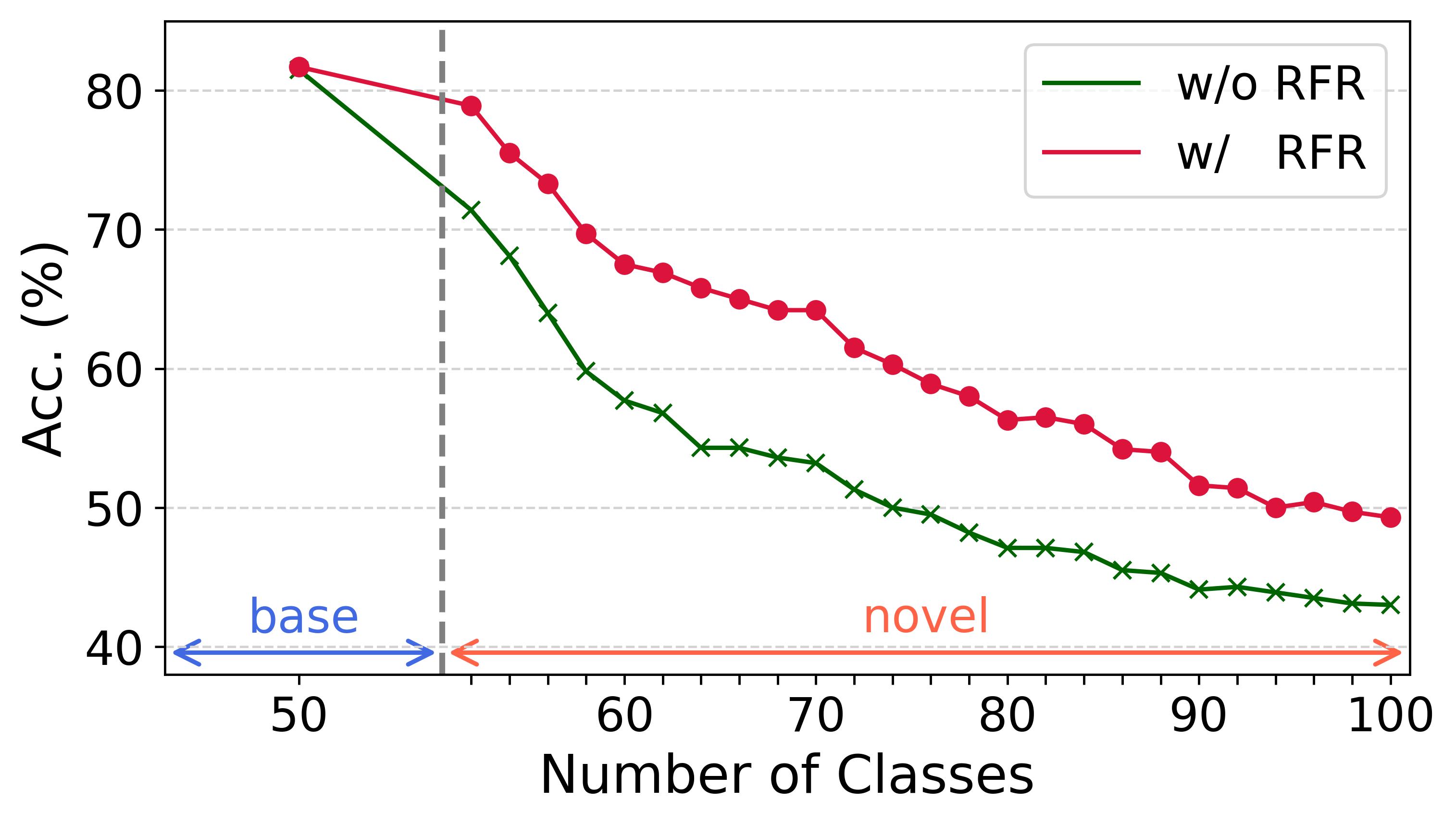}
        \caption{S=2}
    \end{subfigure}
    \caption{Overall accuracy at each session for UCIR. 
    Two ResNet-18 models are trained with and without RFR for \textit{CIFAR-100 dataset}, utilizing 50 base classes under different split sizes (a) 10, (b) 5, and (c) 2 for each novel session.
    }
    \label{fig:acc_each_phase_cifar}
\end{figure*}

\begin{table*}[ht!]
\caption{\label{tab:sota_experiment2} 
Performance improvements by RFR for PODNet. Unlike the other works, PODNet used its own class orderings for evaluation. As an effort to make comparisons as fair as possible, we followed PODNet's class orderings for this table.
}
\centering
\begin{tabular}{@{}lrrrrrr@{}}
\toprule
Method      & \multicolumn{3}{c}{CIFAR100 (B=50)}                                                               & \multicolumn{3}{c}{ImageNet-100 (B=50)}                                                           \\ \cmidrule(lr){2-4} \cmidrule(l){5-7}
            & S=10       & S=5        & S=2        & S=10       & S=5        & S=2        \\ \midrule
PODNet(NME)~\citep{douillard2020podnet} & 68.47$_{\pm1.27}$               & 67.09$_{\pm1.19}$              & 65.16$_{\pm1.02}$              & 60.42$_{\pm0.53}$               & 50.93$_{\pm0.73}$              & 36.36$_{\pm0.30}$              \\
\ \ with RFR    & 69.51$_{\pm0.56}$               & 67.66$_{\pm0.63}$              & 65.59$_{\pm0.48}$              & 64.08$_{\pm0.45}$               & 56.34$_{\pm0.75}$              & 41.03$_{\pm2.15}$              \\ \cmidrule(l){2-7} 
Improvement       & +1.04                            & +0.57                           & +0.43                           & +3.66                            & +5.42                           & +4.67                           \\ \midrule
PODNet(CNN) & 66.11$_{\pm0.63}$               & 63.49$_{\pm0.45}$              & 59.61$_{\pm0.53}$              & 73.76$_{\pm0.17}$               & 68.32$_{\pm0.27}$              & 61.82$_{\pm0.68}$              \\
\ \ with RFR    & 67.13$_{\pm0.86}$               & 64.72$_{\pm1.01}$              & 61.49$_{\pm0.71}$              & 74.42$_{\pm0.27}$               & 69.66$_{\pm0.09}$              & 63.11$_{\pm0.22}$              \\ \cmidrule(l){2-7} 
Improvement       & +1.02                            & +1.23                           & +1.89                           & +0.66                            & +1.34                           & +1.29                          \\ \midrule
Average Improvement & +1.03   & +0.90 & +1.16 & +2.16 & +3.38 & +2.98 \\ \bottomrule
\end{tabular}
\end{table*}

To demonstrate the efficacy of RFR in enhancing performance, we conduct comprehensive experiments over eleven well-known existing works, all utilizing a single backbone. Notably, dynamic network methods~\citep{yan2021dynamically,douillard2022dytox,wang2022foster,wang2022beef} were excluded, as they generally use separate backbones for each task. This design inherently makes them more resistant to catastrophic forgetting~\citep{zhou2024class}, thereby reducing their reliance on forward compatibility. However, this advantage comes at the cost of increased computational demands with each new session.

The results presented in Table~\ref{tab:sota_experiment1} and Table~\ref{tab:sota_experiment2} provide compelling evidence of the significant and consistent performance improvements achieved through the integration of RFR into the previous works. Specifically, we observe notable average performance improvements for the state-of-the-art works, including a 3.74\% average improvement for UCIR, a 2.63\% average improvement for PODNet (NME), a 1.24\% average improvement for PODNet (CNN), and a 0.36\% average improvement for AFC.
Particularly, a remarkable performance improvement is observed in the case of UCIR for CIFAR-100 and its further analysis is shown in Figure~\ref{fig:acc_each_phase_cifar}. The achieved improvements are 3.14\% increase for S=10, 5.59\% for S=5, and 8.49\% for S=2.

\begin{table}[h!]
\caption{\label{tab:exemplar_free} 
Performance improvements by RFR for a non-exemplar based Class-IL method, IL2A~\citep{zhu2021class}. All results are trained using CIFAR-100 with its own class orderings.
}
\centering
\begin{tabular}{@{}lrrr@{}}
\toprule
Method          & S=10       & S=5        & S=2        \\ \midrule
IL2A            & 65.53$_{\pm0.46}$ & 58.58$_{\pm0.76}$ & 54.31$_{\pm0.54}$ \\
\ \ with RFR    & 66.07$_{\pm0.33}$ & 60.06$_{\pm0.48}$ & 55.76$_{\pm0.12}$ \\ \cmidrule(l){2-4} 
Improvement     & +0.54      & +1.48      & +1.45      \\ \bottomrule
\end{tabular}
\end{table}

\begin{table}[h!]
\caption{\label{tab:imagenet1k} 
Performance improvements by RFR with larger-scale dataset, ImageNet-1000.
}
\centering
\begin{tabular}{@{}lrr@{}}
\toprule
Method       & \multicolumn{2}{c}{ImageNet-1000 (B=500)}   \\ \cmidrule(l){2-3} 
             & S=100                 & S=50                \\ \midrule
UCIR         & 65.38$_{\pm0.04}$     & 63.44$_{\pm0.11}$   \\
\ \ with RFR & 65.69$_{\pm0.10}$     & 63.64$_{\pm0.25}$   \\ \cmidrule(l){2-3} 
Improvement  & +0.31                 & +0.20               \\ \bottomrule
\end{tabular}
\end{table}

\begin{table}[h!]
\caption{\label{tab:resnet50} 
Performance improvements by RFR with larger-scale models, ResNet-50 and ViT-Small.
}
\centering
\resizebox{\linewidth}{!}{
\begin{tabular}{@{}llrrr@{}}
\toprule
Model       & Method       & S=10               & S=5               & S=2               \\ \midrule
ResNet-50   & UCIR      & 67.10$_{\pm0.20}$  & 61.42$_{\pm0.61}$ & 51.62$_{\pm0.14}$ \\
            & \ \ with RFR      & 68.57$_{\pm0.12}$  & 64.81$_{\pm0.15}$ & 57.71$_{\pm0.31}$ \\ \cmidrule(l){3-5} 
& Improvement & +1.47              & +3.39             & +6.09             \\ \midrule
ViT-Small   & UCIR      & 50.45$_{\pm0.35}$   & 49.03$_{\pm0.27}$  & 46.24$_{\pm0.79}$ \\
            & \ \ with RFR      & 51.86$_{\pm0.35}$   & 50.50$_{\pm0.47}$  & 47.96$_{\pm1.03}$ \\ \cmidrule(l){3-5} 
& Improvement & +1.41               & +1.47              & +1.72             \\ \bottomrule
\end{tabular}
}
\end{table}

\subsection{Further analysis}
\label{subsec:further_analysis}

\paragraph{Non-exemplar-based approach:}
While the effectiveness of RFR is demonstrated for the most popular CIL methods in Section~\ref{subsec:sota_experiment}, they all belong to exemplar-based approaches. Therefore, we further conduct an analysis of RFR for the non-exemplar-based CIL approach, IL2A~\citep{zhu2021class}. The results presented in Table~\ref{tab:exemplar_free} exhibit significant performance enhancements, indicating an average improvement of 1.16\%.

\paragraph{Larger-scale dataset and models:}
In addition to evaluating RFR's performance under the widely used benchmark settings, we extend our evaluation to a larger-scale dataset. 
Due to the high computational demands of the ImageNet-1000 dataset and our limited resources, we evaluate RFR using only a single method.
As shown in Table~\ref{tab:imagenet1k}, RFR continues to demonstrate its effectiveness for ImageNet-1000 dataset.
Furthermore, we analyze RFR's performance with larger-scale models. The results presented in Table~\ref{tab:resnet50} underscore RFR's efficacy across a range of larger-scale models, including ResNet-50 and ViT-Small~\citep{lee2021vision}.

\subsection{Ablation study}
\label{subsec:ablation_study}

\paragraph{Sensitivity study of regularization coefficient:}
We performed a sensitivity study of the strength hyper-parameter $\alpha$ in Eq.~(\ref{eq:loss}).
The results are presented in Figure~\ref{fig:ablation_coef}, and they demonstrate consistent and smooth inverted U-shaped patterns in performance, with the peak performance observed at the value of $\alpha=0.1$.

\begin{figure*}[ht!]
    \centering
    \begin{subfigure}{0.30\linewidth}
        \includegraphics[width=\linewidth]{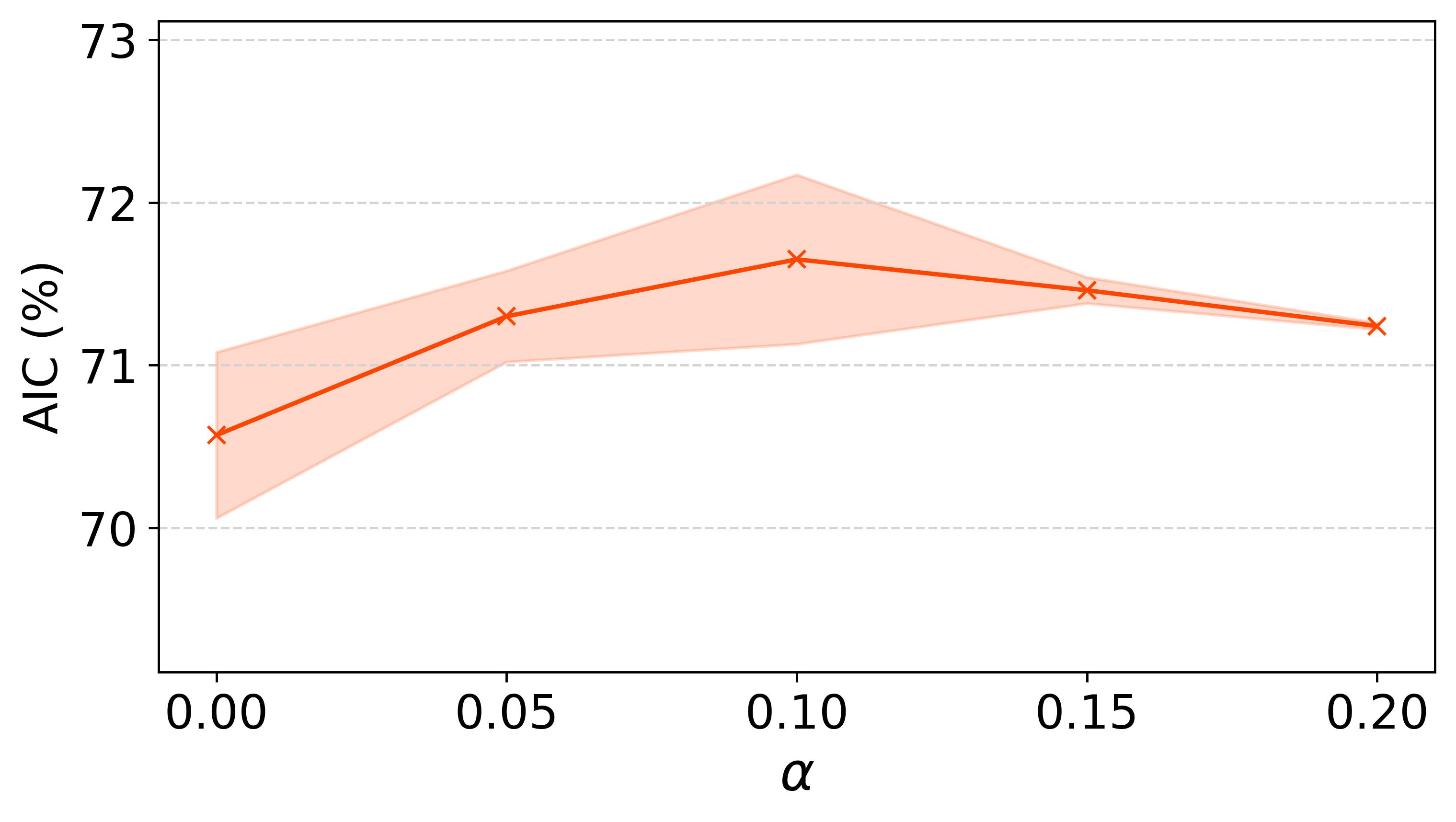}
        \caption{S=10}
    \end{subfigure}
    \begin{subfigure}{0.30\linewidth}
        \includegraphics[width=\linewidth]{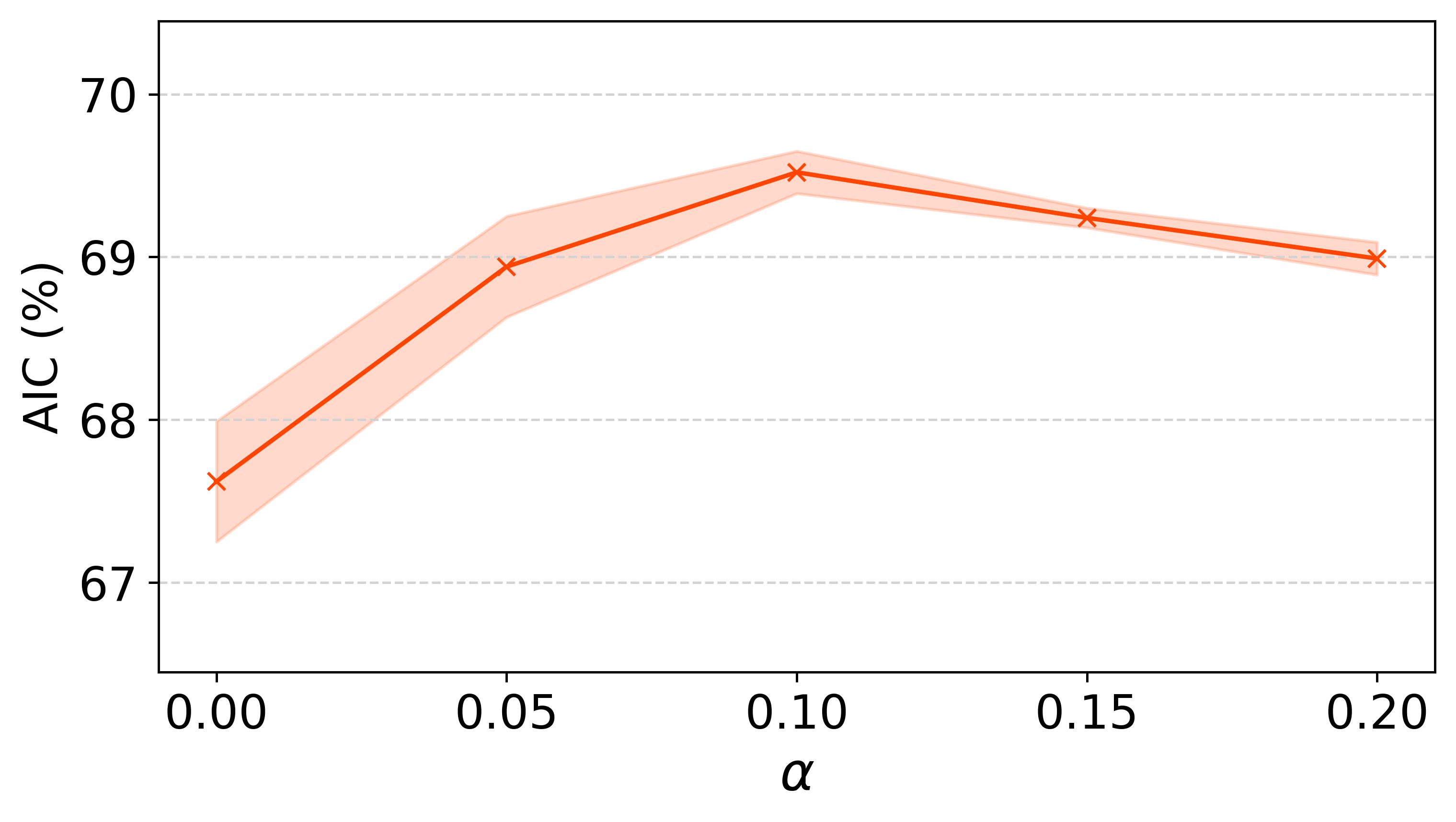}
        \caption{S=5}
    \end{subfigure}
    \begin{subfigure}{0.30\linewidth}
        \includegraphics[width=\linewidth]{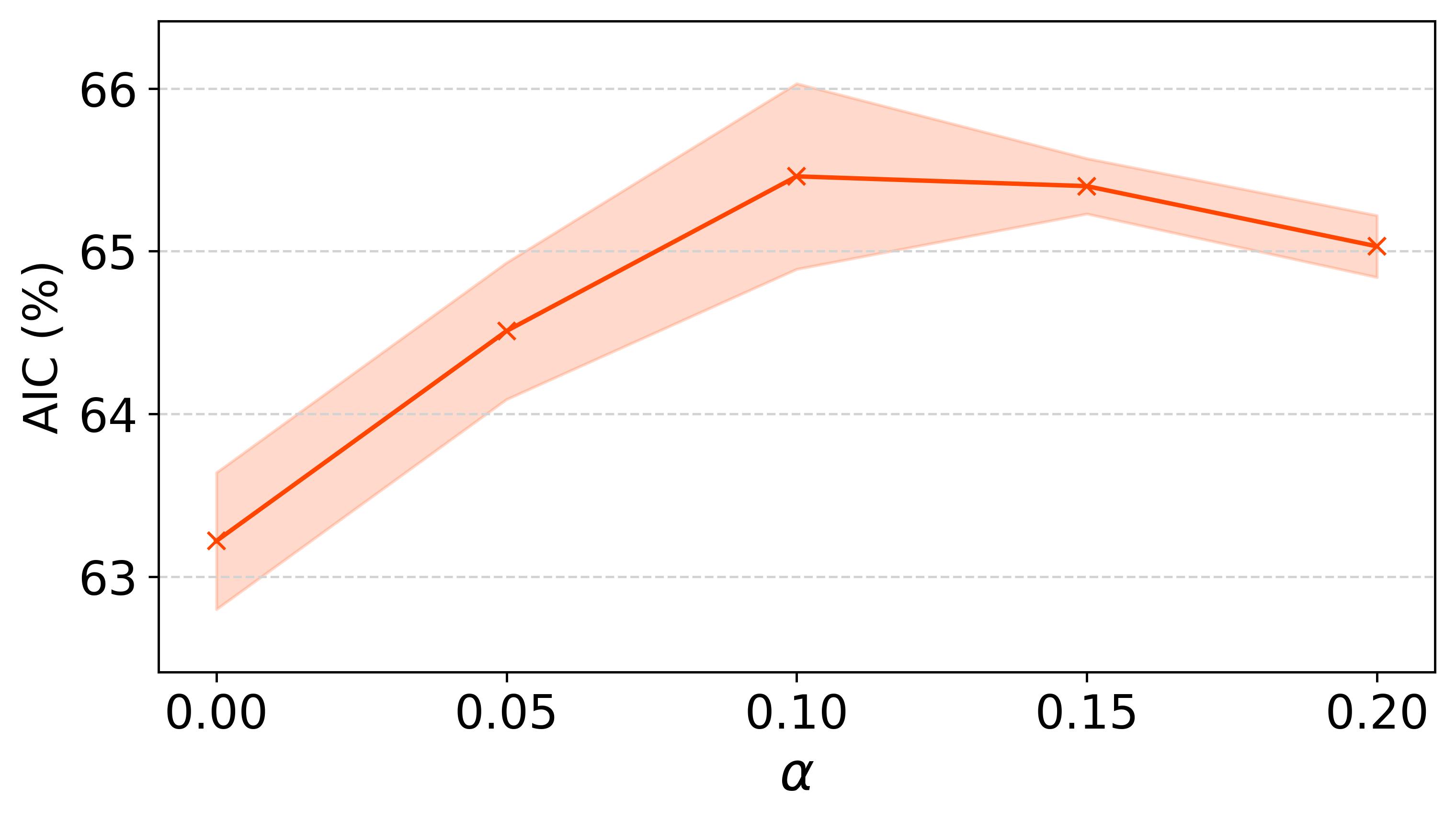}
        \caption{S=2}
    \end{subfigure}
    
    \caption{Impact of regularization coefficient on the performance.
    ResNet-18 models are trained with a range of regularization coefficients for ImageNet-100 dataset, utilizing 50 base classes under different split sizes (a) 10, (b) 5, and (c) 2 for each novel session.
    }
    \label{fig:ablation_coef}
\end{figure*}

\paragraph{Impact of the number of classes in the base task:}
We demonstrate the consistent effectiveness of RFR across various numbers of classes in the base task. The results are presented in Table~\ref{tab:various_base_tasks}. Specifically, RFR's effectiveness is relatively lower when B=10, potentially attributable to the limited features within the 10 classes. However, as the number of classes in the base task increases, the effectiveness of RFR also increases, likely due to the expanded availability of features within the larger number of classes.

\begin{table}[h!]
\caption{\label{tab:various_base_tasks}
Performance improvements by RFR for various number of classes in the base task. All results are trained using CIFAR-100 with the same class orderings as originally proposed in iCaRL~\citep{rebuffi2017icarl}.
}
\centering
\begin{tabular}{@{}cccc@{}}
\toprule
Classes in base tasks      & UCIR        & with RFR         & Improvement \\ \midrule
B=10 & 60.31$_{\pm0.22}$ & 60.41$_{\pm0.34}$ & +0.10        \\
B=20 & 63.16$_{\pm0.34}$ & 65.11$_{\pm0.29}$ & +1.95        \\
B=30 & 65.17$_{\pm0.25}$ & 66.98$_{\pm0.21}$ & +1.81        \\
B=40 & 67.22$_{\pm0.25}$ & 69.53$_{\pm0.12}$ & +2.31        \\
B=50 & 66.30$_{\pm0.36}$ & 69.45$_{\pm0.29}$ & +3.15        \\
B=60 & 70.44$_{\pm0.13}$ & 73.06$_{\pm0.36}$ & +2.62        \\
B=70 & 71.57$_{\pm0.39}$ & 74.20$_{\pm0.20}$ & +2.63        \\
B=80 & 73.07$_{\pm0.18}$ & 75.95$_{\pm0.09}$ & +2.88        \\
B=90 & 73.41$_{\pm0.43}$ & 75.98$_{\pm0.19}$ & +2.57        \\ \bottomrule
\end{tabular}
\end{table}

\paragraph{Time cost analysis:}
We conducted a time cost analysis of Base Session Training using various datasets. As demonstrated in Table~\ref{tab:cost_analysis}, the experimental results indicate a slight increase in the average time; however, considering the standard deviation, this difference appears to be negligible.
This result is likely due to the highly efficient implementation of Singular Value Decomposition (SVD) in the CUBLAS library~\citep{lahabar2009singular}, which is the default linear algebra library used by PyTorch.

\begin{table}[h!]
\caption{\label{tab:cost_analysis}
Time Cost Analysis of Base Session Training with Various Datasets.
The average training time (in seconds) per epoch was measured for each dataset.
Each model was trained for three runs using an NVIDIA RTX 3090 GPU.
}
\centering
\begin{tabular}{@{}lccc@{}}
\toprule
         & CIFAR100   & ImageNet-100 & ImageNet-1000 \\ \cmidrule(l){2-4} 
         & B=50       & B=50         & B=500         \\ \midrule
UCIR     & 20.22$_{\pm0.18}$ & 145.00$_{\pm0.76}$  & 995.53$_{\pm15.10}$  \\
\ \ with RFR & 20.50$_{\pm0.21}$ & 146.48$_{\pm3.20}$  & 996.96$_{\pm15.30}$  \\ \bottomrule
\end{tabular}
\end{table}

\section{Discussion}
\label{discussion}

\subsection{Comparison with CwD}
\label{subsec:cwd_comparison}

Recently, CwD~\citep{shi2022mimicking}, another forward-compatible method, has improved previous state-of-the-art methods in CIL by enforcing class-wise decorrelation. Because both RFR and CwD regularize representation during the base session, a comparative analysis of the two methods was conducted.
Specifically, CwD regularizes the Frobenius norm of representation within the same class, while RFR regularizes the eigenvalues of representation without considering class information. However, it has been demonstrated that regularizing the eigenvalues of representation is more effective than regularizing the Frobenius norm in~\citep{kim2023vne}. The effectiveness of RFR is further confirmed through subsequent evaluation.
The performance evaluation was carried out using the UCIR and PODNet models with the CIFAR-100 dataset. As shown in Table~\ref{tab:comparison_cwd}, RFR consistently demonstrates superior performance, with significant improvements of 1.76\%, 2.27\%, and 3.10\% on average for split sizes of 10, 5, and 2, respectively. These improvements tend to be larger when dealing with smaller split sizes, or equivalently, when handling a larger number of novel sessions.

\begin{table*}[ht!]
\caption{\label{tab:comparison_cwd} Comparison with CwD.
}
\centering
\begin{tabular}{@{}lcccccc@{}}
\toprule
         & \multicolumn{6}{c}{CIFAR100 (B=50)}                                                                                                                                         \\ \cmidrule(l){2-7} 
         & \multicolumn{3}{c}{UCIR}                                                             & \multicolumn{3}{c}{PODNet (CNN)}                                                     \\ \cmidrule(l){2-4} \cmidrule(l){5-7}
         & S=10                & S=5                 & S=2                 & S=10                & S=5                 & S=2                 \\ \midrule
Baseline & 66.30$_{\pm0.36}$          & 60.57$_{\pm0.56}$          & 52.74$_{\pm0.72}$          & 66.11$_{\pm0.63}$          & 63.49$_{\pm0.45}$          & 59.61$_{\pm0.53}$          \\ \midrule
\ \ with CwD      & 67.06$_{\pm0.12}$          & 62.71$_{\pm0.36}$          & 56.39$_{\pm0.30}$          & 66.01$_{\pm0.75}$          & 63.63$_{\pm1.02}$          & 60.14$_{\pm1.16}$          \\
\ \ with RFR      & 69.45$_{\pm0.29}$ & 66.16$_{\pm0.10}$ & 61.23$_{\pm0.13}$ & 67.13$_{\pm0.86}$ & 64.72$_{\pm1.01}$ & 61.49$_{\pm0.71}$ \\ \midrule
Difference & +2.39		& +3.45		& +4.84		& +1.12		& +1.09		& +1.35	
\\ \bottomrule
\end{tabular}
\end{table*}

\subsection{Comparisons to the most recent methods}
\begin{table*}[h!]
\caption{\label{tab:latest_comparison} 
Comparisons to the most recent CIL methods.
}
\centering
\begin{tabular}{@{}lcccc@{}}
\toprule
Method       & \multicolumn{2}{c}{CIFAR100 (B=50)}            & \multicolumn{2}{c}{ImageNet-100 (B=50)}        \\ \cmidrule(l){2-3} \cmidrule(l){4-5} 
             & S=10              & S=5                        & S=10              & S=5                        \\ \midrule
DCMI~\citep{qiu2024dual}         & 67.90             & 66.80                      & 70.50             & 70.00                      \\
FCS~\citep{li2024fcs}          & 62.13             & 60.39                      & N/A               & 61.76                      \\
MRFA~\citep{zhengmulti}         & \textbf{74.73}    & N/A                        & \textbf{79.29}    & N/A                        \\ \midrule
AFC~\citep{kang2022class} with RFR & 70.85$_{\pm0.21}$ & \textbf{67.82$_{\pm0.37}$} & 77.19$_{\pm0.08}$ & \textbf{75.47$_{\pm0.23}$} \\ \bottomrule
\end{tabular}
\end{table*}
While the benchmark methods presented in Section~\ref{subsec:sota_experiment} are widely adopted in CIL due to their popularity and reliably reproducible performance, they do not include the most recent methods in CIL. In this section, we compare the performance of RFR with the most recent methods~\citep{qiu2024dual,li2024fcs,zhengmulti}. Since we could not reproduce the originally reported performance of these methods, we present their reported performance from their respective papers and compare it with our performance (AFC with RFR). The results are shown in Table~\ref{tab:latest_comparison}. Although MRFA~\citep{zhengmulti} demonstrates the best performance when S=10, our method achieves the second-best performance for S=10. For S=5, our method surpasses the other recent methods. These results indicate that the benchmark performance comparisons in Section~\ref{subsec:sota_experiment} remain competitive also for the recent methods.

\subsection{Increasing representation rank during novel sessions}
\label{subsec:discuss_novel_sessions}

RFR increases effective rank during the base session only. 
It is also possible to increase effective rank during novel sessions with the goal of acquiring additional features that may be useful in the subsequent novel sessions.
Such a strategy, however, can also intensify catastrophic forgetting during novel sessions. When additional regularization, such as increasing the effective rank, is applied during the training of a novel task, it results in more significant changes to the model compared to when only the novel task is learned. This occurs because the model must simultaneously learn both the novel task and the regularization task. Consequently, the regularization intended to enhance the model's forward compatibility with subsequent tasks introduces a trade-off, increasing catastrophic forgetting of previous tasks.
To investigate the overall effect, we have conducted an experiment and the results are shown in Table~\ref{tab:comparison_novel_reg}.
While RFR demonstrates effectiveness in enhancing feature richness during the base session, its influence on overall performance is marginal when also applied to novel sessions. Therefore, we have chosen to apply effective rank regularization only during the base session.

\begin{table*}[ht!]
\caption{\label{tab:comparison_novel_reg} Influence of increasing representation rank during novel sessions.
}
\centering
\resizebox{\linewidth}{!}{
\begin{tabular}{@{}lccccccccc@{}}
\toprule
           & \multicolumn{9}{c}{CIFAR100 (B=50)}                                                                                                           \\ \cmidrule(l){2-10} 
           & \multicolumn{3}{c}{S=10}               & \multicolumn{3}{c}{S=5}                & \multicolumn{3}{c}{S=2}                \\ \cmidrule(l){2-4} \cmidrule(l){5-7} \cmidrule(l){8-10}
           & Base Only   & Base+Novel       & Diff. & Base Only   & Base+Novel       & Diff. & Base Only   & Base+Novel       & Diff. \\ \midrule
BiC        & 58.06$_{\pm0.73}$ & 58.29$_{\pm0.38}$ & +0.23  & 48.02$_{\pm2.33}$ & 46.60$_{\pm1.91}$ & -1.41 & 27.74$_{\pm3.11}$ & 27.64$_{\pm1.83}$ & -0.10 \\
EEIL       & 39.52$_{\pm0.38}$ & 39.44$_{\pm0.63}$ & -0.08 & 25.79$_{\pm1.18}$ & 25.62$_{\pm1.10}$ & -0.17 & 33.46$_{\pm0.69}$ & 33.16$_{\pm1.07}$ & -0.29 \\
iCaRL      & 50.56$_{\pm1.47}$ & 50.50$_{\pm2.04}$ & -0.06 & 48.48$_{\pm1.17}$ & 48.23$_{\pm1.14}$ & -0.25 & 44.99$_{\pm0.97}$ & 44.81$_{\pm1.18}$ & -0.18 \\
IL2M       & 42.54$_{\pm0.49}$ & 42.76$_{\pm0.83}$ & +0.21  & 25.19$_{\pm0.41}$ & 26.22$_{\pm1.77}$ & +1.03  & 35.20$_{\pm0.78}$ & 34.87$_{\pm0.67}$ & -0.32 \\
LwF        & 40.98$_{\pm0.34}$ & 40.69$_{\pm0.81}$ & -0.28 & 28.82$_{\pm4.17}$ & 26.70$_{\pm0.39}$ & -2.12 & 34.46$_{\pm0.63}$ & 34.63$_{\pm0.75}$ & +0.17  \\
MAS        & 39.54$_{\pm0.71}$ & 39.77$_{\pm0.66}$ & +0.23  & 28.80$_{\pm4.52}$ & 29.03$_{\pm3.50}$ & +0.23  & 33.36$_{\pm0.52}$ & 33.91$_{\pm0.91}$ & +0.55  \\
SI         & 39.41$_{\pm0.60}$ & 39.51$_{\pm0.58}$ & +0.10  & 24.02$_{\pm0.38}$ & 23.89$_{\pm0.19}$ & -0.13 & 33.48$_{\pm0.80}$ & 33.60$_{\pm0.52}$ & +0.12  \\
RWalk      & 39.00$_{\pm1.15}$ & 38.91$_{\pm1.39}$ & -0.09 & 24.44$_{\pm1.06}$ & 25.96$_{\pm3.06}$ & +1.53  & 32.85$_{\pm1.40}$ & 33.04$_{\pm1.20}$ & +0.19  \\
UCIR       & 69.45$_{\pm0.29}$ & 69.57$_{\pm0.23}$ & +0.12  & 66.16$_{\pm0.10}$ & 66.55$_{\pm0.16}$ & +0.39  & 61.23$_{\pm0.13}$ & 61.27$_{\pm0.29}$ & +0.04  \\ \midrule
Average difference &                   &                   & +0.04  &                   &                   & -0.10 &                   &                   & +0.02  \\ \bottomrule
\end{tabular}
}
\end{table*}

\subsection{Practical application perspective}
In this study, we demonstrated that RFR is effective across models of various sizes and architectures, including both convolution-based architectures (e.g., ResNet-18 and ResNet-50) and transformer-based architectures (e.g., ViT-Small). Given that RFR introduces minimal additional training time and is effective in single-network methods, which are relatively resource-efficient~\citep{zhou2024class}, it suggests that RFR may offer greater advantages in environments with limited training resources, such as edge devices.

\subsection{Limitations}
The effectiveness of RFR depends on the accurate estimation of the effective rank of the representation. Since the effective rank is mathematically upper-bounded by the batch size, RFR regularization tends to be less efficacious with smaller batch sizes. Empirical evidence indicates that batch sizes greater than 32 are sufficient to achieve consistent and reliable performance.

We demonstrate the efficacy of RFR in enhancing forward compatibility by applying it to CIL methods that use a single backbone. The limited effectiveness of RFR when applied to dynamic network methods with increasing backbones underscores a limitation of this approach. Additionally, due to the high computational demands of the ImageNet-1000 dataset and limited resources, we evaluated RFR using only a single method.

We have followed previous works and presented the main results in Table~\ref{tab:sota_experiment1} and Table~\ref{tab:sota_experiment2}. Additionally, Table~\ref{tab:exemplar_free} focuses on non-exemplar based CIL, Table~\ref{tab:imagenet1k} addresses large-scale dataset, and Table~\ref{tab:resnet50} covers large-scale models. To ensure the generalizability of the proposed method, it would also be beneficial to consider real-world scenarios across a variety of application domains.

\section{Conclusion}

In this study, we propose effective-Rank based Feature Richness~(RFR) enhancement method that can be integrated with a wide range of existing methods in CIL.
More specifically, our method increases the effective rank of representations during the base session. In order to substantiate the effectiveness of our method, we have established a theoretical connection between the effective rank and the Shannon entropy of representations. Through empirical analysis, we have demonstrated the efficacy of our method in several dimensions including enhancement in forward compatibility, mitigation of catastrophic forgetting, and improvement in performance.
In summary, our method effectively enhances both forward and backward compatibility simultaneously. The superiority of this method is supported by both theoretical and empirical evidences.

\section*{Declaration of competing interest}
\label{sec:interest}
The authors declare that they have no known competing financial interests or personal relationships that could have influenced the work reported in this paper.
\section*{Acknowledgement}
\label{sec:acknowledge}

This work was supported by a National Research Foundation of Korea (NRF) grant funded by the Korea government (MSIT) (No. NRF-2020R1A2C2007139) and in part by Institute of Information \& Communications Technology Planning \& Evaluation (IITP) grant funded by the Korea government(MSIT) [NO.2021-0-01343, Artificial Intelligence Graduate School Program (Seoul National University)] and Basic Science Research Program through the National Research Foundation of Korea(NRF) funded by the Ministry of Education(NRF-2022R1A6A1A03063039).











\bibliographystyle{cas-model2-names}

\bibliography{cas-refs}

\begin{thebibliography}{45}
\expandafter\ifx\csname natexlab\endcsname\relax\def\natexlab#1{#1}\fi
\providecommand{\url}[1]{\texttt{#1}}
\providecommand{\href}[2]{#2}
\providecommand{\path}[1]{#1}
\providecommand{\DOIprefix}{doi:}
\providecommand{\ArXivprefix}{arXiv:}
\providecommand{\URLprefix}{URL: }
\providecommand{\Pubmedprefix}{pmid:}
\providecommand{\doi}[1]{\href{http://dx.doi.org/#1}{\path{#1}}}
\providecommand{\Pubmed}[1]{\href{pmid:#1}{\path{#1}}}
\providecommand{\bibinfo}[2]{#2}
\ifx\xfnm\relax \def\xfnm[#1]{\unskip,\space#1}\fi
\bibitem[{Aljundi et~al.(2018)Aljundi, Babiloni, Elhoseiny, Rohrbach and Tuytelaars}]{aljundi2018memory}
\bibinfo{author}{Aljundi, R.}, \bibinfo{author}{Babiloni, F.}, \bibinfo{author}{Elhoseiny, M.}, \bibinfo{author}{Rohrbach, M.}, \bibinfo{author}{Tuytelaars, T.}, \bibinfo{year}{2018}.
\newblock \bibinfo{title}{Memory aware synapses: Learning what (not) to forget}, in: \bibinfo{booktitle}{Proceedings of the European conference on computer vision (ECCV)}, pp. \bibinfo{pages}{139--154}.
\bibitem[{Belouadah and Popescu(2019)}]{belouadah2019il2m}
\bibinfo{author}{Belouadah, E.}, \bibinfo{author}{Popescu, A.}, \bibinfo{year}{2019}.
\newblock \bibinfo{title}{Il2m: Class incremental learning with dual memory}, in: \bibinfo{booktitle}{Proceedings of the IEEE/CVF international conference on computer vision}, pp. \bibinfo{pages}{583--592}.
\bibitem[{Castro et~al.(2018)Castro, Mar{\'\i}n-Jim{\'e}nez, Guil, Schmid and Alahari}]{castro2018end}
\bibinfo{author}{Castro, F.M.}, \bibinfo{author}{Mar{\'\i}n-Jim{\'e}nez, M.J.}, \bibinfo{author}{Guil, N.}, \bibinfo{author}{Schmid, C.}, \bibinfo{author}{Alahari, K.}, \bibinfo{year}{2018}.
\newblock \bibinfo{title}{End-to-end incremental learning}, in: \bibinfo{booktitle}{Proceedings of the European conference on computer vision (ECCV)}, pp. \bibinfo{pages}{233--248}.
\bibitem[{Chaudhry et~al.(2018)Chaudhry, Dokania, Ajanthan and Torr}]{chaudhry2018riemannian}
\bibinfo{author}{Chaudhry, A.}, \bibinfo{author}{Dokania, P.K.}, \bibinfo{author}{Ajanthan, T.}, \bibinfo{author}{Torr, P.H.}, \bibinfo{year}{2018}.
\newblock \bibinfo{title}{Riemannian walk for incremental learning: Understanding forgetting and intransigence}, in: \bibinfo{booktitle}{Proceedings of the European conference on computer vision (ECCV)}, pp. \bibinfo{pages}{532--547}.
\bibitem[{Chen et~al.(2020)Chen, Kornblith, Norouzi and Hinton}]{chen2020simple}
\bibinfo{author}{Chen, T.}, \bibinfo{author}{Kornblith, S.}, \bibinfo{author}{Norouzi, M.}, \bibinfo{author}{Hinton, G.}, \bibinfo{year}{2020}.
\newblock \bibinfo{title}{A simple framework for contrastive learning of visual representations}, in: \bibinfo{booktitle}{International conference on machine learning}, \bibinfo{organization}{PMLR}. pp. \bibinfo{pages}{1597--1607}.
\bibitem[{Choi et~al.(2017)Choi, Taylor and Tibshirani}]{choi2017selecting}
\bibinfo{author}{Choi, Y.}, \bibinfo{author}{Taylor, J.}, \bibinfo{author}{Tibshirani, R.}, \bibinfo{year}{2017}.
\newblock \bibinfo{title}{Selecting the number of principal components: Estimation of the true rank of a noisy matrix}.
\newblock \bibinfo{journal}{The Annals of Statistics} , \bibinfo{pages}{2590--2617}.
\bibitem[{Cover(1999)}]{cover1999elements}
\bibinfo{author}{Cover, T.M.}, \bibinfo{year}{1999}.
\newblock \bibinfo{title}{Elements of information theory}.
\newblock \bibinfo{publisher}{John Wiley \& Sons}.
\bibitem[{Douillard et~al.(2020)Douillard, Cord, Ollion, Robert and Valle}]{douillard2020podnet}
\bibinfo{author}{Douillard, A.}, \bibinfo{author}{Cord, M.}, \bibinfo{author}{Ollion, C.}, \bibinfo{author}{Robert, T.}, \bibinfo{author}{Valle, E.}, \bibinfo{year}{2020}.
\newblock \bibinfo{title}{Podnet: Pooled outputs distillation for small-tasks incremental learning}, in: \bibinfo{booktitle}{Computer Vision--ECCV 2020: 16th European Conference, Glasgow, UK, August 23--28, 2020, Proceedings, Part XX 16}, \bibinfo{organization}{Springer}. pp. \bibinfo{pages}{86--102}.
\bibitem[{Douillard et~al.(2022)Douillard, Ram{\'e}, Couairon and Cord}]{douillard2022dytox}
\bibinfo{author}{Douillard, A.}, \bibinfo{author}{Ram{\'e}, A.}, \bibinfo{author}{Couairon, G.}, \bibinfo{author}{Cord, M.}, \bibinfo{year}{2022}.
\newblock \bibinfo{title}{Dytox: Transformers for continual learning with dynamic token expansion}, in: \bibinfo{booktitle}{Proceedings of the IEEE/CVF Conference on Computer Vision and Pattern Recognition}, pp. \bibinfo{pages}{9285--9295}.
\bibitem[{Garrido et~al.(2023)Garrido, Balestriero, Najman and Lecun}]{garrido2023rankme}
\bibinfo{author}{Garrido, Q.}, \bibinfo{author}{Balestriero, R.}, \bibinfo{author}{Najman, L.}, \bibinfo{author}{Lecun, Y.}, \bibinfo{year}{2023}.
\newblock \bibinfo{title}{Rankme: Assessing the downstream performance of pretrained self-supervised representations by their rank}, in: \bibinfo{booktitle}{International Conference on Machine Learning}, \bibinfo{organization}{PMLR}. pp. \bibinfo{pages}{10929--10974}.
\bibitem[{He et~al.(2016)He, Zhang, Ren and Sun}]{he2016deep}
\bibinfo{author}{He, K.}, \bibinfo{author}{Zhang, X.}, \bibinfo{author}{Ren, S.}, \bibinfo{author}{Sun, J.}, \bibinfo{year}{2016}.
\newblock \bibinfo{title}{Deep residual learning for image recognition}, in: \bibinfo{booktitle}{Proceedings of the IEEE conference on computer vision and pattern recognition}, pp. \bibinfo{pages}{770--778}.
\bibitem[{Hinton et~al.(2015)Hinton, Vinyals and Dean}]{hinton2015distilling}
\bibinfo{author}{Hinton, G.}, \bibinfo{author}{Vinyals, O.}, \bibinfo{author}{Dean, J.}, \bibinfo{year}{2015}.
\newblock \bibinfo{title}{Distilling the knowledge in a neural network}.
\newblock \bibinfo{journal}{arXiv preprint arXiv:1503.02531} .
\bibitem[{Hou et~al.(2019)Hou, Pan, Loy, Wang and Lin}]{hou2019learning}
\bibinfo{author}{Hou, S.}, \bibinfo{author}{Pan, X.}, \bibinfo{author}{Loy, C.C.}, \bibinfo{author}{Wang, Z.}, \bibinfo{author}{Lin, D.}, \bibinfo{year}{2019}.
\newblock \bibinfo{title}{Learning a unified classifier incrementally via rebalancing}, in: \bibinfo{booktitle}{Proceedings of the IEEE/CVF conference on Computer Vision and Pattern Recognition}, pp. \bibinfo{pages}{831--839}.
\bibitem[{Kang et~al.(2022)Kang, Park and Han}]{kang2022class}
\bibinfo{author}{Kang, M.}, \bibinfo{author}{Park, J.}, \bibinfo{author}{Han, B.}, \bibinfo{year}{2022}.
\newblock \bibinfo{title}{Class-incremental learning by knowledge distillation with adaptive feature consolidation}, in: \bibinfo{booktitle}{Proceedings of the IEEE/CVF conference on computer vision and pattern recognition}, pp. \bibinfo{pages}{16071--16080}.
\bibitem[{Kim et~al.(2023)Kim, Kang, Hwang, Shin and Rhee}]{kim2023vne}
\bibinfo{author}{Kim, J.}, \bibinfo{author}{Kang, S.}, \bibinfo{author}{Hwang, D.}, \bibinfo{author}{Shin, J.}, \bibinfo{author}{Rhee, W.}, \bibinfo{year}{2023}.
\newblock \bibinfo{title}{Vne: An effective method for improving deep representation by manipulating eigenvalue distribution}, in: \bibinfo{booktitle}{Proceedings of the IEEE/CVF Conference on Computer Vision and Pattern Recognition}, pp. \bibinfo{pages}{3799--3810}.
\bibitem[{Kingma and Welling(2013)}]{kingma2013auto}
\bibinfo{author}{Kingma, D.P.}, \bibinfo{author}{Welling, M.}, \bibinfo{year}{2013}.
\newblock \bibinfo{title}{Auto-encoding variational bayes}.
\newblock \bibinfo{journal}{arXiv preprint arXiv:1312.6114} .
\bibitem[{Kirkpatrick et~al.(2017)Kirkpatrick, Pascanu, Rabinowitz, Veness, Desjardins, Rusu, Milan, Quan, Ramalho, Grabska-Barwinska et~al.}]{kirkpatrick2017overcoming}
\bibinfo{author}{Kirkpatrick, J.}, \bibinfo{author}{Pascanu, R.}, \bibinfo{author}{Rabinowitz, N.}, \bibinfo{author}{Veness, J.}, \bibinfo{author}{Desjardins, G.}, \bibinfo{author}{Rusu, A.A.}, \bibinfo{author}{Milan, K.}, \bibinfo{author}{Quan, J.}, \bibinfo{author}{Ramalho, T.}, \bibinfo{author}{Grabska-Barwinska, A.}, et~al., \bibinfo{year}{2017}.
\newblock \bibinfo{title}{Overcoming catastrophic forgetting in neural networks}.
\newblock \bibinfo{journal}{Proceedings of the national academy of sciences} \bibinfo{volume}{114}, \bibinfo{pages}{3521--3526}.
\bibitem[{Lahabar and Narayanan(2009)}]{lahabar2009singular}
\bibinfo{author}{Lahabar, S.}, \bibinfo{author}{Narayanan, P.}, \bibinfo{year}{2009}.
\newblock \bibinfo{title}{Singular value decomposition on gpu using cuda}, in: \bibinfo{booktitle}{2009 IEEE international symposium on parallel \& distributed processing}, \bibinfo{organization}{IEEE}. pp. \bibinfo{pages}{1--10}.
\bibitem[{Lee et~al.(2017)Lee, Bahri, Novak, Schoenholz, Pennington and Sohl-Dickstein}]{lee2017deep}
\bibinfo{author}{Lee, J.}, \bibinfo{author}{Bahri, Y.}, \bibinfo{author}{Novak, R.}, \bibinfo{author}{Schoenholz, S.S.}, \bibinfo{author}{Pennington, J.}, \bibinfo{author}{Sohl-Dickstein, J.}, \bibinfo{year}{2017}.
\newblock \bibinfo{title}{Deep neural networks as gaussian processes}.
\newblock \bibinfo{journal}{arXiv preprint arXiv:1711.00165} .
\bibitem[{Lee et~al.(2021)Lee, Lee and Song}]{lee2021vision}
\bibinfo{author}{Lee, S.H.}, \bibinfo{author}{Lee, S.}, \bibinfo{author}{Song, B.C.}, \bibinfo{year}{2021}.
\newblock \bibinfo{title}{Vision transformer for small-size datasets}.
\newblock \bibinfo{journal}{arXiv preprint arXiv:2112.13492} .
\bibitem[{Li et~al.(2024)Li, Peng and Zhou}]{li2024fcs}
\bibinfo{author}{Li, Q.}, \bibinfo{author}{Peng, Y.}, \bibinfo{author}{Zhou, J.}, \bibinfo{year}{2024}.
\newblock \bibinfo{title}{Fcs: Feature calibration and separation for non-exemplar class incremental learning}, in: \bibinfo{booktitle}{Proceedings of the IEEE/CVF Conference on Computer Vision and Pattern Recognition}, pp. \bibinfo{pages}{28495--28504}.
\bibitem[{Li and Hoiem(2017)}]{li2017learning}
\bibinfo{author}{Li, Z.}, \bibinfo{author}{Hoiem, D.}, \bibinfo{year}{2017}.
\newblock \bibinfo{title}{Learning without forgetting}.
\newblock \bibinfo{journal}{IEEE transactions on pattern analysis and machine intelligence} \bibinfo{volume}{40}, \bibinfo{pages}{2935--2947}.
\bibitem[{Liu et~al.(2020)Liu, Su, Liu, Schiele and Sun}]{liu2020mnemonics}
\bibinfo{author}{Liu, Y.}, \bibinfo{author}{Su, Y.}, \bibinfo{author}{Liu, A.A.}, \bibinfo{author}{Schiele, B.}, \bibinfo{author}{Sun, Q.}, \bibinfo{year}{2020}.
\newblock \bibinfo{title}{Mnemonics training: Multi-class incremental learning without forgetting}, in: \bibinfo{booktitle}{Proceedings of the IEEE/CVF conference on Computer Vision and Pattern Recognition}, pp. \bibinfo{pages}{12245--12254}.
\bibitem[{Masana et~al.(2022)Masana, Liu, Twardowski, Menta, Bagdanov and Van De~Weijer}]{masana2022class}
\bibinfo{author}{Masana, M.}, \bibinfo{author}{Liu, X.}, \bibinfo{author}{Twardowski, B.}, \bibinfo{author}{Menta, M.}, \bibinfo{author}{Bagdanov, A.D.}, \bibinfo{author}{Van De~Weijer, J.}, \bibinfo{year}{2022}.
\newblock \bibinfo{title}{Class-incremental learning: survey and performance evaluation on image classification}.
\newblock \bibinfo{journal}{IEEE Transactions on Pattern Analysis and Machine Intelligence} \bibinfo{volume}{45}, \bibinfo{pages}{5513--5533}.
\bibitem[{Mittal et~al.(2021)Mittal, Galesso and Brox}]{mittal2021essentials}
\bibinfo{author}{Mittal, S.}, \bibinfo{author}{Galesso, S.}, \bibinfo{author}{Brox, T.}, \bibinfo{year}{2021}.
\newblock \bibinfo{title}{Essentials for class incremental learning}, in: \bibinfo{booktitle}{Proceedings of the IEEE/CVF Conference on Computer Vision and Pattern Recognition}, pp. \bibinfo{pages}{3513--3522}.
\bibitem[{Neal(2012)}]{neal2012bayesian}
\bibinfo{author}{Neal, R.M.}, \bibinfo{year}{2012}.
\newblock \bibinfo{title}{Bayesian learning for neural networks}. volume \bibinfo{volume}{118}.
\newblock \bibinfo{publisher}{Springer Science \& Business Media}.
\bibitem[{Nielsen and Chuang(2002)}]{nielsen2002quantum}
\bibinfo{author}{Nielsen, M.A.}, \bibinfo{author}{Chuang, I.}, \bibinfo{year}{2002}.
\newblock \bibinfo{title}{Quantum computation and quantum information}.
\newblock \bibinfo{publisher}{American Association of Physics Teachers}.
\bibitem[{Qiu et~al.(2024)Qiu, Xu, Meng, Li, Xu and Wu}]{qiu2024dual}
\bibinfo{author}{Qiu, Z.}, \bibinfo{author}{Xu, Y.}, \bibinfo{author}{Meng, F.}, \bibinfo{author}{Li, H.}, \bibinfo{author}{Xu, L.}, \bibinfo{author}{Wu, Q.}, \bibinfo{year}{2024}.
\newblock \bibinfo{title}{Dual-consistency model inversion for non-exemplar class incremental learning}, in: \bibinfo{booktitle}{Proceedings of the IEEE/CVF Conference on Computer Vision and Pattern Recognition}, pp. \bibinfo{pages}{24025--24035}.
\bibitem[{Rebuffi et~al.(2017)Rebuffi, Kolesnikov, Sperl and Lampert}]{rebuffi2017icarl}
\bibinfo{author}{Rebuffi, S.A.}, \bibinfo{author}{Kolesnikov, A.}, \bibinfo{author}{Sperl, G.}, \bibinfo{author}{Lampert, C.H.}, \bibinfo{year}{2017}.
\newblock \bibinfo{title}{icarl: Incremental classifier and representation learning}, in: \bibinfo{booktitle}{Proceedings of the IEEE conference on Computer Vision and Pattern Recognition}, pp. \bibinfo{pages}{2001--2010}.
\bibitem[{Roy and Vetterli(2007)}]{roy2007effective}
\bibinfo{author}{Roy, O.}, \bibinfo{author}{Vetterli, M.}, \bibinfo{year}{2007}.
\newblock \bibinfo{title}{The effective rank: A measure of effective dimensionality}, in: \bibinfo{booktitle}{2007 15th European signal processing conference}, \bibinfo{organization}{IEEE}. pp. \bibinfo{pages}{606--610}.
\bibitem[{Shi et~al.(2022)Shi, Zhou, Liang, Jiang, Feng, Torr, Bai and Tan}]{shi2022mimicking}
\bibinfo{author}{Shi, Y.}, \bibinfo{author}{Zhou, K.}, \bibinfo{author}{Liang, J.}, \bibinfo{author}{Jiang, Z.}, \bibinfo{author}{Feng, J.}, \bibinfo{author}{Torr, P.H.}, \bibinfo{author}{Bai, S.}, \bibinfo{author}{Tan, V.Y.}, \bibinfo{year}{2022}.
\newblock \bibinfo{title}{Mimicking the oracle: an initial phase decorrelation approach for class incremental learning}, in: \bibinfo{booktitle}{Proceedings of the IEEE/CVF Conference on Computer Vision and Pattern Recognition}, pp. \bibinfo{pages}{16722--16731}.
\bibitem[{Wang et~al.(2022a)Wang, Zhou, Liu, Ye, Bian, Zhan and Zhao}]{wang2022beef}
\bibinfo{author}{Wang, F.Y.}, \bibinfo{author}{Zhou, D.W.}, \bibinfo{author}{Liu, L.}, \bibinfo{author}{Ye, H.J.}, \bibinfo{author}{Bian, Y.}, \bibinfo{author}{Zhan, D.C.}, \bibinfo{author}{Zhao, P.}, \bibinfo{year}{2022}a.
\newblock \bibinfo{title}{Beef: Bi-compatible class-incremental learning via energy-based expansion and fusion}, in: \bibinfo{booktitle}{The eleventh international conference on learning representations}.
\bibitem[{Wang et~al.(2022b)Wang, Zhou, Ye and Zhan}]{wang2022foster}
\bibinfo{author}{Wang, F.Y.}, \bibinfo{author}{Zhou, D.W.}, \bibinfo{author}{Ye, H.J.}, \bibinfo{author}{Zhan, D.C.}, \bibinfo{year}{2022}b.
\newblock \bibinfo{title}{Foster: Feature boosting and compression for class-incremental learning}, in: \bibinfo{booktitle}{European conference on computer vision}, \bibinfo{organization}{Springer}. pp. \bibinfo{pages}{398--414}.
\bibitem[{Wilde(2013)}]{wilde2013quantum}
\bibinfo{author}{Wilde, M.M.}, \bibinfo{year}{2013}.
\newblock \bibinfo{title}{Quantum information theory}.
\newblock \bibinfo{publisher}{Cambridge University Press}.
\bibitem[{Williams(1997)}]{williams1997computing}
\bibinfo{author}{Williams, C.K.}, \bibinfo{year}{1997}.
\newblock \bibinfo{title}{Computing with infinite networks}.
\newblock \bibinfo{journal}{Advances in neural information processing systems} , \bibinfo{pages}{295--301}.
\bibitem[{Wu et~al.(2019)Wu, Chen, Wang, Ye, Liu, Guo and Fu}]{wu2019large}
\bibinfo{author}{Wu, Y.}, \bibinfo{author}{Chen, Y.}, \bibinfo{author}{Wang, L.}, \bibinfo{author}{Ye, Y.}, \bibinfo{author}{Liu, Z.}, \bibinfo{author}{Guo, Y.}, \bibinfo{author}{Fu, Y.}, \bibinfo{year}{2019}.
\newblock \bibinfo{title}{Large scale incremental learning}, in: \bibinfo{booktitle}{Proceedings of the IEEE/CVF Conference on Computer Vision and Pattern Recognition}, pp. \bibinfo{pages}{374--382}.
\bibitem[{Yan et~al.(2021)Yan, Xie and He}]{yan2021dynamically}
\bibinfo{author}{Yan, S.}, \bibinfo{author}{Xie, J.}, \bibinfo{author}{He, X.}, \bibinfo{year}{2021}.
\newblock \bibinfo{title}{Der: Dynamically expandable representation for class incremental learning}, in: \bibinfo{booktitle}{Proceedings of the IEEE/CVF conference on computer vision and pattern recognition}, pp. \bibinfo{pages}{3014--3023}.
\bibitem[{Yang(2019)}]{yang2019wide}
\bibinfo{author}{Yang, G.}, \bibinfo{year}{2019}.
\newblock \bibinfo{title}{Wide feedforward or recurrent neural networks of any architecture are gaussian processes}.
\newblock \bibinfo{journal}{Advances in Neural Information Processing Systems} \bibinfo{volume}{32}.
\bibitem[{Yang et~al.(2021)Yang, Liu and Xu}]{yang2021free}
\bibinfo{author}{Yang, S.}, \bibinfo{author}{Liu, L.}, \bibinfo{author}{Xu, M.}, \bibinfo{year}{2021}.
\newblock \bibinfo{title}{Free lunch for few-shot learning: Distribution calibration}.
\newblock \bibinfo{journal}{arXiv preprint arXiv:2101.06395} .
\bibitem[{Zenke et~al.(2017)Zenke, Poole and Ganguli}]{zenke2017continual}
\bibinfo{author}{Zenke, F.}, \bibinfo{author}{Poole, B.}, \bibinfo{author}{Ganguli, S.}, \bibinfo{year}{2017}.
\newblock \bibinfo{title}{Continual learning through synaptic intelligence}, in: \bibinfo{booktitle}{International conference on machine learning}, \bibinfo{organization}{PMLR}. pp. \bibinfo{pages}{3987--3995}.
\bibitem[{Zhao et~al.(2020)Zhao, Xiao, Gan, Zhang and Xia}]{zhao2020maintaining}
\bibinfo{author}{Zhao, B.}, \bibinfo{author}{Xiao, X.}, \bibinfo{author}{Gan, G.}, \bibinfo{author}{Zhang, B.}, \bibinfo{author}{Xia, S.T.}, \bibinfo{year}{2020}.
\newblock \bibinfo{title}{Maintaining discrimination and fairness in class incremental learning}, in: \bibinfo{booktitle}{Proceedings of the IEEE/CVF conference on computer vision and pattern recognition}, pp. \bibinfo{pages}{13208--13217}.
\bibitem[{Zheng et~al.(2024)Zheng, Zhou, Ye and Zhan}]{zhengmulti}
\bibinfo{author}{Zheng, B.}, \bibinfo{author}{Zhou, D.W.}, \bibinfo{author}{Ye, H.J.}, \bibinfo{author}{Zhan, D.C.}, \bibinfo{year}{2024}.
\newblock \bibinfo{title}{Multi-layer rehearsal feature augmentation for class-incremental learning}, in: \bibinfo{booktitle}{Forty-first International Conference on Machine Learning}.
\bibitem[{Zhou et~al.(2022)Zhou, Wang, Ye, Ma, Pu and Zhan}]{zhou2022forward}
\bibinfo{author}{Zhou, D.W.}, \bibinfo{author}{Wang, F.Y.}, \bibinfo{author}{Ye, H.J.}, \bibinfo{author}{Ma, L.}, \bibinfo{author}{Pu, S.}, \bibinfo{author}{Zhan, D.C.}, \bibinfo{year}{2022}.
\newblock \bibinfo{title}{Forward compatible few-shot class-incremental learning}, in: \bibinfo{booktitle}{Proceedings of the IEEE/CVF conference on computer vision and pattern recognition}, pp. \bibinfo{pages}{9046--9056}.
\bibitem[{Zhou et~al.(2024)Zhou, Wang, Qi, Ye, Zhan and Liu}]{zhou2024class}
\bibinfo{author}{Zhou, D.W.}, \bibinfo{author}{Wang, Q.W.}, \bibinfo{author}{Qi, Z.H.}, \bibinfo{author}{Ye, H.J.}, \bibinfo{author}{Zhan, D.C.}, \bibinfo{author}{Liu, Z.}, \bibinfo{year}{2024}.
\newblock \bibinfo{title}{Class-incremental learning: A survey}.
\newblock \bibinfo{journal}{IEEE Transactions on Pattern Analysis and Machine Intelligence} .
\bibitem[{Zhu et~al.(2021)Zhu, Cheng, Zhang and Liu}]{zhu2021class}
\bibinfo{author}{Zhu, F.}, \bibinfo{author}{Cheng, Z.}, \bibinfo{author}{Zhang, X.y.}, \bibinfo{author}{Liu, C.l.}, \bibinfo{year}{2021}.
\newblock \bibinfo{title}{Class-incremental learning via dual augmentation}.
\newblock \bibinfo{journal}{Advances in Neural Information Processing Systems} \bibinfo{volume}{34}, \bibinfo{pages}{14306--14318}.

\end{thebibliography}



\end{document}